\newcommand{\V}[1]{\ensuremath{\boldsymbol{#1}}\xspace}
\newtheorem{theorem}{Theorem}[section]
\newtheorem{lemma}[theorem]{Lemma}
\theoremstyle{remark}
\begin{document}

\begin{frontmatter}
\title{Optimization via Low-rank Approximation for Community Detection in Networks}
\runtitle{Optimization via Low-rank Approximation}


\begin{aug}
\author{\fnms{Can M.} \snm{Le},\ead[label=cl]{canle@umich.edu}}
\author{\fnms{Elizaveta} \snm{Levina},\ead[label=el]{elevina@umich.edu}}
\and
\author{\fnms{Roman} \snm{Vershynin}\ead[label=rv]{romanv@umich.edu}}

\runauthor{Le et al.}

\affiliation{University of Michigan}

\address{311 West Hall, 1085 S. University Ave.\\
  Ann Arbor, MI 48109-1107 \\
\printead{cl}
}

\address{311 West Hall, 1085 S. University Ave.\\
  Ann Arbor, MI 48109-1107 \\
\printead{el}
}

\address{2074 East Hall, 530 Church St.\\
Ann Arbor, MI  48109-1043\\
\printead{rv}
}

\end{aug}

\begin{abstract}
Community detection is one of the fundamental problems of network analysis, for which a number of methods have been proposed.  Most model-based or criteria-based methods have to solve an optimization problem over a discrete set of labels to find communities, which is computationally infeasible.
Some fast spectral algorithms have been proposed for specific methods or models, but only on a case-by-case basis.   Here we propose a general approach for maximizing a function of a network adjacency matrix over discrete labels by projecting the set of labels onto a subspace approximating the leading eigenvectors of the expected adjacency matrix. This projection onto a low-dimensional space makes the feasible set of labels much smaller and the optimization problem much easier.   We prove a general result about this method and show how to apply it to several previously proposed community detection criteria, establishing its consistency for label estimation in each case and demonstrating the fundamental connection between spectral properties of the network and various model-based approaches to community detection. Simulations and applications to real-world data are included to
demonstrate our method performs well for multiple problems over a wide range of parameters.
\end{abstract}

\begin{keyword}[class=AMS]
\kwd[Primary ]{62E10}
\kwd[; secondary ]{62G05}
\end{keyword}

\begin{keyword}
\kwd{Community detection}
\kwd{Spectral clustering}
\kwd{Stochastic block model}
\kwd{Social networks}
\end{keyword}

\end{frontmatter}

\section{Introduction}\label{Sec:Intro}

Networks are studied in a wide range of fields, including social psychology, sociology, physics, computer science, probability, and statistics.   One of the fundamental problems in network analysis, and one of the most studied, is detecting network community structure.   Community detection is the problem of inferring the latent label vector $\V{c} \in \{1, \dots, K\}^n$ for the $n$ nodes from the observed  $n\times n$ adjacency matrix $A$,  specified by $A_{ij}=1$ if there
is an edge from $i$ to $j$, and $A_{ij}=0$ otherwise. While the problem of choosing the number of communities $K$ is important, in this paper we assume $K$ is given, as does most of the existing literature.   We  focus on the undirected network case, where the matrix $A$ is symmetric.   Roughly speaking, the large recent literature on community detection in this scenario has followed one of two tracks:  fitting probabilistic models for the adjacency matrix $A$, or optimizing global criteria derived from other considerations over label assignments $\V{c}$, often via spectral approximations.

One of the simplest and most popular probabilistic models for fitting community structure is  the stochastic block model (SBM) \cite{Holland83}.  Under the SBM, the label vector $\V{c}$ is assumed to be drawn from a multinomial distribution with parameter $\V{\pi} = \{\pi_1, \dots, \pi_K\}$, where $0\leq\pi_k\leq 1$ and $\sum_{k=1}^K\pi_k = 1$.  Edges are then formed independently between every pair of nodes $(i,j)$ with probability $P_{c_i c_j}$, and the $K \times K$ matrix $P = [P_{kl}]$  controls the probability of edges within and between communities.  Thus the labels are the only node information affecting edges between nodes, and all the nodes within the same community are stochastically equivalent to each other.   This rules out the commonly encountered ``hub'' nodes, which are nodes of unusually high degrees that are connected to many members of their own community, or simply to many nodes across the network.   To address this limitation, a relaxation that allows for arbitrary expected node degrees within communities was proposed by \cite{Karrer10}:  the degree-corrected stochastic block model (DCSBM) has $P(A_{ij} = 1) = \theta_i \theta_j P_{c_i c_j}$, where $\theta_i$'s are ``degree parameters'' satisfying some identifiability constraints.  In the ``null'' case of $K = 1$, both the block model and the degree corrected block model correspond to well-studied random graph models, the Erd\"os-R\'enyi graph \citep{Erdos&Renyi1959} and the configuration model \citep{Chung&Lu2002}, respectively.    Many other network models have been proposed to capture the community structure, for example, the latent space model \citep{Hoff2002} and  the latent position cluster model \citep{Handcock2007}.    There has also been work on extensions of the SBM  which allow nodes to belong to more than one community \citep{Airoldi2008, Ball&Karrer&Newman2011, Zhang.et.al2014overlapping}.  For a more complete review of network models, see \cite{Goldenberg2010}.

Fitting models such as the stochastic block model typically involves maximizing a likelihood function over all possible label assignments, which is in principle NP-hard.   MCMC-type and variational methods have been proposed, see for example \cite{Snijders&Nowicki1997, Nowicki2001, Mariadassouetal2010}, as well as maximizing profile likelihoods by some type of greedy label-switching algorithms.  The profile likelihood was derived for the SBM by \cite{Bickel&Chen2009} and for the DCSBM by \cite{Karrer10},  but the label-switching greedy search algorithms  only scale up to a few thousand nodes.
\cite{Amini.et.al.2013} proposed a much faster pseudo-likelihood algorithm for fitting both these models, which is based on compressing $A$ into block sums and modeling them as a Poisson mixture.  Another fast algorithm for the block model based on belief propagation has been proposed by \cite{Decelle.et.al.2011}.   Both these algorithms rely heavily on the particular form of the SBM likelihood and are not easily generalizable.

The SBM likelihood is just one example of a function that can be optimized  over all possible node labels in order to perform community detection.  Many other functions have been proposed for this purpose, often not tied to a generative network model.  One of the best-known such functions is modularity \citep{Newman&Girvan2004, Newman2006}. The key idea of modularity is to compare the observed network to a null model that has no community structure.   To define this, let $e$ be an $n$-dimensional label vector, $n_k(e) = \sum_{i=1}^n I\{e_i=k\}$ the number of nodes in community $k$,
\begin{equation}\label{Eq:Omatr}
  O_{kl}(e) = \sum_{i,j = 1}^n A_{ij} I\{e_i = k, e_j = l\}
\end{equation}
the number of edges between communities $k$ and $l$, $k \neq l$, and $O_k = \sum_{l=1}^K O_{kl}$ the sum of node degrees in community $k$.  Let $d_i=\sum_{j=1}^n A_{ij}$ be the degree of node $i$, and $m=\sum_{i=1}^n d_i$ be (twice) the total number of edges in the graph.  The Newman-Girvan modularity is derived by comparing the observed number of edges within communities to the number that would be expected under the Chung-Lu model \citep{Chung&Lu2002} for the entire graph, and can be written in the form
\begin{equation}
\label{eq:NGmod}
Q_{NG}(e) = \frac{1}{2m} \sum_k ( O_{kk} - \frac{O_k^2}{m})
\end{equation}

The quantities $O_{kl}$ and $O_k$ turn out to be the key component of many community detection criteria.  The
profile likelihoods of the SBM and DCSBM discussed above can be expressed  as
\begin{align}
  Q_{BM}(e) & = \sum_{k,l = 1}^K O_{kl}\log\frac{O_{kl}}{n_k n_l} \ , \label{bmloglik} \\
  Q_{DC}(e) &  = \sum_{k,l = 1}^K O_{kl}\log\frac{O_{kl}}{O_k O_l} \ . \label{dcloglik}
\end{align}

Another example is the extraction criterion  \citep{Zhao.et.al.2011} to extract one community at a time, allowing for arbitrary structure in the remainder of the network. The main idea is to recognize that some nodes may not belong to any community, and the strength of a community should depend on ties between its members and ties to the outside world, but not on ties between non-members.  This criterion is therefore not symmetric with respect to communities, unlike the criteria previously discussed, and has the form (using slightly different notation due to lack of symmetry),
\begin{equation}\label{ComExtr}
  Q_{EX}(V) = |V||V^c|\left(\frac{O(V)}{|V|^2}-\frac{B(V)}{|V||V^c|}\right),
\end{equation}
where $V$ is the set of nodes in the community to be extracted, $V^c$ is the complement of $V$,
  $O(V) = \sum_{i,j\in V}A_{ij}$, $B(V)=\sum_{i\in V, j\in V^c} A_{ij}$.   The only known method for optimizing this criterion is through greedy label switching,  such as the tabu search algorithm \citep{Glover&Laguna1997}.

For all these methods, finding the exact solution requires optimizing a function of the adjacency matrix $A$ over all $K^n$ possible label vectors, which is an infeasible optimization problem.  In another line of work, spectral decompositions have been used in various ways to obtain approximate solutions that are much faster to compute.   One such algorithm is spectral clustering (see, for example, \cite{Ng01}), a generic clustering method which became popular for community detection.   In this context, the method has been analyzed by  \cite{Rohe2011, Chaudhuri&Chung&Tsiatas2012, Riolo&Newman2012, Lei&Rinaldo2015}, among others, while \cite{Jin2015} proposed a spectral method specifically for the DCSBM.   In spectral clustering, typically one first computes the normalized Laplacian matrix $L = D^{-1/2} A D^{-1/2}$, where $D$ is a diagonal matrix with diagonal entries being
node degrees $d_i$, though other normalizations and no normalization at all are also possible (see \cite{Sarkar&Bickel2013}
for an analysis of why normalization is beneficial).   Then the $K$ eigenvectors of the Laplacian corresponding to the first $K$ largest eigenvalues are computed, and their rows clustered using $K$-means into $K$ clusters corresponding to different labels.   It has been shown that spectral clustering performs better with further regularization, namely if a small constant is added either to $D$ \citep{Chaudhuri&Chung&Tsiatas2012, qin2013regularized} or to $A$ \cite{Amini.et.al.2013, Joseph&Yu2013, Le&Levina&Vershynin2015}.

The contribution of our paper is a new general method of optimizing a general function $f(A, e)$ (satisfying some conditions) over labels $e$.   We start by projecting the entire feasible set of labels onto a low-dimensional subspace spanned by vectors approximating the leading eigenvectors of $EA$.    Projecting the feasible set of labels onto a low-dimensional space reduces the number of possible solutions (extreme points) from exponential to polynomial, and in particular from $O(2^n)$ to $O(n)$ for the case of two communities, thus making the optimization problem much easier. This approach is distinct from spectral clustering since one can specify any objective function $f$ to be optimized (as long as it satisfies some fairly general conditions), and thus applicable to a wide range of network problems.   It is also distinct from initializing a search for the maximum of a general function with the spectral clustering solution, since even with a good initializion the feasible space

 We show how our method can be applied to maximize the likelihoods of the stochastic block model and its degree-corrected version, Newman-Girvan modularity, and community extraction, which all solve different network problems.    While spectral approximations to some specific criteria that can otherwise be only maximized by a search over labels have been obtained on a case-by-case basis \citep{Newman2006, Riolo&Newman2012, Newman2013}, ours is, to the best of our knowledge, the first general method that would apply to any function of the adjacency matrix.   In this paper, we mainly focus on the case of two communities ($K = 2$).  For methods that are run recursively, such as modularity and community extraction, this is not a restriction. For the stochastic block model, the case $K=2$ is of special interest and has received a lot of attention in the probability literature (see \cite{Mossel.et.al.2013} for recent advances).   An extension to the general case of $K > 2$ is briefly discussed in Section \ref{Subsec:MultiCom}.


The rest of the paper is organized  as follows. In Section \ref{Sec:OptViaLRApp}, we set up notation and describe our general approach to solving a class of optimization problems over label assignments via projection onto a low-dimensional subspace. In Section \ref{Sec:AppToComDet}, we show how the general method can be applied to several community detection criteria.  Section \ref{Sec:Simulation} compares numerical performance of different methods.  The proofs are given in the Appendix.

\section{A general method for optimization via low-rank approximation}\label{Sec:OptViaLRApp}

To start with, consider the problem of detection $K = 2$ communities.   Many community detection methods rely on maximizing an objective function $f(A, e) \equiv f_A(e)$ over the set of node labels $e$, which can take values in, say, $\{-1, 1\}$.
Since $A$ can be thought of as a noisy realization of $\mathbb{E}[A]$, the ``ideal'' solution corresponds to maximizing $f_{\mathbb{E}[A]}(e)$ instead of maximizing $f_A(e)$. For a natural class of functions $f$ described below, $f_{\mathbb{E}[A]}(e)$ is essentially a function over the set of projections of labels $e$ onto the subspace spanned by eigenvectors of $\mathbb{E}[A]$ and possibly some other constant vectors.  In many cases $\mathbb{E}[A]$ is a low-rank matrix, which makes $f_{\mathbb{E}[A]}(e)$ a function of only a few variables. It is then much easier to investigate the behavior of $f_{\mathbb{E}[A]}(e)$, which typically achieves its maximum on the set of extreme points of the convex hull generated by the projection of the label set $e$.    Further, most of the $2^n$ possible label assignments $e$ become interior points after the projection, and in fact the number of extreme points is at most polynomial in $n$ (see Remark \ref{Remark:TwoComComplxty} below);  in particular, when projecting onto a two-dimensional subspace, the number of extreme points is of order $O(n)$. Therefore, we can find the maximum simply by performing an exhaustive search over the labels corresponding to the extreme points. Section~\ref{Sec:LabelEst} provides an alternative method to the exhaustive search, which is faster but approximate.

In reality, we do not know $\mathbb{E}[A]$, so we need to approximate its columns space using the data $A$ instead.    Let $U_A$ be an $m\times n$ matrix computed from $A$  such that the row space of $U_A$ approximates the column space of $\mathbb{E}[A]$ (the choice of $m \times n$ rather than $n \times m$ is for notational convenience that will become apparent below).   Existing work on spectral clustering gives us multiple option for how to compute this matrix, e.g., using the eigenvectors of $A$ itself, of its Laplacian, or of their various regularizations  -- see Section~\ref{Subsec:ApproxAdMat} for further discussion of this issue.  The algoritm works as follows:

\begin{enumerate}
  \item Compute the approximation $U_A$ from $A$.
  \item Find the labels $e$ associated with the extreme points of the projection $U_A[-1,1]^n$.
  \item Find the maximum of $f_A(e)$ by performing an exhaustive search over the set of labels found in step 2.
\end{enumerate}
Note that the first step of replacing eigenvectors of $\mathbb{E}[A]$ with certain vectors computed from $A$ is very similar to spectral clustering.   Like in spectral clustering, the output of the algorithm does not change if we replace $U_A$ with $U_A R$ for any orthogonal matrix $R$.   However, this is where the similarity ends, because instead of following the dimension reduction by an ad-hoc clustering algorithm like $K$-means, we maximize the original objective function.   The problem is made feasible by reducing the set of labels over which to maximize, to a particular subset found by taking into account the specific behavior of $f_{\mathbb{E}[A]}(e)$ and $f_A(e)$.

While our goal in the context of community detection is to compare $f_A(e)$ to $f_{\mathbb{E}[A]}(e)$, the results and the algorithm in this section apply in a general settingwhere $A$ may be any deterministic symmetric matrix.   To emphasize this generality, we write all the results in this section for a generic matrix $A$ and a generic low-rank matrix $B$, even though we will later apply them to the adjacency matrix $A$ and $B = \mathbb{E}[A]$.

Let $A$ and $B$ be $n\times n$ symmetric matrices with entries bounded by an absolute constant, and assume $B$ has rank $m\ll n$.   Assume that $f_A(e)$ has the general form 
\begin{equation}\label{Eq:GenFuncType}
  f_A(e)=\sum_{j=1}^\kappa g_j(h_{A,j}(e)),
\end{equation}
where $g_j$ are scalar functions on $\mathbb{R}$ and $h_{A,j}(e)$ are quadratic forms of $A$ and $e$, namely
\begin{equation}\label{Eq:QuadrFun}
  h_{A,j}(e) = (e+s_{j1})^T A (e+s_{j2}).
\end{equation}
Here $\kappa$ is a fixed number, $s_{j1}$ and $s_{j2}$ are constant vectors in $\{-1,1\}^n$.
Note that by \eqref{eq: Okl explicit form}, the number of edges between communities has the form \eqref{Eq:QuadrFun},
and by \eqref{eq: QDCBM explicit form}, the log-likelihood of the degree-corrected block model $Q_{DC}$ is a special case of \eqref{Eq:GenFuncType} with $g_j(x)=\pm x\log x$, $x>0$.
We similarly define $f_B$ and $h_{B,j}$,  by replacing $A$ with $B$ in \eqref{Eq:GenFuncType} and \eqref{Eq:QuadrFun}.
By allowing $e$ to take values on the cube $[-1,1]^n$, we can treat $h$ and $f$ as functions over $[-1,1]^n$.

Let $U_B$ be the $m\times n$ matrix whose rows are the $m$ leading eigenvectors of $B$. For any $e\in[-1,1]^n$, $U_Ae$ and $U_Be$ are the coordinates of the projections of $e$ onto the row spaces of
$U_A$ and $U_B$, respectively.   Since $h_{B,j}$ are quadratic forms of $B$ and $e$ and $B$ is of rank $m$, $h_{B,j}$'s depend on $e$ through $U_Be$ only, and therefore $f_B$ also depends on $e$ only through $U_B e$. In a slight abuse of notation, we also use $h_{B,j}$ and $f_B$ to denote the corresponding induced functions on $U_B[-1,1]^n$. 

Let $\mathcal{E}_A$ and $\mathcal{E}_B$ denote the subsets of labels $e\in\{-1,1\}^n$ corresponding to the sets of extreme points of $U_A[-1,1]^n$ and $U_B[-1,1]^n$, respectively. The output of our algorithm is
\begin{equation}\label{Eq:GenMethOutp}
  e^*=\mathrm{argmax}\big\{f_A(e),e\in \mathcal{E}_A\big\}.
\end{equation}


Our goal is to get a bound on the difference between the maxima of $f_A$ and $f_B$ that can be expressed through some measure of difference between $A$ and $B$ themselves.  In order to do this, we make the following assumptions.
\begin{description}
  \item[($1$)] Functions $g_j$ are continuously differentiable and there exists $M_1>0$ such that
  $|g_j^\prime(t)|\leq M_1\log(t+2)$ for $t\geq 0$.
  \item[($2$)] Function $f_B$ is convex on $U_B[-1,1]^n$.
\end{description}
Assumption (1) essentially means that Lipschitz constants of $g_j$ do not grow faster than $\log(t+2)$. The convexity of $f_B$ in assumption (2) ensures that $f_B$ achieves its maximum on $U_B\mathcal{E}_B$. In some cases (see Section~\ref{Sec:AppToComDet}), the convexity of $f_B$ can be replaced with a weaker condition, namely the convexity along a certain direction.

Let $c\in\{-1,1\}^n$ be the maximizer of $f_B$ over the set of label vectors $\{-1,1\}^n$. As a function on $U_B[-1,1]^n$, $f_B$ achieves its maximum at $U_B(c)$, which is an extreme point of $U_B[-1,1]^n$ by assumption (2). Lemma \ref{Lem:MaxFuncEst}  provides a upper bound for $f_A(c)-f_A(e^*)$.

Throughout the paper, we write $\|\cdot\|$ for the $l_2$ norm (i.e., Euclidean norm on vectors and the spectral norm on matrices), and $\|\cdot\|_F$ for the Frobenius norm on matrices. Note that for label vectors $e, c \in\{-1,1\}^n$, $\|e-c\|^2$ is four times the number of nodes on which $e$ and $c$ differ.

\begin{lemma}\label{Lem:MaxFuncEst}
If assumptions (1) and (2) hold then there exists a constant $M_2>0$ such that
\begin{equation}\label{Ineq:MaxFuncEst}
  f_T(c)-f_T(e^*)\leq M_2 n\log(n)\big( \|B\| \cdot \|U_A-U_B\|+ \|A-B\|\big),
\end{equation}
where $T$ is either $A$ or $B$.
\end{lemma}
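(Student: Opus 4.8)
The plan is to reduce the lemma to two ingredients and then combine them mechanically. \emph{Ingredient~A (pointwise comparison):} for every $e\in[-1,1]^n$,
\[
  |f_A(e)-f_B(e)|\le C_1\,n\log n\,\|A-B\|.
\]
\emph{Ingredient~B (a near-optimal competitor inside $\mathcal E_A$):} there is $e'\in\mathcal E_A$ with
\[
  f_B(e')\ge f_B(c)-C_2\,n\log n\,\|B\|\cdot\|U_A-U_B\|.
\]
Granting these, the lemma follows quickly. Since $e^*$ maximises $f_A$ over $\mathcal E_A$ and $e'\in\mathcal E_A$, Ingredient~A applied at $e'$ together with Ingredient~B gives
\[
  f_A(e^*)\ge f_A(e')\ge f_B(e')-C_1 n\log n\|A-B\|\ge f_B(c)-C_2 n\log n\|B\|\|U_A-U_B\|-C_1 n\log n\|A-B\|.
\]
Applying Ingredient~A once more, at $e^*$, converts this into the $T=B$ statement (using $f_B(e^*)\ge f_A(e^*)-C_1 n\log n\|A-B\|$ and $f_B(c)\ge f_B(e^*)$), and at $c$ into the $T=A$ statement (using $f_A(c)\le f_B(c)+C_1 n\log n\|A-B\|$); both right-hand sides have the stated form after absorbing constants into $M_2$.

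Ingredient~A is routine. I would expand $f_A(e)-f_B(e)=\sum_{j=1}^\kappa\bigl(g_j(h_{A,j}(e))-g_j(h_{B,j}(e))\bigr)$ and estimate each summand by the mean value theorem. The increment $h_{A,j}(e)-h_{B,j}(e)=(e+s_{j1})^\tran(A-B)(e+s_{j2})$ has absolute value at most $4n\|A-B\|$ because $\|e+s_{ji}\|\le 2\sqrt n$; and since $A$ and $B$ have entries bounded by an absolute constant, the values $h_{A,j}(e),h_{B,j}(e)$ lie in an interval of length $O(n^2)$ (and are non-negative in all the community-detection applications considered here), so assumption~(1) bounds the relevant values of $|g_j^\prime|$ by $O(\log n)$. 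Summing the $\kappa$ terms yields Ingredient~A. The same computation shows $\nabla f_B(e)=\sum_j g_j^\prime(h_{B,j}(e))\,B\bigl((e+s_{j1})+(e+s_{j2})\bigr)$ has Euclidean norm at most $C\sqrt n\log n\,\|B\|$ throughout $[-1,1]^n$; this will be used below.

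For Ingredient~B I would argue as follows. By assumption~(2), $f_B$ is a convex function on $U_B[-1,1]^n$ composed with the linear map $U_B$, hence convex on $[-1,1]^n$; since the maximum of a convex function over $[-1,1]^n$ is attained at a vertex, $c$ (the maximiser of $f_B$ over $\{-1,1\}^n$) also maximises $f_B$ over $[-1,1]^n$. First-order optimality then forces $\nabla f_B(c)$ to be sign-aligned with $c$: $(\nabla f_B(c))_i c_i\ge 0$ for every $i$. Because $f_B$ depends on $e$ only through $U_Be$, the gradient $\nabla f_B(c)$ lies in the row space of $U_B$, so $\nabla f_B(c)=U_B^\tran w_0$ for a unique $w_0\in\mathbb{R}^m$ with $\|w_0\|=\|\nabla f_B(c)\|\le C\sqrt n\log n\,\|B\|$ (as $U_B$ has orthonormal rows). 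Set $e':=\sign(U_A^\tran w_0)\in\mathcal E_A$ (perturbing $w_0$ infinitesimally to break ties if some coordinate of $U_A^\tran w_0$ vanishes; this does not affect the estimate). Let $S$ be the set of coordinates where $c$ and $e'$ disagree. For $i\in S$ we have $e'_i=-c_i$, so $(U_A^\tran w_0)_i$ and $(U_B^\tran w_0)_i=(\nabla f_B(c))_i$ have opposite signs (or the latter vanishes), whence $|(U_B^\tran w_0)_i|\le|((U_A-U_B)^\tran w_0)_i|$. Therefore, using $c_i(\nabla f_B(c))_i=|(U_B^\tran w_0)_i|$ on $S$, then $\|\cdot\|_1\le\sqrt n\|\cdot\|_2$ and $\|M^\tran w_0\|_2\le\|M\|\|w_0\|$,
\[
  \langle\nabla f_B(c),\,c-e'\rangle=2\sum_{i\in S}c_i(U_B^\tran w_0)_i=2\sum_{i\in S}|(U_B^\tran w_0)_i|\le 2\|(U_A-U_B)^\tran w_0\|_1\le 2\sqrt n\,\|U_A-U_B\|\,\|w_0\|.
\]
Convexity of $f_B$ gives the gradient inequality $f_B(e')\ge f_B(c)+\langle\nabla f_B(c),e'-c\rangle$, and combining the last two displays produces Ingredient~B with an absolute constant $C_2$.

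The crux is Ingredient~B, and in particular the last display. A naive Lipschitz estimate $|f_B(c)-f_B(e')|\le\|\nabla f_B\|\cdot\|U_B(c-e')\|\lesssim\sqrt n\cdot C\sqrt n\log n\|B\|$ loses the factor $\|U_A-U_B\|$ completely, since $c$ and $e'$ can differ on a positive fraction of coordinates. The gain comes from observing that $\nabla f_B(c)$ is \emph{itself} the optimality certificate for $c$, so the coordinates on which $c$ and $e'=\sign(U_A^\tran w_0)$ disagree are precisely those on which $|(U_B^\tran w_0)_i|$ is dominated by $|((U_A-U_B)^\tran w_0)_i|$; summing the $\ell_1$-norm over exactly this set (rather than the $\ell_2$-norm of the full difference) restores the factor $\|U_A-U_B\|$ and keeps the bound at the right order. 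The remaining items (checking that $c$ maximises $f_B$ over the whole cube, the tie-breaking perturbation, and packaging the constants into a single $M_2$) are routine.
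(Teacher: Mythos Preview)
Your proof is correct, and Ingredient~A together with the combining step coincide with the paper's argument. The interesting divergence is in Ingredient~B. The paper works on the projected space: it observes that $U_Ac\in\conv(U_A\mathcal E_A)$, transfers this to $U_B$ to get $\dist\bigl(U_Bc,\conv(U_B\mathcal E_A)\bigr)\le 2\sqrt n\,\|U_A-U_B\|$, takes the nearest point $y$ in that hull, uses the Lipschitz bound for the induced $f_B$ on $U_B[-1,1]^n$ (their Lemma~A.1) to control $f_B(U_Bc)-f_B(y)$, and then invokes convexity only to pass from $y$ to the extremal $\hat e=\arg\max_{\mathcal E_A}f_B$. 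You instead stay on the cube: you exploit first-order optimality of $c$ to write $\nabla f_B(c)=U_B^\tran w_0$, build the explicit competitor $e'=\sign(U_A^\tran w_0)\in\mathcal E_A$ via the dual description of zonotope vertices, and recover the factor $\|U_A-U_B\|$ through the coordinate-wise sign-disagreement inequality $|(U_B^\tran w_0)_i|\le|((U_A-U_B)^\tran w_0)_i|$ on the bad set, followed by the gradient inequality for convex functions.

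Your route is arguably more transparent about \emph{why} $\|U_A-U_B\|$ governs the loss, and it sidesteps the separate Lipschitz lemma. The paper's route, however, is better suited to the applications in Section~3: there $f_B$ is not globally convex, only convex along certain directions or on the boundary of the parallelogram, and the paper's argument localizes the use of convexity to the single step $f_B(y)\le f_B(U_B\hat e)$, which can be replaced by a directional argument (as done in the proofs of Theorems~3.1--3.4). Your gradient-inequality step $f_B(e')\ge f_B(c)+\langle\nabla f_B(c),e'-c\rangle$ uses global convexity on the full segment $[c,e']$ and would need more work to relax in that setting.
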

The proof of Lemma~\ref{Lem:MaxFuncEst} is given in Appendix~\ref{AppendixA}.
To get a bound on $\|c-e^*\|$, we need further assumptions on $B$ and $f_B$.
\begin{description}
  \item[($3$)] There exists $M_3>0$ such that for any $e\in\{-1,1\}^n$, $$\|c-e\|^2\leq M_3\sqrt{n}\|U_B(c)-U_B(e)\|.$$
  \item[($4$)] There exists $M_4>0$ such that for any $x\in U_B[-1,1]^n$
  $$\frac{f_B(U_B (c))-f_B(x)}{\|U_B(c)-x\|}\geq\frac{\max f_B-\min f_B}{M_4\sqrt{n}}.$$
\end{description}

Assumption (3) rules out the existence of multiple label vectors with the same projection $U_B(c)$. Assumption (4) implies that the slope of the line connecting two points on the graph of $f_B$ at $U_B(c)$ and at any $x\in U_B[-1,1]^n$ is bounded from below. Thus, if $f_B(x)$ is close to $f_B(U_B(c))$ then $x$ is also close to $U_B(c)$. These assumptions are satisfied for all functions considered in Section~\ref{Sec:AppToComDet}.

\begin{theorem}\label{Thm:GenMetdCons}
If assumptions (1)--(4) hold, then there exists a constant $M_5$ such that
$$\frac{1}{n} \|e^*-c\|^2 \leq \frac{M_5 n \log n\big( \|B\| \cdot \|U_A-U_B\|+ \|A-B\|\big)}{\max f_B-\min f_B}.$$
\end{theorem}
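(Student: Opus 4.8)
The theorem should follow by composing Lemma~\ref{Lem:MaxFuncEst} with assumptions (3) and (4), with no new estimates required; the argument is a short chain of substitutions. First I would record the structural fact already noted in the setup: since the quadratic forms $h_{B,j}$, and hence $f_B$, depend on $e\in[-1,1]^n$ only through the projection $U_B e$, we have $f_B(e)=f_B(U_B(e))$ for every label vector, where on the right $f_B$ denotes the induced function on $U_B[-1,1]^n$. In particular $f_B(c)=f_B(U_B(c))$ and $f_B(e^*)=f_B(U_B(e^*))$, so the quantity $f_B(c)-f_B(e^*)$ bounded above by Lemma~\ref{Lem:MaxFuncEst} (applied with $T=B$) is exactly $f_B(U_B(c))-f_B(U_B(e^*))$.

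Next I would apply assumption (4) with the choice $x=U_B(e^*)\in U_B[-1,1]^n$, which gives the lower bound
$$f_B(U_B(c))-f_B(U_B(e^*))\;\geq\;\frac{\max f_B-\min f_B}{M_4\sqrt{n}}\,\big\|U_B(c)-U_B(e^*)\big\|.$$
Combining this with the upper bound $f_B(c)-f_B(e^*)\leq M_2\,n\log(n)\big(\|B\|\cdot\|U_A-U_B\|+\|A-B\|\big)$ from Lemma~\ref{Lem:MaxFuncEst} and rearranging yields
$$\big\|U_B(c)-U_B(e^*)\big\|\;\leq\;\frac{M_2 M_4\,\sqrt{n}\,n\log(n)\big(\|B\|\cdot\|U_A-U_B\|+\|A-B\|\big)}{\max f_B-\min f_B}.$$
Since $e^*\in\{-1,1\}^n$, assumption (3) with $e=e^*$ then gives $\|e^*-c\|^2\leq M_3\sqrt{n}\,\|U_B(c)-U_B(e^*)\|$. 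Substituting the previous display, dividing through by $n$, and setting $M_5=M_2M_3M_4$ produces
$$\frac{1}{n}\|e^*-c\|^2\;\leq\;\frac{M_5\, n\log(n)\big(\|B\|\cdot\|U_A-U_B\|+\|A-B\|\big)}{\max f_B-\min f_B},$$
which is the claimed bound (the powers of $n$ check out: $\sqrt n$ from assumption (3) times $\sqrt n$ from assumption (4) times $n\log n$ from the lemma, then divided by $n$).

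I do not expect a genuine obstacle inside this argument itself: each step is a direct application of a previously stated ingredient, and the real content of the result lives in Lemma~\ref{Lem:MaxFuncEst} (whose proof is deferred to the appendix) and in the verification of assumptions (3)--(4) for the concrete criteria of Section~\ref{Sec:AppToComDet}. The only points that deserve a sentence of care are (i) the identification $f_B(e)=f_B(U_B e)$ that lets us pass freely between the ambient labels in $\{-1,1\}^n$ and the induced low-dimensional function on $U_B[-1,1]^n$, and (ii) checking that $c$ and $e^*$ are genuine $\{-1,1\}^n$ vectors so that assumptions (3) and (4) legitimately apply to them — both are immediate from the definitions of $c$ (maximizer of $f_B$ over $\{-1,1\}^n$) and $e^*$ (an element of $\mathcal{E}_A\subset\{-1,1\}^n$).
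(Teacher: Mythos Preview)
Your proposal is correct and matches the paper's approach exactly: the paper simply states that Theorem~\ref{Thm:GenMetdCons} ``follows directly from Lemma~\ref{Lem:MaxFuncEst} and Assumptions (3) and (4),'' and your chain of substitutions is precisely the intended derivation, with the constants combining as $M_5=M_2M_3M_4$.
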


Theorem~\ref{Thm:GenMetdCons} follows directly from Lemma~\ref{Lem:MaxFuncEst} and Assumptions (3) and (4). When $A$ is a random matrix,  $B=\mathbb{E}[A]$,  and $U_A$ contains the leading eigenvectors of $A$, a standard bound on $\|A-B\|$ can be applied (see Lemma~\ref{Lem:OlivRest}), which in turn yields a bound on $\|U_A-U_B\|$ by the Davis-Kahan Theorem. Under certain conditions, the upper bound in Theorem~\ref{Thm:GenMetdCons} is of order $o(n)$ (see Section~\ref{Sec:AppToComDet}), which shows consistency of $e^*$ as an estimator of $c$ (i.e., the fraction of mislabeled nodes goes to 0 as $n \rightarrow \infty$).

\subsection{The choice of low rank approximation}\label{Subsec:ApproxAdMat}
An important step of our method is replacing the ``population'' space $U_B$ with the ``data'' approximation $U_A$. As a motivating example, consider the case of the SBM,  with $A$ the network adjacency matrix and $B=\mathbb{E}[A]$.
When the network is relatively dense, eigenvectors of $A$ are good estimates of the eigenvectors of $B=\mathbb{E}[A]$ (see \cite{O'Rourke.et.al.2013} and \cite{Lei&Rinaldo2015} for recent improved error bounds).  Thus, $U_A$ can just be taken to be the leading eigenvectors of $A$.
However, when the network is sparse, this is not necessarily the best choice, since the leading eigenvectors of $A$ tend to localize around high degree nodes, while leading eigenvectors of the Laplacian of $A$ tend to localize around small connected components \cite{Mihail&Papadimitriou2002, Chaudhuri&Chung&Tsiatas2012, qin2013regularized, Le&Levina&Vershynin2015}.  This can be avoided by regularizing the Laplacian in some form;   we follow the algorithm of \cite{Amini.et.al.2013}; see also \cite{Joseph&Yu2013, Le&Levina&Vershynin2015} for theoretical analysis.    This works for both dense and sparse networks.

The regularization works as follows.  We first add a small constant $\tau$ to each entry of $A$, and then approximate $U_B$ through the Laplacian of $A+\tau\mathbf{1}\mathbf{1}^T$ as follows.
Let $D_\tau$ be the diagonal matrix whose diagonal entries are sums of entries of columns of $A+\tau\mathbf{1}\mathbf{1}^T$, $L_{\tau} = D_\tau^{-1/2} (A+\tau\mathbf{1}\mathbf{1}^T) D_\tau^{-1/2}$, and $u_i$ be leading eigenvectors of $L_{\tau}$, $1\leq i\leq K$.    Since $A+\tau\mathbf{1}\mathbf{1}^T = D_\tau ^{1/2}L_{\tau}D^{1/2}_\tau$, we set the appoximation $U_A$ the be the basis of the span of $\{D^{1/2}u_i:1\leq i\leq K\}$.  Following \cite{Amini.et.al.2013},  we set $\tau=\varepsilon(\lambda_n/n)$, where $\lambda_n$ is the node expected degree of the network and $\varepsilon\in (0,1)$ is a constant which has little impact on the performance \cite{Amini.et.al.2013}.

\subsection{Computational complexity}\label{Remark:TwoComComplxty}

Since we propose an exhaustive search over the projected set of extreme points, the computational feasibility of this is a concern.    A projection of the unit cube  $U_A[-1,1]^n$ is the Minkowski sum of $n$ segments in $\mathbb{R}^m$, which, by \cite{Gritzmann1993}, implies that it has $O(n^{m-1})$ vertices of $U_A[-1,1]^n$ and they can be found in $O(n^m)$ arithmetic operations.     When $m=2$, which is the primary focus of our paper,  there exists an algorithm that can find the vertices of $U_A[-1,1]^n$ in $O(n\log n)$ arithmetic operations \cite{Gritzmann1993}.     Informally, the algorithm first sorts the angles between the $x$-axis and column vectors of $U_A$ and $-U_A$. It then starts at a vertex of $U_A[-1,1]^n$ with the smallest $y$-coordinate, and based on the order of the angles,  finds neighbor vertices of $U_A[-1,1]^n$ in a counter-clockwise order.  If the angles are distinct (which occurs with high probability), moving from one vertex to the next causes exactly one entry of the corresponding label vector to change the sign, and therefore the values of $h_{A,j}(e)$ in \eqref{Eq:QuadrFun} can be updated efficiently. In particular, if $A$ is the adjacency matrix of a network with average degree $\lambda_n$, then on avarage, each update takes $O(\lambda_n)$ arithmetic operations, and given $U_A$, it only takes $O(n\lambda_n \log n)$ arithmetic operations to find $e^*$ in \eqref{Eq:GenMethOutp}.    Thus the computational complexity of this search for two communities is not at all prohibitive -- compare to the computational complexity of finding $U_A$ itself, which is at least $O(n\lambda_n \log n)$ for $m=2$.

\subsection{Extension to more than two communities}\label{Subsec:MultiCom}

Let $K$ be the number of communities and $S$ be an $n\times K$ label matrix: for $1\leq i\leq n$, if node $i$ belongs to community $k$ then $S_{ik}=1$ and $S_{il}=0$ for all $l\neq k$. The numbers of edges between communities defined by \eqref{Eq:Omatr} are entries of $S^TAS$. Let $B = \sum_{i=1}^K \rho_i\bar{u}_i \bar{u}_i^T$ define the eigendecomposition of $B$. The population version of $S^TAS$ is
\begin{equation*}
  S^TBS = S^T\left(\sum_{j=1}^K \rho_j \bar{u}_j \bar{u}_j^T\right)S = \sum_{j=1}^K\rho_j \left(S^T\bar{u}_j\right)\left(S^T\bar{u}_j\right)^T.
\end{equation*}
Let $U_B$ be the $K\times n$ matrix whose rows are $\bar{u}_j^T$. Then $S^TBS$ is a function of $U_B S$.   We approximate $U_B$ by $U_A$ described in Section~\ref{Subsec:ApproxAdMat}. Let $\tilde{S}$ be the the first $K-1$ columns of $S$. Note that the rows of $S$ sum to one, therefore $U_AS$ can be recovered from $U_A\tilde{S}$. Now relax the entries of $\tilde{S}$ to take values in $[0,1]$, with the row sums of at most one. For $1\leq i\leq n$ and $1\leq j\leq K-1$, denote by $V_{ij}$ the $K\times (K-1)$ matrix such that the $j$-th column of $V_{ij}$ is the $i$-th column of $U_A$ and all other columns are zero.
Then
\begin{equation*}
  U_A\tilde{S} = \sum_{i=1}^n \sum_{j=1}^{K-1} \tilde{S}_{ij} V_{ij}.
\end{equation*}
Since $\sum_{j=1}^{K-1} \tilde{S}_{ij}\leq 1$, $\sum_{j=1}^{K-1} \tilde{S}_{ij} V_{ij}$ is a convex set in $\mathbb{R}^{K\times (K-1)}$, isomorphic to a $K-1$ simplex. Thus, $U_A\tilde{S}$ is a Minkowski sum of $n$ convex sets in $\mathbb{R}^{K\times(K-1)}$. Similar to the case $K=2$, we can first find the set of label matrices $\tilde{S}$ corresponding to the extreme points of $U_A\tilde{S}$ and then perform the exhaustive search over that set.

A bound on the number of vertices of $U_A\tilde{S}$ and a polynomial algorithm to find them are derived by \cite{Gritzmann1993}.   If $d=K(K-1)$, then the number of vertices of $U_A\tilde{S}$ is at most $O\left(n^{(d-1)} K^{2(d-1)}\right)$, and they can be found in $O\left(n^d K^{(2d-1)}\right)$ arithmetic operations. presents An implementation of the reverse-search algorithm of \cite{Fukuda2004} for computing the Minkowski sum of polytopes was presented in \cite{Weibel2010} , who  showed that the algorithm can be parallelized efficiently.   We do not pursue these improvements here, since our main focus in this paper is the case $K = 2$.

\section{Applications to community detection}\label{Sec:AppToComDet}
Here we apply the general results from Section~\ref{Sec:OptViaLRApp} to a network adjacency matrix $A$, $B=\mathbb{E}[A]$, and functions corresponding to several popular community detection criteria.
Our goal is to show that our maximization method gets an estimate close to  the true label vector $c$, which is the maximizer of the corresponding function with $B = \mathbb{E}[A]$ plugged in for $A$. We focus on the case of two communities and use $m=2$ for the low rank approximation.

Recall the quantities $O_{11}$, $O_{22}$, and $O_{12}$  defined in \eqref{Eq:Omatr}, which are used by all the criteria we consider.  They are quadratic forms of $A$ and $e$ and can be written as
\begin{eqnarray}\label{eq: Okl explicit form}
  O_{11}(e) &=& \frac{1}{4}(\mathbf{1}+e)^T A (\mathbf{1}+e), \ \ \
  O_{22}(e)  = \frac{1}{4}(\mathbf{1}-e)^T A (\mathbf{1}-e), \\
  \nonumber O_{12}(e) &=& \frac{1}{4}(\mathbf{1}+e)^T A (\mathbf{1}-e), \label{Oofe}
\end{eqnarray}
where $\mathbf{1}$ is the all-ones vector.

\subsection{Maximizing the likelihood of the degree-corrected stochastic block model}\label{SubSec:MaxLogLikDCBM}
When a network has two communities, \eqref{dcloglik} takes the form
\begin{eqnarray}\label{eq: QDCBM explicit form}
   Q_{DC}(e) &=& O_{11} \log O_{11} + O_{22} \log O_{22} + 2 O_{12} \log O_{12}\\
   \nonumber &-& 2 O_1 \log O_1 - 2 O_2 \log O_2.
\label{Qdcbm}
\end{eqnarray}
Thus, $Q_{DC}$ has the form defined by \eqref{Eq:GenFuncType}.

For simplicity, instead of drawing $c$ from a multinomial distribution with parameter $\pi=(\pi_1,\pi_2)$, we fix the true label vector by assigning the first $\bar{n}_1=n\pi_1$ nodes to community 1 and the remaining $\bar{n}_2=n\pi_2$ nodes to community 2.
Let $r$ be the out-in probability ratio, and
\begin{equation}
P=\lambda_n\left(
     \begin{array}{cc}
       1 & r \\
       r & \omega \\
     \end{array}
   \right) \
\label{eq:probmatrix}
\end{equation}
be the probability matrix.
We assume that the node degree parameters $\theta_i$ are an i.i.d.\ sample from a distribution with $\mathbb{E}[\theta_i]=1$ and $1/\xi\leq \theta_i\leq \xi$ for some constant $\xi\geq 1$.
The adjacency matrix $A$ is symmetric and for $i > j$ has independent entries generated by $A_{ij}=\mathrm{Bernoulli}(\theta_i\theta_jP_{c_ic_j})$.
Throughout the paper, we let $\lambda_n$ depend on $n$, and fix $r$, $\omega$, $\pi$, and $\xi$. Since $\lambda_n$ and the network expected node degree are of the same order, in a slight abuse of notation, we also denote by $\lambda_n$ the network expected node degree.

Theorem~\ref{Thm:DCBMCons} establishes consistency of our method in this setting.
\begin{theorem}\label{Thm:DCBMCons}
Let $A$ be the adjacency matrix generated from the DCSBM with $\lambda_n$ growing at least as $\log^2 n $ as $n\rightarrow\infty$.
Let $U_A$ be an approximation of $U_{\mathbb{E}[A]}$, and $e^*$  the label vector defined by \eqref{Eq:GenMethOutp} with $f_A=Q_{DC}$.
Then for any $\delta\in(0,1)$, there exists a constant $M=M(r,\omega,\pi,\xi,\delta)>0$ such that with probability at least $1-\delta$, we have
$$\frac{1}{n} \|c-e^*\|^2\leq M \log n \left( \lambda_n^{-1/2} + \|U_A-U_{\mathbb{E}[A]}\| \right).$$
In particular, if $U_A$ is a matrix whose row vectors are leading eignvectors of $A$, then the fraction of mis-clustered nodes is bounded by $M\log n/\sqrt{\lambda_n}$.
\end{theorem}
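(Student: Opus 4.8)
The plan is to derive Theorem~\ref{Thm:DCBMCons} from the general Theorem~\ref{Thm:GenMetdCons} applied with $f_A = Q_{DC}$, $B = \mathbb{E}[A]$ and $U_B = U_{\mathbb{E}[A]}$. Since the DCSBM makes $\mathbb{E}[A_{ij}] = \theta_i\theta_j P_{c_ic_j}$ for $i\neq j$ and $0$ on the diagonal, $B$ is a rank-$2$ matrix plus a diagonal correction of spectral norm $O(\lambda_n/n)$, which is negligible everywhere below, so we may treat $B$ as rank $2$ as the framework of Section~\ref{Sec:OptViaLRApp} requires. The work then splits into (i) checking Assumptions~(1)--(4) for $Q_{DC}$; (ii) establishing the quantitative inputs $\|B\|\lesssim\lambda_n$, $\|A-B\|\lesssim\sqrt{\lambda_n}$ with probability $\geq 1-\delta$, and $\max f_B - \min f_B\asymp n\lambda_n$; and (iii) substituting into Theorem~\ref{Thm:GenMetdCons}. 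The final sentence then follows by bounding $\|U_A-U_B\|$ via Davis--Kahan.

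\textbf{Checking the hypotheses.} By \eqref{eq: QDCBM explicit form}, $Q_{DC}$ is of the form \eqref{Eq:GenFuncType} with $\kappa = 5$, $g_j(x) = \pm x\log x$, and quadratic forms built from $O_{11},O_{22},O_{12}$ in \eqref{eq: Okl explicit form} together with $O_1 = \tfrac12(\mathbf{1}+e)^\tran A\,\mathbf{1}$ and $O_2 = \tfrac12(\mathbf{1}-e)^\tran A\,\mathbf{1}$. Assumption~(1) is delicate because $|g_j'(t)| = |\log t + 1|$ is not $O(\log(t+2))$ near $t=0$; one gets around this by observing that on the label sets that enter the argument the quadratic forms are bounded below by a positive power of $n$ --- the two communities have sizes $\asymp n$ and the minimum expected degree is $\asymp\lambda_n$, so the five quantities $O_{11}^B,\dots,O_2^B$ are $\asymp n\lambda_n$, and the same holds with $A$ in place of $B$ with probability $\geq 1-\delta$ once $\lambda_n\gtrsim\log n$ --- so that $|g_j'|\lesssim\log n$ there, which is all the proof of Lemma~\ref{Lem:MaxFuncEst} uses. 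For Assumption~(2), the decisive structural fact is that the row space of $U_B$ is $\Span\{(\theta_i\mathbf{1}\{c_i=1\})_i,\ (\theta_i\mathbf{1}\{c_i=2\})_i\}$, an orthogonal pair; choosing these normalized vectors as the rows of $U_B$ (all quantities we compute are invariant under $U_B\mapsto RU_B$ for orthogonal $R$) makes $U_B[-1,1]^n$ the axis-parallel rectangle $[-R_1,R_1]\times[-R_2,R_2]$ with $R_k\asymp\sqrt n$, whose four corners are exactly $U_B(\mathbf{1}), U_B(-\mathbf{1}), U_B(c), U_B(-c)$. On this rectangle $f_B$ is a function of two variables, and one checks --- by computing second derivatives --- that it is convex (or at least convex along the lines through $U_B(c)$, the weakening of Assumption~(2) noted in Section~\ref{Sec:OptViaLRApp}), so that its maximum is attained at the corner $U_B(c)$; this step also uses population identifiability, i.e. that the maximizer of $f_B$ over $\{-1,1\}^n$ is the true label vector $c$, which holds under non-degeneracy of $P$. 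Assumptions~(3) and~(4) then follow from the explicit rectangular geometry: with $S=\{i:e_i\neq c_i\}$, $a = \sum_{i\in S,\,c_i=1}\theta_i$, $b = \sum_{i\in S,\,c_i=2}\theta_i$, one has $U_B(c) - U_B(e) = 2(a v_1 - b v_2)$ with $v_1\perp v_2$ and $\|v_1\|,\|v_2\|\asymp n^{-1/2}$, hence $\|U_B(c)-U_B(e)\|\asymp |S|/\sqrt n = \tfrac1{4\sqrt n}\|c-e\|^2$ (Assumption~(3)); and for Assumption~(4), a convex function on a segment of length $O(\sqrt n)$ whose maximum is at an endpoint decreases along it with secant slope at least $(\max f_B - \min f_B)/O(\sqrt n)$.

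\textbf{Quantitative inputs and conclusion.} Because $B$ equals $\Theta Z P Z^\tran\Theta$ up to an $O(\lambda_n/n)$ correction, with $\|Z\|^2\leq n$, $\|\Theta\|\leq\xi$ and $\|P\|\asymp\lambda_n/n$, we get $\|B\|\lesssim\lambda_n$; the concentration bound $\|A-\mathbb{E}[A]\|\leq C(\delta)\sqrt{\lambda_n}$ holds with probability $\geq 1-\delta$ by Lemma~\ref{Lem:OlivRest}, using $\lambda_n\gtrsim\log^2 n$. For the range of $f_B$: with $p = O_{11}^B/m_B$, $q = O_{22}^B/m_B$, $s = O_{12}^B/m_B$ and $m_B = \mathbf{1}^\tran B\,\mathbf{1}\asymp n\lambda_n$ (so $p+q+2s=1$ and $O_k^B/m_B\in\{p+s,\,q+s\}$), a direct calculation gives
\begin{equation*}
f_B(e) = -\,m_B\log m_B + m_B\,\Phi(p,q,s),\qquad \Phi := p\log p + q\log q + 2s\log s - 2(p{+}s)\log(p{+}s) - 2(q{+}s)\log(q{+}s),
\end{equation*}
and $\Phi$ is precisely the mutual information between the community labels of the two endpoints of a uniformly chosen edge of the population DCSBM: it is nonnegative, vanishes at the centre $e=0$ of the cube, and equals a positive constant $\Phi(c)$ (depending only on $r,\omega,\pi,\xi$) at $U_B(c)$, which is its maximum over the rectangle by Assumption~(2). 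Hence $\max f_B - \min f_B = m_B\,\Phi(c)\asymp n\lambda_n$. Plugging $\|B\|\lesssim\lambda_n$, $\|A-B\|\lesssim\sqrt{\lambda_n}$ and $\max f_B - \min f_B\asymp n\lambda_n$ into Theorem~\ref{Thm:GenMetdCons} yields
\begin{equation*}
\frac1n\|e^*-c\|^2 \;\lesssim\; \frac{n\log n\,(\lambda_n\|U_A-U_B\| + \sqrt{\lambda_n})}{n\lambda_n} \;=\; \log n\,\big(\|U_A-U_B\| + \lambda_n^{-1/2}\big),
\end{equation*}
which is the first claim. For the last sentence, take $U_A$ to be the two leading eigenvectors of $A$: the nonzero eigenvalues of $B$ are $\asymp\lambda_n$ and separated from the rest of the spectrum (of size $O(\lambda_n/n)$) by $\asymp\lambda_n$ provided $P$ is non-degenerate, so Davis--Kahan together with the concentration bound gives $\|U_A-U_B\|\lesssim\|A-\mathbb{E}[A]\|/\lambda_n\lesssim\lambda_n^{-1/2}$ (absorbing the orthogonal rotation into $U_A$). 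Substituting, $\tfrac1n\|e^*-c\|^2\lesssim\log n/\sqrt{\lambda_n}$, and since the fraction of mislabeled nodes is $\tfrac14\cdot\tfrac1n\|e^*-c\|^2$, it is $O(\log n/\sqrt{\lambda_n})$; the hypothesis $\lambda_n\gtrsim\log^2 n$ is exactly what makes this bound informative, and $\lambda_n/\log^2 n\to\infty$ gives consistency.

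\textbf{Main obstacle.} The genuinely hard step is Assumption~(2): establishing (directional) convexity of the two-variable function $f_B$ on the rectangle $U_B[-1,1]^n$, because it is a sign-mixed combination of the non-monotone convex map $x\mapsto x\log x$ composed with quadratic and with affine forms --- whose signs do not obviously combine to convexity --- and the degree parameters $\theta_i$ enter through both the forms and the geometry of the rectangle. Closely tied to it is the lower bound $\max f_B - \min f_B\gtrsim n\lambda_n$: once convexity pins the maximum to the corner $U_B(c)$, this reduces to $\Phi(c)>0$, i.e. that the population DCSBM has genuine community structure, which is exactly where non-degeneracy of $P$ is needed. The remaining ingredients --- the spectral-norm bounds on $B$ and $A-\mathbb{E}[A]$, Davis--Kahan, and the reduction to Theorem~\ref{Thm:GenMetdCons} --- are routine.
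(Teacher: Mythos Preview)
Your overall plan---verify Assumptions~(1)--(4), invoke Theorem~\ref{Thm:GenMetdCons}, then plug in concentration and Davis--Kahan---matches the paper's. Your observation that in the basis of normalized $\theta$-weighted community indicators the projection $U_B[-1,1]^n$ is an \emph{exact} axis-aligned rectangle with corners $U_B(\pm\mathbf{1}),U_B(\pm c)$ is in fact cleaner than what the paper does: the paper works in the eigenvector basis of $\bar A=\mathbb{E}[A\mid\theta]$, where the projection is only \emph{approximately} a parallelogram (Lemma~\ref{Lem:DCBMCubeProj}), and must carry the $O(1)$ approximation error through the argument. Your mutual-information reading of $\Phi$ is likewise a nicer route to $\max f_B-\min f_B\asymp n\lambda_n$ than the paper's direct computation.

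The genuine gap is precisely the step you flag as the main obstacle but do not carry out. The paper's proof states explicitly that $\bar Q_{DC}$ \emph{may not be convex} on $U_B[-1,1]^n$, so ``one checks by computing second derivatives that it is convex'' does not go through. What the paper actually proves (Lemma~\ref{Lem:DCBMLogLikProp}) is weaker and more structured: in an $(\alpha,\beta)$ parametrization chosen so that $\bar O_1,\bar O_2$ depend on $\alpha$ alone, one has (a) $\bar Q(\alpha,0)$ constant, (b) $\partial_\beta^2\bar Q\ge 0$, and (c) $\bar Q$ convex when restricted to the \emph{boundary} of the parallelogram; part~(c) requires a nontrivial third-derivative and sign analysis. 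Because full convexity is unavailable, Theorem~\ref{Thm:GenMetdCons} cannot be applied as a black box: the paper reopens the proof of Lemma~\ref{Lem:MaxFuncEst} and replaces the single use of convexity---the inequality $f_B(y)\le f_B(U_B(\hat e))$---by a geometric argument (since $y$ is near the vertex $U_B(c)$ and the parallelogram angles are bounded away from $0$ and $\pi$, some $s\in\mathcal{E}_A$ has $U_B(s)$ close to $y$, and Lipschitz continuity closes the gap). Your verification of Assumption~(4) also rests on convexity you have not established; the paper derives it from (a)--(c) via a two-step secant argument. Your rectangular basis would let you run the same workaround---arguably more cleanly, since the rectangle is exact---but the boundary-convexity input still has to be proved, and the $(\alpha,\beta)$ coordinates that make it tractable are \emph{not} your axis-aligned ones. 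In short: the skeleton is right and your rectangle is a real simplification, but the hard calculus of Lemma~\ref{Lem:DCBMLogLikProp} and the reworking of Lemma~\ref{Lem:MaxFuncEst} are missing, and without them neither Assumption~(2) nor the paper's substitute is available.
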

Note that assumption ($2$) is difficult to check for $Q_{DC}$ but a weaker version, namely convexity along a certain direction, is sufficient for proving Theorem~\ref{Thm:DCBMCons}. The proof of Theorem~\ref{Thm:DCBMCons} consists of checking assumptions ($1$), ($3$), ($4$), and a weaker version of assumption ($2$). For details, see Appendix~\ref{AppendixB:DCBM}.

\subsection{Maximizing the likelihood of the stochastic block model}\label{SubSec:MaxLogLikBM}
While the regular SBM is a special case of DCSBM when $\theta_i=1$ for all $i$, its likelihood is different and thus maximizing it gives a different solution.
With two communities, \eqref{bmloglik} admits the form
\begin{eqnarray}\label{Eq:LogLikBM}
   Q_{BM}(e) = Q_{DC}(e)+2O_1\log\frac{O_1}{n_1} + 2 O_2\log\frac{O_2}{n_2},\nonumber
\end{eqnarray}
where $n_1=n_1(e)$ and $n_2=n_2(e)$ are the numbers of nodes in two communities and can be written as
\begin{equation}\label{Eq:Defn1n2}
  n_1 = \frac{1}{2}(\mathbf{1}+e)^T \mathbf{1}=\frac{1}{2}(n+e^T\mathbf{1}), \ \
  n_2 = \frac{1}{2}(\mathbf{1}-e)^T \mathbf{1}=\frac{1}{2}(n-e^T\mathbf{1}).
\end{equation}
\begin{theorem}\label{Thm:BMCons}
Let $A$ be the adjacency matrix generated from the SBM with $\lambda_n$ growing at least as $\log^2 n$ as $n\rightarrow\infty$.
Let $U_A$ be an approximation of $U_{\mathbb{E}[A]}$, and $e^*$  the label vector defined by \eqref{Eq:GenMethOutp} with $f_A=Q_{BM}$.
Then for any $\delta\in(0,1)$, there exists a constant $M=M(r,\omega,\pi,\xi,\delta)>0$ such that with probability at least $1-n^{-\delta}$, we have
$$\frac{1}{n} \|c-e^*\|^2\leq M  \log n  \left( \lambda_n^{-1/2} + \|U_A-U_{\mathbb{E}[A]}\| \right).$$
In particular, if $U_A$ is a matrix whose row vectors are leading eignvectors of $A$, then the fraction of mis-clustered nodes is bounded by $M\log n/\sqrt{\lambda_n}$.
\end{theorem}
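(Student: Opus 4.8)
The plan is to mirror the proof of Theorem~\ref{Thm:DCBMCons}: verify Assumptions (1), (3), (4) and a one-directional weakening of Assumption (2) for $f_B=Q_{BM}$ evaluated at $B=\mathbb{E}[A]$, apply Theorem~\ref{Thm:GenMetdCons}, and then substitute the operator-norm bound $\|A-\mathbb{E}[A]\|\lesssim\sqrt{\lambda_n}$ (Lemma~\ref{Lem:OlivRest}) together with $\|\mathbb{E}[A]\|\asymp\lambda_n$ and, in the special case, Davis--Kahan. Because $Q_{BM}=Q_{DC}+2O_1\log(O_1/n_1)+2O_2\log(O_2/n_2)$ by \eqref{Eq:LogLikBM}, and after cancellation $Q_{BM}(e)=O_{11}\log O_{11}+O_{22}\log O_{22}+2O_{12}\log O_{12}-2O_1\log n_1-2O_2\log n_2$, much of the analysis of $Q_{DC}$ carries over; the genuinely new work concerns the terms $-2O_k\log n_k$, which change the shape of $f_B$.

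First I would record that $Q_{BM}$ has the form \eqref{Eq:GenFuncType}. The pieces $O_{11},O_{22},O_{12}$ are the quadratic forms \eqref{eq: Okl explicit form}; $O_k=O_{kk}+O_{12}=\tfrac12(m\pm\mathbf{1}^{\!\top}Ae)$ and $n_k$ in \eqref{Eq:Defn1n2} are affine in $e$; and for the SBM the all-ones vector $\mathbf{1}=z_1+z_2$ lies in the range of $\mathbb{E}[A]=SPS^{\!\top}$, where $z_1,z_2$ are the community indicators, so each of these quantities is a function of $U_Be$ as the framework requires. Assumption (1) is then immediate, since every scalar function appearing is $\pm x\log x$ and $|(x\log x)'|=|\log x+1|\le M_1\log(x+2)$ for $x\ge 0$; it remains only to note that along any chord to $U_B(c)$ and on $\mathcal{E}_B$ the arguments satisfy $O_{kl},O_k=\Theta(n\lambda_n)$ and $n_k=\Theta(n)$ away from the degenerate boundary (with the convention $0\log 0=0$), which is exactly what Lemma~\ref{Lem:MaxFuncEst} needs. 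Assumption (3) is purely geometric and is simpler here than for the DCSBM: the row space of $U_B$ equals $\Span\{z_1,z_2\}=\Span\{\mathbf{1},c\}$, so for $e\in\{-1,1\}^n$ we have $\|U_B(c)-U_B(e)\|\ge |c^{\!\top}(c-e)|/\|c\|=\|c-e\|^2/(2\sqrt n)$, hence Assumption (3) holds with $M_3=2$.

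The substantive step, and the one I expect to be the main obstacle, is Assumption (2) (or its one-directional version) for $f_B=Q_{BM,\mathbb{E}[A]}$. Here $U_B[-1,1]^n$ is a $2$-dimensional polytope, and I would parametrize it by coordinates $(\alpha,\beta)$ proportional to $\mathbf{1}^{\!\top}e/n$ and $c^{\!\top}e/n$, use the block structure of $\mathbb{E}[A]$ to write each of $O_{11},O_{22},O_{12},O_1,O_2,n_1,n_2$ explicitly as a quadratic or affine function of $(\alpha,\beta)$ with coefficients depending only on the fixed parameters $r,\omega,\pi$ and a scale $\lambda_n$, and then show that $Q_{BM,\mathbb{E}[A]}$ has nonnegative second derivative along the segment joining any point of the polytope to the maximizer $U_B(c)$, with strictly positive curvature near $U_B(c)$. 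The delicate point relative to $Q_{DC}$ is that the extra pieces $-2O_1\log n_1-2O_2\log n_2$ are individually concave-looking in $(\alpha,\beta)$, so one must check that the convex $O_{kl}\log O_{kl}$ contributions dominate in the relevant direction; this is where the specific fixed values $r,\omega,\pi$ enter, and where one may have to settle for convexity along a single direction rather than full convexity on the polytope.

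Finally, Assumption (4) follows from the same explicit description: one estimates $\max f_B-\min f_B\asymp n\lambda_n$ (up to constants in $r,\omega,\pi$), computes $\nabla f_B$ at the vertex $U_B(c)$ and checks that it lies strictly inside the outward normal cone there, so that the inward directional derivatives are bounded below by a multiple of $\sqrt n\,\lambda_n\asymp(\max f_B-\min f_B)/\sqrt n$, and then uses convexity to propagate this linear lower bound to all $x\in U_B[-1,1]^n$. Feeding Assumptions (1), (3), (4) into Theorem~\ref{Thm:GenMetdCons} gives $\tfrac1n\|e^*-c\|^2\lesssim n\log n\,(\|\mathbb{E}[A]\|\,\|U_A-U_{\mathbb{E}[A]}\|+\|A-\mathbb{E}[A]\|)/(n\lambda_n)$; since $\|\mathbb{E}[A]\|\asymp\lambda_n$ and, because $\lambda_n\ge\log^2 n$, Lemma~\ref{Lem:OlivRest} gives $\|A-\mathbb{E}[A]\|\lesssim\sqrt{\lambda_n}$ with probability at least $1-n^{-\delta}$ (the source of the stated probability), this reduces to $\tfrac1n\|c-e^*\|^2\lesssim\log n\,(\lambda_n^{-1/2}+\|U_A-U_{\mathbb{E}[A]}\|)$. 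When $U_A$ consists of the leading eigenvectors of $A$, Davis--Kahan with eigengap $\asymp\lambda_n$ yields $\|U_A-U_{\mathbb{E}[A]}\|\lesssim\lambda_n^{-1/2}$, so the fraction of mis-clustered nodes is $\lesssim\log n/\sqrt{\lambda_n}$, as claimed.
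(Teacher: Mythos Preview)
Your overall strategy (follow the proof of Theorem~\ref{Thm:DCBMCons}, replace full convexity by a one-directional version, then plug in Lemma~\ref{Lem:OlivRest}) matches the paper's. But there is a genuine gap in the first step: your claim that $Q_{BM}$ has the form \eqref{Eq:GenFuncType}--\eqref{Eq:QuadrFun} is false, and the paper explicitly flags this. The offending terms are $-2O_k\log n_k$. You correctly note that $n_k=\tfrac12(n\pm\mathbf{1}^\top e)$ is affine in $e$ and, for the SBM, a function of $U_Be$; but $n_k$ is \emph{not} a quadratic form $(e+s_{j1})^\top A(e+s_{j2})$ in $A$, and $O_k\log n_k$ is a product of a scalar depending on $A$ with a scalar not depending on $A$, so it cannot be written as $g_j(h_{A,j}(e))$ in the required sense. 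Consequently you cannot invoke Lemma~\ref{Lem:MaxFuncEst} (and hence Theorem~\ref{Thm:GenMetdCons}) off the shelf, and your final chain of inequalities is not yet justified.

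The paper patches this by proving, as a separate statement (Lemma~\ref{Lem:BMAnalLem2}), the analogue of Lemma~\ref{Lem:MaxFuncEst} for $Q_{BM}$: it splits $Q_{BM}=Q_{DC}+2G$ with $G=\bar O_1\log(\bar O_1/n_1)+\bar O_2\log(\bar O_2/n_2)$, handles the $Q_{DC}$ part via \eqref{Ineq:fAandfEA} as before, and bounds the $G$-part directly using the closed form of $n_i$ on the projection (Lemma~\ref{Lem:Formn1n2}) together with $|\log(1+t)|\le 2|t|$. Your proposal also treats the weakened Assumption~(2) differently from the paper: you aim for convexity along every segment through $U_B(c)$, whereas the paper (Lemma~\ref{Lem:BMLogLikProp}) establishes convexity in the $\beta$-direction together with convexity on the boundary of the parallelogram, and it does so for $G$ separately rather than for your decomposition $-2O_k\log n_k$. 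That decomposition makes the convexity harder to see (as you yourself anticipate, those pieces look concave), so if you want to carry out the calculation you should switch to the paper's splitting. Your verification of Assumption~(3) is fine and in fact cleaner than the paper's; Assumption~(4) in the paper is obtained from parts~(d)--(e) of Lemma~\ref{Lem:BMLogLikProp}, not from a gradient/normal-cone argument as you sketch.
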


Note that $Q_{BM}$ does not have the exact form of \eqref{Eq:GenFuncType} but a small modification shows that Lemma~\ref{Lem:MaxFuncEst} still holds for $Q_{BM}$. Also, assumption ($2$) is difficult to check for $Q_{BM}$ but again a weaker condition of convexity along a certain direction is sufficient for proving Theorem~\ref{Thm:BMCons}. The proof of Theorem~\ref{Thm:BMCons} consists of showing the analog of Lemma~\ref{Lem:MaxFuncEst}, checking assumptions ($3$), ($4$), and a weaker version of assumption ($2$). For details, see Appendix~\ref{AppendixB:BM}.

\subsection{Maximizing the Newman--Girvan modularity}\label{Subsec:NGMod}
When a network has two communities, up to a constant factor  the modularity  \eqref{eq:NGmod} takes the form
\begin{equation*}
  Q_{NG}(e) = O_{11} + O_{22}-\frac{O_1^2+O_2^2}{O_1+O_2}
   = \frac{2O_1 O_2}{O_1+O_2} - 2O_{12}.
\end{equation*}
Again,  $Q_{NG}$ does not have the exact form \eqref{Eq:GenFuncType}, but with a small modification, the argument used for proving Lemma~\ref{Lem:MaxFuncEst} and Theorem~\ref{Thm:GenMetdCons} still holds for $Q_{NG}$ under the regular SBM.

\begin{theorem}\label{Thm:NGCons}
Let $A$ be the adjacency matrix generated from the SBM with $\lambda_n$ growing at least as $\log n$ as $n\rightarrow\infty$.
Let $U_A$ be an approximation of $U_{\mathbb{E}[A]}$,  and $e^*$  the label vector defined by \eqref{Eq:GenMethOutp} with $f_A=Q_{NG}$.
Then for any $\delta\in(0,1)$, there exists a constant $M=M(r,\omega,\pi,\xi,\delta)>0$ such that with probability at least $1-n^{-\delta}$, we have
$$\frac{1}{n} \|c-e^*\|^2\leq M  \left( \lambda_n^{-1/2} + \|U_A-U_{\mathbb{E}[A]}\| \right).$$
In particular, if $U_A$ is a matrix whose row vectors are leading eignvectors of $A$, then the fraction of mis-clustered nodes is bounded by $M/\sqrt{\lambda_n}$.
\end{theorem}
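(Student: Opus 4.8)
The plan is to run the general machinery of Section~\ref{Sec:OptViaLRApp} with $A$ the SBM adjacency matrix, $B=\mathbb{E}[A]$ (which has rank $2$), and $f_A=Q_{NG}$. First I would recast $Q_{NG}$ in the required form. Since $O_1+O_2=m=\mathbf{1}^TA\mathbf{1}$ does not depend on $e$,
$$Q_{NG}(e)=\frac{2O_1O_2}{m}-2O_{12}=2O_1-\frac{2O_1^2}{m}-2O_{12},$$
and writing $O_1=O_{11}+O_{12}$ exhibits $Q_{NG}$ as $\sum_j g_j(h_j)$ where each $h_j$ is a bounded linear combination of quadratic forms of the type \eqref{Eq:QuadrFun} and $g_1(t)=2t$, $g_2(t)=-2t$, $g_3(t)=-2t^2/m$. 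Because $O_1=\Theta(n\lambda_n)$ and $m=\Theta(n\lambda_n)$ with high probability, $|g_3'(t)|=4|t|/m$ is bounded by an absolute constant on the relevant range, so Assumption~(1) holds with $M_1=O(1)$ and, crucially, \emph{no} logarithmic factor --- this is exactly why the bound in Theorem~\ref{Thm:NGCons} carries no $\log n$ and why $\lambda_n\gtrsim\log n$ suffices. I would then re-run the proof of Lemma~\ref{Lem:MaxFuncEst} for $Q_{NG}$; the only new ingredient is that the objective is divided by the scalar $m$, but $|m_A-m_B|=|\mathbf{1}^T(A-B)\mathbf{1}|\le n\|A-B\|$ and $m_B=\Theta(n\lambda_n)$, so replacing $m_A$ by $m_B$ costs only $O(n\|A-B\|)$ in $Q_{NG}$, and the argument goes through to give
$$Q_{NG}^{(T)}(c)-Q_{NG}^{(T)}(e^*)\le M_2\,n\big(\|B\|\,\|U_A-U_B\|+\|A-B\|\big),\qquad T\in\{A,B\}.$$

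Next I would verify Assumptions~(3) and~(4) together with the weakened form of~(2). For~(3): the row space of $U_B$ is, up to an $O(1/n)$ perturbation from the zeroed diagonal of $\mathbb{E}[A]$, the plane $\Span\{\mathbf{1},z\}$, where $z\in\{-1,1\}^n$ is the community-indicator vector; moreover the maximizer $c$ of $f_B$ over $\{-1,1\}^n$ equals $z$ (consistency of the population modularity for two communities under the standing conditions on $(r,\omega,\pi)$). Hence $U_B(c)-U_B(e)$ has a coordinate proportional to $z^T(c-e)/\sqrt n=(n-c^Te)/\sqrt n=2|S|/\sqrt n$ with $S=\{i:e_i\ne c_i\}$, while $\|c-e\|^2=4|S|$, giving $\|c-e\|^2\le M_3\sqrt n\,\|U_B(c)-U_B(e)\|$ with $M_3$ depending only on $\pi$. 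For the weakened~(2) and for~(4), I would pass to the coordinates $x=U_Be\in\mathbb{R}^2$ on the polygon $U_B[-1,1]^n$: there $O_1$ is affine in $x$ and $O_{12}=\tfrac14\mathbf{1}^TB\mathbf{1}-\tfrac14e^TBe$ is a concave quadratic in $x$ (the nonzero eigenvalues of $B$ are positive), so $Q_{NG}$ restricted to the polygon is a quadratic $f_B(x)=x^THx+b^Tx+\mathrm{const}$; an explicit computation of the $2\times2$ matrix $H$ from $P$, $\pi$, $\xi$ shows $f_B$ is convex along the direction joining the center of the polygon to $U_B(c)$ (so the maximum sits at that extreme point), and the same computation yields $\max f_B-\min f_B=\Theta(n\lambda_n)$ together with a lower bound of order $\lambda_n\sqrt n$ on the slope $\big(f_B(U_B(c))-f_B(x)\big)/\|U_B(c)-x\|$ for every $x$ in the polygon, using $\diam(U_B[-1,1]^n)\le2\sqrt n$; this is Assumption~(4).

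With (1), (3), (4) and weak-(2) in hand, the argument of Theorem~\ref{Thm:GenMetdCons} gives
$$\frac1n\|e^*-c\|^2\le\frac{M_5\,n\big(\|B\|\,\|U_A-U_B\|+\|A-B\|\big)}{\max f_B-\min f_B}.$$
Plugging in the two-community SBM estimates $\|B\|=\|\mathbb{E}[A]\|=O(\lambda_n)$ and $\max f_B-\min f_B=\Theta(n\lambda_n)$ reduces this to $\lesssim\|U_A-U_B\|+\|A-B\|/\lambda_n$. By Lemma~\ref{Lem:OlivRest}, when $\lambda_n\gtrsim\log n$ we have $\|A-B\|\le C\sqrt{\lambda_n}$ with probability at least $1-n^{-\delta}$, so $\|A-B\|/\lambda_n\le C\lambda_n^{-1/2}$ and the stated bound $\frac1n\|c-e^*\|^2\le M(\lambda_n^{-1/2}+\|U_A-U_B\|)$ follows. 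For the final ``in particular,'' take $U_A$ to be the leading eigenvectors of $A$; the eigengap of $B$ separating its rank-$2$ range from $0$ is of order $\lambda_n$, so the Davis--Kahan theorem (after aligning $U_A$ by the optimal rotation, which does not change $e^*$) gives $\|U_A-U_B\|\lesssim\|A-B\|/\lambda_n\le C\lambda_n^{-1/2}$, and the whole bound becomes $O(\lambda_n^{-1/2})$.

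The step I expect to be the main obstacle is the explicit verification of the weakened Assumption~(2) and of Assumption~(4): one must compute the $2\times2$ Hessian of $Q_{NG}$ restricted to $U_B[-1,1]^n$ in terms of the block-model parameters and show both that $f_B$ is convex in the relevant direction (so the maximum is attained at the corner $U_B(c)$ and not in a flat interior region) and that this maximum is quantitatively sharp, with slope lower bound of the right order. This is where the non-degeneracy of $(r,\omega,\pi,\xi)$ --- essentially that the two communities are statistically distinguishable and neither is negligibly small --- is genuinely used, and where an accidental degeneracy of $P$ would break the argument; everything else is either the general machinery of Section~\ref{Sec:OptViaLRApp} or routine block-model concentration.
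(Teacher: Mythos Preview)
Your proposal is correct and follows the same overall strategy as the paper: verify a Lipschitz condition replacing Assumption~(1), check Assumptions~(2)--(4), and apply the general Theorem~\ref{Thm:GenMetdCons} together with Lemma~\ref{Lem:OlivRest}. The difference lies in execution of the step you flag as the main obstacle. The paper establishes the Lipschitz property more directly by observing that $h(x,y)=xy/(x+y)$ has gradient bounded by $\sqrt{2}$, so $\bar Q_{NG}$ is Lipschitz in $(\bar O_1,\bar O_2,\bar O_{12})$; this avoids your detour through $g_3(t)=-2t^2/m$ and the separate treatment of $m_A$ versus $m_B$. More importantly, instead of computing a generic $2\times 2$ Hessian, the paper uses the specific $(\alpha,\beta)$ parametrization introduced before Lemma~\ref{Lem:DCBMLogLikProp}, where $v$ is chosen perpendicular to $\mu=(\rho_1 t_1,\rho_2 t_2)$; in these coordinates $\bar O_1=2(1+\alpha)a$ and $\bar O_2=2(1-\alpha)a$ depend only on $\alpha$, and a one-line calculation gives $\bar Q_{NG}=2b\beta^2$. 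This makes Assumption~(2) (full convexity, not just the weakened version), the location of the maximum at $\pm U_{\mathbb{E}[A]}(c)$, Assumption~(4), and the order $\max\bar Q-\min\bar Q=\Theta(n\lambda_n)$ all immediate. Your Hessian computation would reach the same conclusions but with more work; the paper's choice of coordinates collapses the obstacle you anticipated into a triviality.
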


It is easy to see that $Q_{NG}$ is Lipschitz with respect to $O_1$, $O_2$, and $O_{12}$, which is stronger than assumption ($1$) and ensures the proof of Lemma~\ref{Lem:MaxFuncEst} goes through. The proof of Theorem~\ref{Thm:NGCons} consists of checking assumptions ($2$), ($3$), ($4$), and the Lipschitz condition for $Q_{NG}$. For details, see 
Appendix~\ref{AppendixB:NG}.

\subsection{Maximizing the community extraction criterion}\label{SubSec:MaxComExtrCrn}
Identifying the community $V$ to be extracted with a label vector $e$, the criterion \eqref{ComExtr} can be written as
$$Q_{EX}(e) = \frac{n_2}{n_1} O_{11} - O_{12},$$
where $n_1, n_2$ are defined by \eqref{Eq:Defn1n2}.   Once again $Q_{EX}$  does not have the exact form \eqref{Eq:GenFuncType}, but with small modifications of the proof, Lemma~\ref{Lem:MaxFuncEst} and Theorem~\ref{Thm:GenMetdCons} still hold for $Q_{EX}$.

\begin{theorem}\label{Thm:ComExtrCons}
Let $A$ be the adjacency matrix generated from the SBM with the probability matrix \eqref{eq:probmatrix}, $\omega = r$,  and $\lambda_n$ growing at least as $\log n$ as $n\rightarrow\infty$.
Let $U_A$ be an approximation of $U_{\mathbb{E}[A]}$,  and $e^*$  the label vector defined by \eqref{Eq:GenMethOutp} with $f_A=Q_{EX}$.
Then for any $\delta\in(0,1)$, there exists a constant $M=M(r,\omega,\pi,\xi,\delta)>0$ such that with probability at least $1-n^{-\delta}$, we have
$$\frac{1}{n} \|c-e^*\|^2\leq M  \left( \lambda_n^{-1/2} + \|U_A-U_{\mathbb{E}[A]}\| \right).$$
In particular, if $U_A$ is a matrix whose row vectors are leading eignvectors of $A$, then the fraction of mis-clustered nodes is bounded by $M/\sqrt{\lambda_n}$.
\end{theorem}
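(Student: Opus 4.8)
The plan is to follow the same template that works for Theorems~\ref{Thm:DCBMCons}--\ref{Thm:NGCons}: reduce the analysis of $Q_{EX}$ to a function of the projected variables and then verify Assumptions (1)--(4) (or the stated weaker substitutes) so that Theorem~\ref{Thm:GenMetdCons} can be invoked. Since $Q_{EX}(e) = \frac{n_2}{n_1}O_{11} - O_{12}$, the key observation is that under the SBM with $B = \mathbb{E}[A]$ of rank $m=2$, all of $O_{11}$, $O_{12}$ and $n_1, n_2$ are quadratic/linear forms in $e$ that depend on $e$ only through $U_B e$ (using \eqref{eq: Okl explicit form} and \eqref{Eq:Defn1n2}). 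So first I would write $Q_{EX}$ explicitly as a function on $U_B[-1,1]^n$ and, exploiting the special structure $\omega = r$ in \eqref{eq:probmatrix}, compute the population maximizer: with $\omega=r$ the matrix $P$ has $P_{11}=\lambda_n$ and all other entries $\lambda_n r$, so $\mathbb{E}[A]$ is a rank-two (indeed, essentially rank-two) matrix whose leading eigenvectors are spanned by $\mathbf{1}$ and the indicator of community~1; a direct computation should show that $f_B = Q_{EX}$ with $B=\mathbb{E}[A]$ is maximized exactly at $c$, the true label vector, and that $\max f_B - \min f_B$ is of order $n^2 \lambda_n$ (this is needed to make the right-hand side of Theorem~\ref{Thm:GenMetdCons} be $o(n)$, since $\|B\|\asymp n\lambda_n$, $\|A-B\|\lesssim \sqrt{\lambda_n}$, and $\|U_A-U_B\|$ is controlled by Davis--Kahan).

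Next I would check the hypotheses one at a time. For the Lipschitz-type Assumption~(1): the only subtlety is the ratio $n_2/n_1$, which is not globally Lipschitz because $n_1$ can vanish; as the remarks preceding the theorem indicate, one handles this by a small modification of the proof of Lemma~\ref{Lem:MaxFuncEst}, restricting attention to the regime where $n_1$ is bounded below by a constant multiple of $n$ --- which holds near the population optimum $c$ since the true community~1 has size $\bar n_1 = n\pi_1$ --- and showing $Q_{EX}$ is Lipschitz in $O_{11}$, $O_{12}$, $n_1$ there with constant $O(n)$. Assumption~(2) (convexity of $f_B$) is replaced, as stated, by convexity along a suitable direction: I would parametrize the one-dimensional family $e(t)$ interpolating along the relevant coordinate of $U_B[-1,1]^n$ and check that $t \mapsto f_B(e(t))$ is convex (a second-derivative computation on the rational-plus-quadratic expression), which suffices to guarantee $f_B$ attains its max at an extreme point $U_B(c)$. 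Assumptions~(3) and (4) are the geometric conditions: (3) says distinct label vectors have well-separated projections, which follows because the two rows of $U_B$ span $\mathbf{1}$ and the community-1 indicator, so $U_B e$ determines both $e^T\mathbf{1}$ and $e$ restricted suitably --- giving $\|c-e\|^2 \lesssim \sqrt{n}\,\|U_B(c)-U_B(e)\|$; (4) is the lower bound on the slope of secant lines from $U_B(c)$, which I would get from the explicit form of $f_B$ together with the strict concavity/monotonicity away from the optimum, matching the $(\max f_B - \min f_B)/(\sqrt n)$ scale.

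Finally, I would assemble the pieces: with Assumptions (1),(3),(4) and the weaker (2) verified, the modified Lemma~\ref{Lem:MaxFuncEst} and Theorem~\ref{Thm:GenMetdCons} give
$$\frac{1}{n}\|e^*-c\|^2 \leq \frac{M_5 n\log n(\|B\|\cdot\|U_A-U_B\| + \|A-B\|)}{\max f_B - \min f_B},$$
and substituting $\|B\|\asymp n\lambda_n$, $\max f_B - \min f_B \asymp n^2\lambda_n$, the concentration bound $\|A-B\|\lesssim \sqrt{\lambda_n}$ (Lemma~\ref{Lem:OlivRest}, valid once $\lambda_n \gtrsim \log n$) holding with probability $1-n^{-\delta}$, and the Davis--Kahan bound $\|U_A - U_B\| \lesssim \|A-B\|/(\text{spectral gap of }B) \lesssim \sqrt{\lambda_n}/(n\lambda_n)$ when $U_A$ is the leading eigenvectors of $A$, collapses the right-hand side to $O(1/\sqrt{\lambda_n})$, which is the claimed bound. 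I expect the main obstacle to be Assumption~(4) together with the non-Lipschitz ratio in Assumption~(1): both require carefully confining the argument to a neighborhood of the population optimum where $n_1 \gtrsim n$, and verifying that the extreme points of $U_A[-1,1]^n$ that the algorithm actually searches over land in (or near) that neighborhood --- this is where the lack of symmetry of $Q_{EX}$ (unlike $Q_{NG}$, $Q_{BM}$) makes the bookkeeping most delicate, and it is presumably why the hypothesis $\omega = r$ is imposed, so that the "outside world" is structureless and the population landscape of $Q_{EX}$ is unimodal.
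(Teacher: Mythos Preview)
Your overall plan is right, but the anticipated ``main obstacle'' --- the non-Lipschitz ratio $n_2/n_1$ and the need to confine the analysis to a neighborhood where $n_1\gtrsim n$ --- is a red herring, and resolving it the way you propose would be both unnecessary and hard to close. The paper's key observation is the algebraic identity $\|\mathbf{1}+e\|^2 = 2(\mathbf{1}+e)^T\mathbf{1} = 4n_1$ for $e\in\{-1,1\}^n$. This makes the dangerous term harmless \emph{uniformly}:
\[
\left|\tfrac{n_2}{n_1}O_{11}-\tfrac{n_2}{n_1}\bar O_{11}\right|
=\tfrac{n_2}{n_1}\bigl|(\mathbf{1}+e)^T(A-\mathbb{E}A)(\mathbf{1}+e)\bigr|
\le \tfrac{n_2}{n_1}\,\|\mathbf{1}+e\|^2\,\|A-\mathbb{E}A\|
=4n_2\,\|A-\mathbb{E}A\|\le 4n\|A-\mathbb{E}A\|.
\]
So $|Q_{EX}(e)-\bar Q_{EX}(e)|\le Mn\|A-\mathbb{E}A\|$ for every label vector, with no restriction to a neighborhood and, crucially, no $\log n$ factor (this is why the theorem has no $\log n$, unlike Theorems~\ref{Thm:DCBMCons} and~\ref{Thm:BMCons}). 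Your neighborhood approach would still have to control $Q_{EX}$ at extreme points of $U_A[-1,1]^n$ where $n_1$ is small, since those are searched by the algorithm; the cancellation removes that issue entirely.

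A second, smaller divergence: the paper shows $\bar Q_{EX}$ is genuinely convex on $U_B[-1,1]^n$ (not merely convex along a direction), by a direct Hessian computation after the substitution $z=s\beta$. So Assumption~(2) holds as stated, and the geometry then gives $\bar Q(x_{\max})-\bar Q(x)\ge \bar Q(x_{\max})\cdot \|x_{\max}-x\|/(2\sqrt n)$, which is Assumption~(4). Finally, your orders of magnitude are off by a factor of $n$ throughout: with $\lambda_n$ the expected degree one has $\|B\|\asymp\lambda_n$ (not $n\lambda_n$), the spectral gap of $B$ is $\asymp\lambda_n$, and $\max f_B-\min f_B=\bar Q_{EX}(x_c)\asymp n\lambda_n$ (not $n^2\lambda_n$). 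These errors happen to roughly cancel in your final line, but getting them right is what makes the bound come out as $M(\lambda_n^{-1/2}+\|U_A-U_{\mathbb{E}A}\|)$ without the extra $\log n$.
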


The proof of Theorem~\ref{Thm:ComExtrCons} consists of verifying a version of Lemma~\ref{Lem:MaxFuncEst} and assumptions ($2$), ($3$), and ($4$), and is included in 
Appendix~\ref{AppendixB:EXTR}.

\subsection{An alternative to exhaustive search}\label{Sec:LabelEst}
While the projected feasible space is much smaller than the original space, we may still want to avoid the exhaustive search for $e^*$ in \eqref{Eq:GenMethOutp}.   The geometry of the projection of the cube can be used to derive an approximation to $e^*$ that can be computed without a search.

\begin{figure}[!ht]
  \centering
  \includegraphics[trim=50 30 40 20,clip,width=0.9\textwidth]{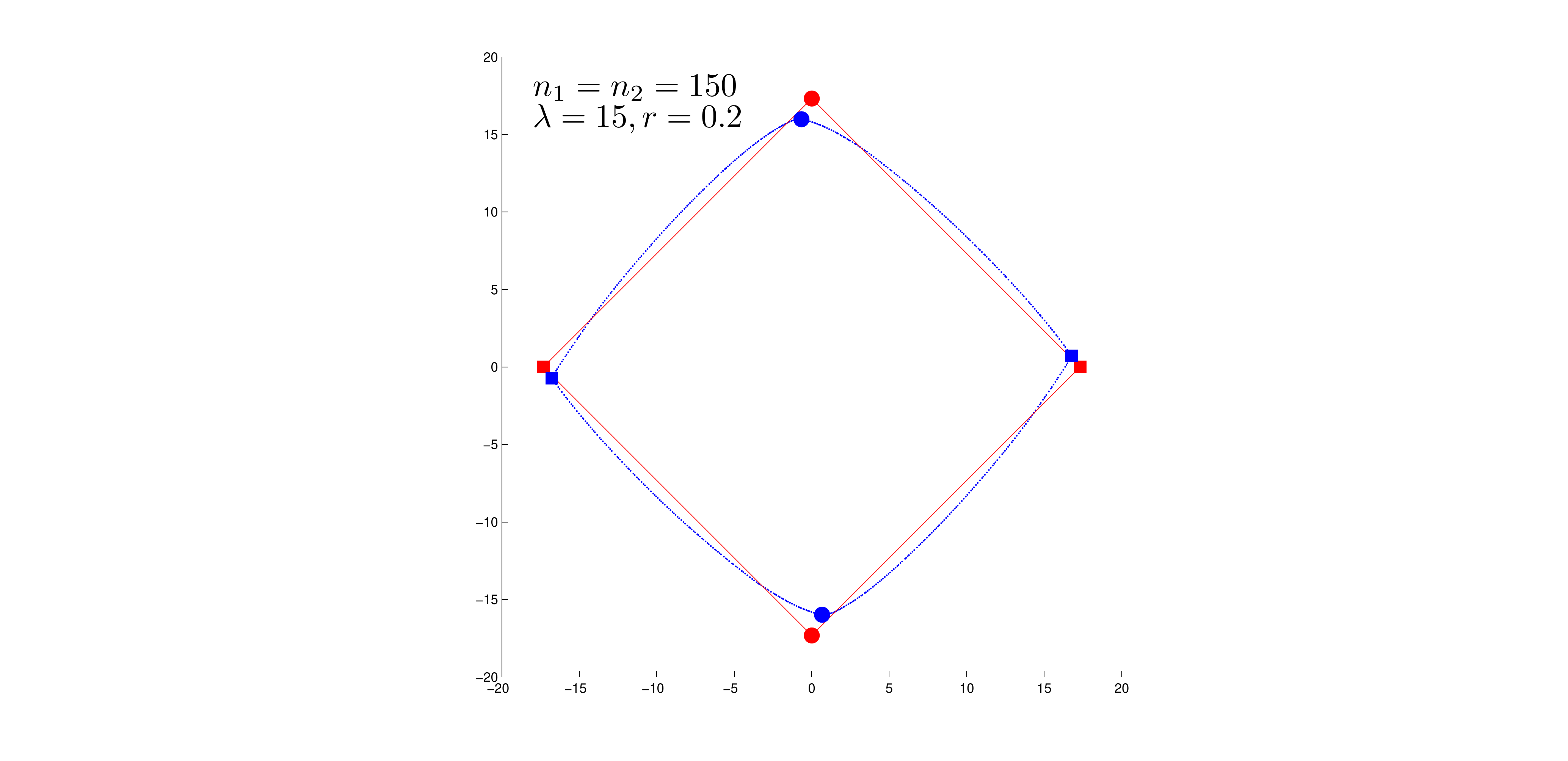}\\
  \caption{The projection of the cube $[-1,1]^n$ onto two-dimensional subspace. Blue corresponds to the projection onto eigenvectors of $A$, and red onto the eigenvectors of $\mathbb{E}[A]$. The red contour is the boundary of $U_{\mathbb{E}[A]}[-1,1]^n$; the blue dots are the extreme points of $U_A[-1,1]^n$. Circles (at the corners) are $\pm$ projections of the true label vector;  squares are $\pm$ projections of the vector of all 1s.
}
  \label{Fig:CubeProj}
\end{figure}

Recall that $U_{\mathbb{E}[A]}$ is an $2\times n$ matrix whose rows are the leading eigenvectors of $\mathbb{E}[A]$, and $U_A$ approximates $U_{\mathbb{E}}[A]$. For SBM,  it is easy to see that $U_{\mathbb{E}[A]}[-1,1]^n$, the projection of the unit cube onto the two leading eigenvectors of $U_{\mathbb{E}[A]}$, is a parallelogram with vertices $\{\pm U_{\mathbb{E}[A]}\mathbf{1},\pm U_{\mathbb{E}[A]} c\}$, where $\mathbf{1} \in \mathbb{R}^n$ is a vector of all 1s  (see Lemma~\ref{Lem:BMCubeProj} in the supplement).   We can then expect the projection $U_A[-1,1]^n$ to look somewhat similar -- see the illustration in Figure~\ref{Fig:CubeProj}.
Note that $\pm U_{\mathbb{E}[A]} c$ are the farthest points from the line connecting the other two vertices, $U_{\mathbb{E}[A]}\mathbf{1}$ and $-U_{\mathbb{E}[A]}\mathbf{1}$.
Motivated by this observation, we can estimate $c$ by
\begin{eqnarray}\label{Eq:LabelEst}
  \hat{c} &=& \arg\max\left\{\langle U_A e,(U_A\mathbf{1})^{\perp}\rangle: e\in\{-1,1\}^n\right\} \\
  &=& \mathrm{sign}(u_1^T\mathbf{1}u_2 - u_2^T\mathbf{1} u_1),\nonumber
\end{eqnarray}
where $U_A=(u_1,u_2)^T$ and $(U_A\mathbf{1})^{\perp}$ is the unit vector perpendicular to $U_A\mathbf{1}$.

Note that $\hat{c}$ depends on $U_A$ only, not on the objective function, a property it shares with spectral clustering.
However,  $\hat{c}$ provides a deterministic estimate of the labels based on a geometric property of $U_A$,
while spectral clustering uses $K$-means, which is iterative and typically depends on a random initialization.
Using this geometric approximation allows us to avoid both the exhaustive search and the iterations and initialization of $K$-means, although it may not always be as accurate as the search.
When the community detection problem is relatively easy, we expect the geometric approximation to perform well, but when the problem becomes harder, the exhaustive search should provide better results.   This intuition is confirmed by simulations in Section~\ref{Sec:Simulation}.
Theorem~\ref{Thm:EstErrorBound} shows that $\hat{c}$ is a consistent estimator. The proof is given in Appendix~\ref{AppendixB:FormEst}.

\begin{theorem}\label{Thm:EstErrorBound}
Let $A$ be an adjacency matrix generated from the SBM with $\lambda_n$ growing at least as $\log n$ as $n\rightarrow\infty$.
Let $U_A$ be an approximation to $U_{\mathbb{E}[A]}$.
Then for any $\delta\in(0,1)$ there exists $M=M(r,\omega,\pi,\xi,\delta)>0$ such that with probability at least $1-n^{-\delta}$, we have
$$\frac{1}{n} \|\hat{c}-c\|^2 \leq M \|U_A - U_{\mathbb{E}[A]}\|^2.$$
In particular, if $U_A$ is a matrix whose row vectors are leading eignvectors of $A$,
then the fraction of mis-clustered nodes is bounded by $M/\lambda_n$.
\end{theorem}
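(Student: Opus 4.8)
The plan is to reduce the estimator to an explicit per-coordinate sign and then run a perturbation argument against the population version. Write $U_B := U_{\mathbb{E}[A]}$ and let $U_A$ be its data approximation, both $2\times n$ with orthonormal rows. By the second line of \eqref{Eq:LabelEst} we have $\hat c = \sign(y_A)$ where $y_A := U_A^{\tran}(U_A\mathbf 1)^{\perp}$, and the natural population counterpart is $y_B := U_B^{\tran}(U_B\mathbf 1)^{\perp}$. The first step is to describe $y_B$ exactly. Under the SBM the row space of $\mathbb{E}[A]$ equals $\Span(\mathbf 1_{V_1},\mathbf 1_{V_2}) = \Span(\mathbf 1, c)$ (modulo the negligible no-self-loop correction discussed below), so $y_B$, being in this space, is constant on each community, say $y_{B,i} = a$ on $V_1$ and $y_{B,i} = b$ on $V_2$. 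Since $\langle y_B,\mathbf 1\rangle = \langle (U_B\mathbf 1)^{\perp}, U_B\mathbf 1\rangle = 0$, we get $a\bar n_1 + b\bar n_2 = 0$, so $a$ and $b$ have opposite signs and $\sign(y_B) = \pm c$; and since $U_B^{\tran}$ is an isometry, $\|y_B\| = \|(U_B\mathbf 1)^{\perp}\| = 1$, which together with $a/b = -\pi_2/\pi_1$ pins down $|a|$ and $|b|$ and yields a uniform lower bound $\min_i |y_{B,i}| \ge \gamma/\sqrt n$ for a constant $\gamma = \gamma(\pi) > 0$. This is essentially the content of Lemma~\ref{Lem:BMCubeProj}: among the extreme points of $U_B[-1,1]^n$, the points $\pm U_B c$ are the ones farthest from the segment joining $\pm U_B\mathbf 1$.

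The second step is the perturbation bound $\|y_A - y_B\| \le C\|U_A - U_B\|$ with an \emph{absolute} constant $C$ --- this dimension-free constant is essential, since the per-coordinate gap above is only of order $n^{-1/2}$, so an honest bound of the form $C = O(\sqrt n)$ would be useless. Split $y_A - y_B = (U_A - U_B)^{\tran}(U_A\mathbf 1)^{\perp} + U_B^{\tran}\big((U_A\mathbf 1)^{\perp} - (U_B\mathbf 1)^{\perp}\big)$. The first term has norm at most $\|U_A - U_B\|$. For the second, $U_B^{\tran}$ is an isometry and $v\mapsto v^{\perp}$ is an isometry on $\mathbb{R}^2$, so its norm equals the distance between the unit vectors $U_A\mathbf 1/\|U_A\mathbf 1\|$ and $U_B\mathbf 1/\|U_B\mathbf 1\|$; using $\|U_B\mathbf 1\| = \|\mathbf 1\| = \sqrt n$, $\|U_A\mathbf 1\| \ge \sqrt n(1-\|U_A-U_B\|)$, and $\|U_A\mathbf 1 - U_B\mathbf 1\| \le \sqrt n\,\|U_A - U_B\|$, this is $O(\|U_A - U_B\|)$ once $\|U_A-U_B\|$ is below an absolute threshold (the theorem being trivial otherwise). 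Here one first uses the invariance of $\hat c$ under $U_A\mapsto U_A R$ to take $U_A$ aligned with $U_B$, so that $\|U_A - U_B\|$ is comparable to the Davis--Kahan angle between the two subspaces.

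The third step assembles these. If $\hat c_i \neq c_i$ then $\sign(y_{A,i})\neq\sign(y_{B,i})$, hence $|y_{B,i}| \le |y_{A,i} - y_{B,i}|$; summing over such $i$,
\[
\tfrac14\|\hat c - c\|^2 = \#\{i:\hat c_i\neq c_i\} \le \frac{\|y_A - y_B\|^2}{\min_i y_{B,i}^2} \le \frac{n\,C^2\,\|U_A - U_B\|^2}{\gamma^2},
\]
which gives $\tfrac1n\|\hat c - c\|^2 \le M\|U_A - U_{\mathbb{E}[A]}\|^2$ with $M = 4C^2/\gamma^2$. This inequality is deterministic; the hypothesis $\lambda_n\gtrsim\log n$ and the probability $1 - n^{-\delta}$ enter only in the last sentence of the theorem, where one combines $\|A - \mathbb{E}[A]\| = O(\sqrt{\lambda_n})$ (Lemma~\ref{Lem:OlivRest}) with the Davis--Kahan theorem and the fact that the nonzero eigenvalues of $\mathbb{E}[A]$ are of order $\lambda_n$, to obtain $\|U_A - U_{\mathbb{E}[A]}\| = O(\lambda_n^{-1/2})$.

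The main obstacle is not the perturbation algebra, which is routine, but the first step: making the explicit description of $y_B$ rigorous despite $\mathbb{E}[A]$ being only approximately rank two (for the SBM without self-loops, $\mathbb{E}[A] = ZPZ^{\tran} - \diag(ZPZ^{\tran})$). One has to check that the diagonal correction, of operator norm $O(\lambda_n/n)$, perturbs the top-two eigenspace of $\mathbb{E}[A]$ by only $o(n^{-1/2})$ in the relevant sense --- small enough that $\mathbf 1$ and $c$ still essentially lie in it and the $\gamma/\sqrt n$ coordinate gap survives --- or, equivalently, one works with the exactly rank-two matrix $B = ZPZ^{\tran}$ and absorbs the difference into $\|A-B\|$ and $\|U_A - U_{\mathbb{E}[A]}\|$.
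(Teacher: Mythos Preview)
Your proof is correct and follows essentially the same route as the paper: both write $\hat c=\sign(y_A)$ with $y_A=U_A^{\tran}\mathcal R\,U_A\mathbf 1$ (up to a positive scalar, $\mathcal R$ the $\pi/2$-rotation), compare to the population vector $y_B$, establish a per-coordinate lower bound on $|y_{B,i}|$, prove a dimension-free perturbation bound $\|y_A-y_B\|\lesssim\|U_A-U_B\|$ after orthogonal alignment (the paper via Lemma~\ref{Lem:EigVecErrorBound} and the identity $\mathcal K^{\tran}\mathcal R\mathcal K=\pm\mathcal R$), and finish with the same counting argument. Your first step is slightly cleaner than the paper's, which computes $y_B$ from the explicit eigenvector formulas of Lemma~\ref{Lem:BMEigDecomp}, whereas you read off its block structure directly from $\mathrm{row}(U_B)=\Span(\mathbf 1,c)$ together with $\langle y_B,\mathbf 1\rangle=0$ and $\|y_B\|=1$; you also flag the no-self-loop diagonal correction, which the paper silently ignores. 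One small correction: $\hat c$ is invariant under $U_A\mapsto R\,U_A$ only up to a global sign (a reflection flips it), but this is harmless since the comparison to $c$ is already modulo $\pm 1$.
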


\subsection{Theoretical comparisons}
There are several results on the consistency of recovering the true label vector under both the SBM and the DCSBM.
The balanced planted partition model $G(n,\frac{a}{n},\frac{b}{n})$, which is the simplest special case of the SBM, has received much attention recently, especially in the probability literature.
This model assumes that there are two communities with $n/2$ nodes each, and edges are formed within communities and between communities with probabilities $a/n$ and $b/n$, respectively. When $(a-b)^2\leq 2(a+b)$, no method can find the communities  \cite{Mossel.et.al.2012}.   Algorithms based on non-backtracking random walks that can recover the community structure better than random guessing if $(a-b)^2>2(a+b)$ have been proposed in \cite{Mossel&Neeman&Sly2014, Massoulie:2014}   Moreover, if $(a-b)^2/(a+b)\rightarrow\infty$ as $n\rightarrow\infty$ then the fraction of mis-clustered nodes goes to zero with high probability.
Under the model $G(n,\frac{a}{n},\frac{b}{n})$, our theoretical results require that $a+b$ grows at least as $\log n$.  This matches the requirements on the expected degree $\lambda_n$ needed for consistency in \cite{Bickel&Chen2009} for the SBM and in \cite{Zhaoetal2012} for the DCSBM.

When the expected node degree $\lambda_n$ is of order $\log n$, spectral clustering using eigenvectors of the adjacency matrix can correctly recover the communities, with fraction of mis-clustered nodes up to $O(1/\log n)$ \cite{Lei&Rinaldo2015}.
In this regime, our method for maximizing the Newman-Girvan and the community extraction criteria mis-clusters at most $O(1/\sqrt{\lambda_n})$ fraction of the nodes. For maximizing the likelihoods of the SBM and DCSBM, we require that $\lambda_n$ is of order $\log^2n$, and the fraction of mis-clustered nodes is bounded by $O(\log n/\sqrt{\lambda_n})$.
For Newman-Girvan modularity as well as the SBM likelihood, \cite{Bickel&Chen2009}  proved strong consistency (perfect recovery with high probability) under the SBM when $\lambda_n$ grows faster than $\log n$.   However, they used a label-switching algorithm for finding the maximizer, which is computationally infeasible for larger networks.    A much faster algorithm based on pseudo-likelihood was proposed by \cite{Amini.et.al.2013}, who assumed that the initial estimate of the labels (obtained in practice by regularized spectral clustering) has a certain correlation with the truth, and showed that the fraction of mis-clustered nodes for their method is $O(1/\lambda_n)$.  Recently, \cite{Le&Levina&Vershynin2015} analyzed regularized spectral clustering in the sparse regime when $\lambda_n = O(1)$, and showed that with high probability, the fraction of mis-clustered nodes is $O(\log^6 \lambda_n/\lambda_n)$. In summary, our assumptions required for consistency are similar to others in the literature even though the approximation method is fairly general.


\section{Numerical comparisons}\label{Sec:Simulation}
Here we briefly compare the empirical performance of our
extreme point projection method to several other methods for community detection, both general (spectral
clustering) and those designed specifically for optimizing a
particular community detection criterion, using both simulated networks and two real
network datasets, the political blogs and the dolphins data described in in Section~\ref{sec:data}.  Our goal in this comparison
is to show that our general method does as well as the algorithms tailored to a particular
criterion, and thus we are not trading off accuracy for generality.

For the four criteria discussed in Section~\ref{Sec:AppToComDet}, we
compare our method of maximizing the relevant criterion by exhaustive
search over the extreme points of the projection  (EP, for extreme points), the approximate version based on the
geometry of the feasible set described in Section~\ref{Sec:LabelEst}
(AEP, for approximate extreme points), and regularized spectral
clustering (SCR) proposed by
\cite{Amini.et.al.2013},  which are all general methods.   We also
include one method specific to the criterion in each comparison.   For
the SBM, we compare to the unconditional
pseudo-likelihood (UPL) and for the DCSBM, to the
conditional pseudo-likelihood (CPL), two fast and accurate methods developed specifically for these models by
\cite{Amini.et.al.2013}.    For the Newman-Girvan modularity, we compare
to the spectral algorithm of
\cite{Newman2006}, which uses the leading eigenvector of the modularity matrix (see
details in Section~\ref{SubSec:Sim:MaxNGmod}).  Finally, for community extraction we compare to the algorithm proposed
in the original paper \citep{Zhao.et.al.2011} based on greedy label
switching, as there are no faster algorithms available.

The simulated networks are generated using the parametrization of
\cite{Amini.et.al.2013}, as follows.  Throughout this section, the
number of nodes in the network is fixed at $n=300$, the number of
communities $K = 2$, and the true label
vector $c$ is fixed.    The number of replications for each setting is 100.
First, the node
degree parameters $\theta_i$ are drawn independently from
the distribution $\mathbb{P}(\Theta = 0.2)=\gamma$, and
$\mathbb{P}(\Theta = 1)=1-\gamma$.   Setting $\gamma = 0$ gives the
standard SBM, and $\gamma > 0$ gives the DCSBM, with  $1-\gamma$ the fraction of hub
nodes.  The matrix of edge probabilities $P$ is controlled by two
parameters: the out-in probability ratio $r$, which determines how
likely edges are formed within and between communities, and the weight
vector $w = (w_1, w_2)$, which determines the relative node degrees within communities.
Let
$$P_0 = \begin{bmatrix}
w_1 & r \\
r & w_2
\end{bmatrix} .
$$
The difficulty of the problem is largely controlled by $r$ and
the overall expected network degree $\lambda$.   Thus we rescale $P_0$
to control the expected degree, setting
$$P = \frac{\lambda P^0}{(n-1)(\pi^T P^0\pi)(\mathbb{E}[\Theta])^2},$$
where $\pi = n^{-1}(n_1, n_2)$, and $n_k$ is the number of
nodes in community $k$.   Finally, edges $A_{ij}$
are drawn independently from a Bernoulli distribution with
$\mathbb{P}(A_{ij} = 1) = \theta_i \theta_j P_{c_i c_j}$.


As discussed in Section \ref{Subsec:ApproxAdMat}, a good approximation to the eigenvectors of  $\mathbb{E}[A]$ is provided by the eigenvectors of the regularized Laplacian.   SCR uses these eigenvectors $u_1$, $u_2$ as input to $K$-means (computed here with the kmeans function in Matlab with 40 random initial starting points).    EP and AEP use $\{D^{1/2}u_1,D^{1/2}u_2\}$ to compute the matrix $U_A$ (see Section~\ref{Subsec:ApproxAdMat}).
To find extreme points and corresponding label vectors in
the second step of EP, we use the algorithm of \cite{Gritzmann1993}.  For $m=2$, it essentially
consists of sorting the angles of between the column vectors of $U_A$ and the $x$-axis.
In case of multiple maximizers, we break the tie by choosing the label vector whose projection
is the farthest from the line connecting the projections of $\pm\mathbf{1}$ (following
the geometric idea of Section~\ref{Sec:LabelEst}).
For CPL and UPL, following \cite{Amini.et.al.2013},
we initialize with the output of SCR and set the number of outer iterations to 20.

We measure the accuracy of all methods via the normalized mutual
information (NMI) between the label vector $c$ and its estimate $e$.
NMI takes values between 0 (random guessing) and 1 (perfect match), and
is defined by \cite{Yao03} as
$\text{NMI}(c,e) = -\sum_{i,j}R_{ij}\log\frac{R_{ij}}{R_{i+}R_{+j}}\left(\sum_{ij}R_{ij}\log R_{ij}\right)^{-1}$, where
 $R$ is the confusion matrix between $c$ and $e$, which represents a bivariate probability distribution,
and its row and column sums $R_{i+}$ and $R_{+j}$ are the corresponding marginals.

\subsection{The degree-corrected stochastic block model}\label{SubSec:Sim:MaxLogLikDCBM}

\begin{figure}[!ht]
  \centering
  \includegraphics[trim=90 30 80 35,clip,width=0.99\textwidth]{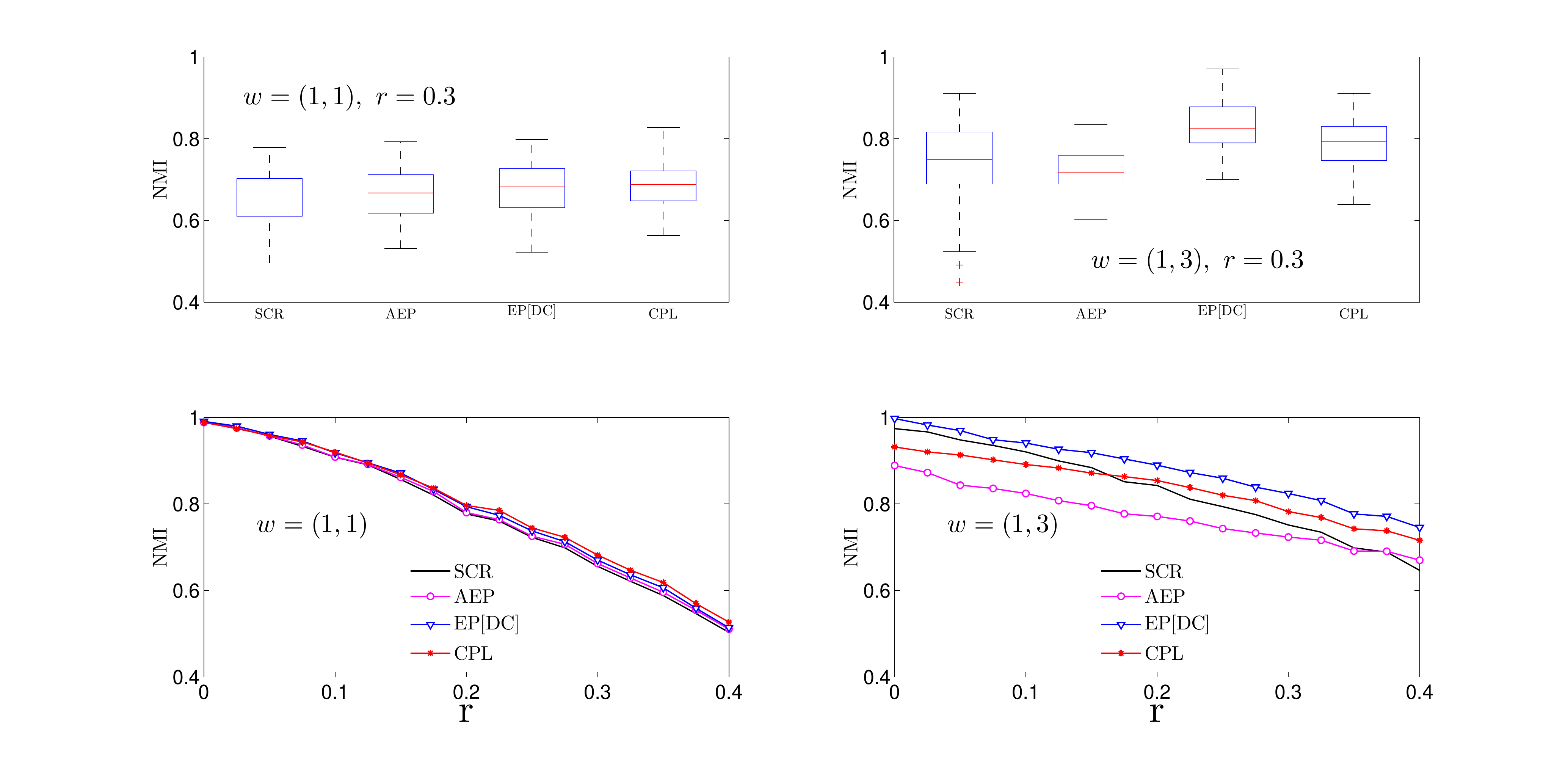}\\ 
  \caption{The degree-corrected stochastic block model.  Top row: boxplots of NMI between true and estimated
    labels.   Bottom row:  average NMI against the out-in probability ratio $r$. In all plots, $n_1=n_2=150$, $\lambda=15$, and $\gamma=0.5$.
  }\label{Fig:DCBMBoxplots}
\end{figure}

Figure~\ref{Fig:DCBMBoxplots} shows the
performance of the four methods for fitting the DCSBM under different parameter
settings. We use the notation EP[DC] to emphasize that EP
here is used to maximize the log-likelihood of DCSBM. In
this case, all methods perform similarly, with EP performing the best when
community-level degree weights are different ($w = (1, 3)$), but just slightly worse than CPL when $w = (1, 1)$.
The AEP is always somewhat worse than the exact version, especially when
$w = (1, 3)$, but overall their results are comparable.

\subsection{The stochastic block model}
Figure~\ref{Fig:BoxPlotBM} shows the performance of the four methods
for fitting the regular SBM ($\gamma=0$).    Over all, four methods provide quite similar results, as
we would hope good fitting methods will.    The performance of the appoximate method AEP is very similar to that of EP, and the model-specific UPL marginally outperforms the three general methods.
\begin{figure}[!ht]
  \centering
  \includegraphics[trim=90 30 80 35,clip,width=0.99\textwidth]{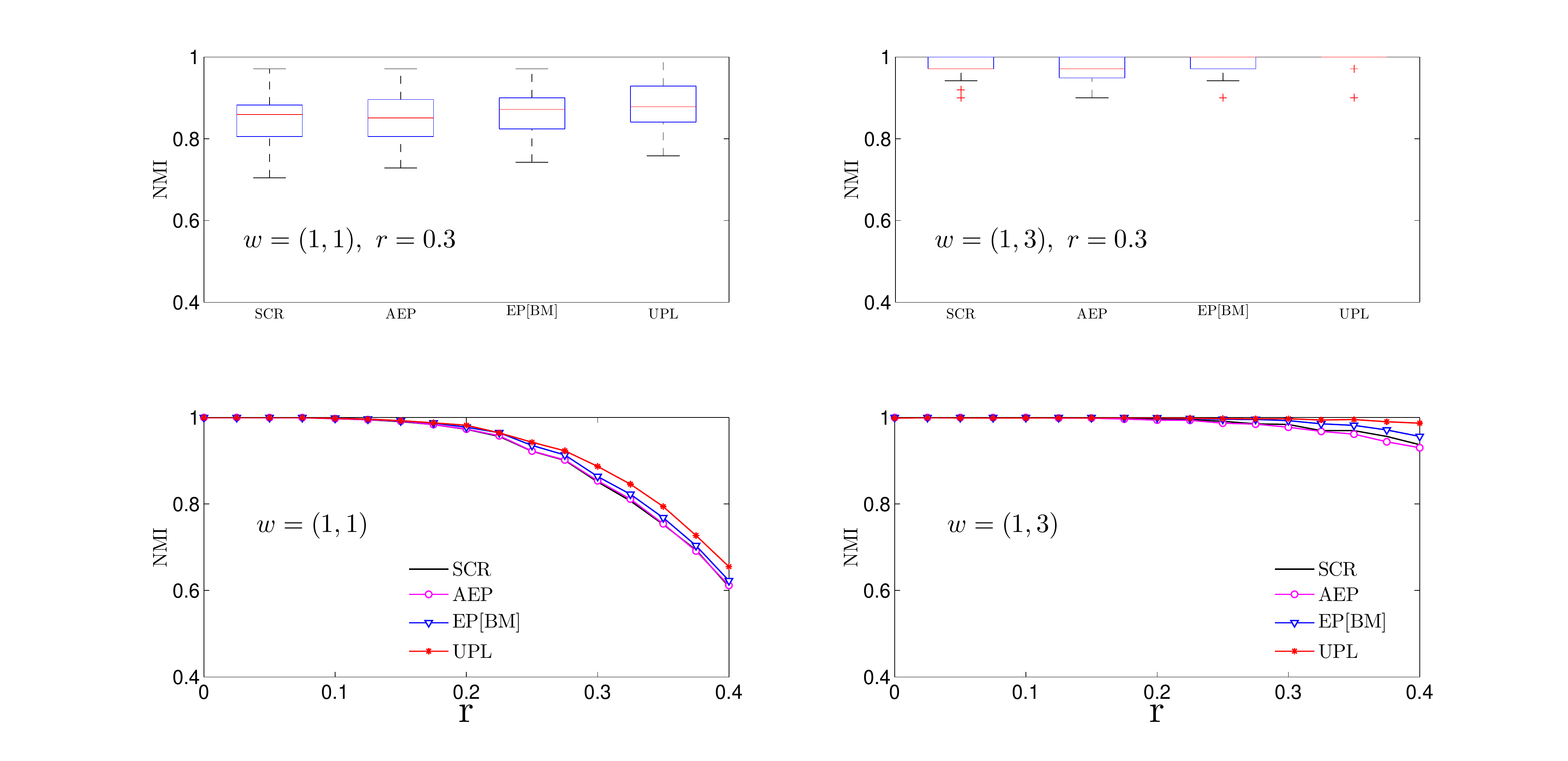}\\
  \caption{The stochastic block model.   Top row: boxplots of NMI between true and estimated
    labels.   Bottom row:  average NMI against the out-in probability
    ratio $r$.  In all plots, $n_1=n_2=150$, $\lambda=15$, and $\gamma=0$.}
  \label{Fig:BoxPlotBM}
\end{figure}

\subsection{Newman--Girvan modularity}\label{SubSec:Sim:MaxNGmod}
The modularity function $\hat{Q}_{NG}$  can be approximately maximized via a fast spectral algotithm when partitioning into two communities \cite{Newman2006}.
Let $B = A-P$ where $P_{ij} = d_id_j/m$, and write $\hat{Q}_{NG}(e)=\frac{1}{2m}e^TBe$.
The approximate solution (LES, for leading eigenvector signs) assigns
node labels according to the signs of the corresponding entries of the leading eigenvector of $B$.
For a fair comparison to other methods relying on eigenvectors, we also use the regularized
$A+\tau\mathbf{1}\mathbf{1}^T$ instead of $A$ here, since empirically
we found that it slightly improves the
performance of LES.
Figure~\ref{Fig:NewmanGirvanPlots} shows the performance of AEP,
EP[NG], and LES, when the data are generated from a regular block model
($\gamma=0$).   The two extreme point methods EP[NG] and AEP both do slightly better than LES, especially
for the unbalanced case of $w = (1,3)$, and there is essentially no difference between EP[NG] and AEP here.

\begin{figure}[!ht]
  \centering
  \includegraphics[trim=95 30 80 35,clip,width=0.99\textwidth]{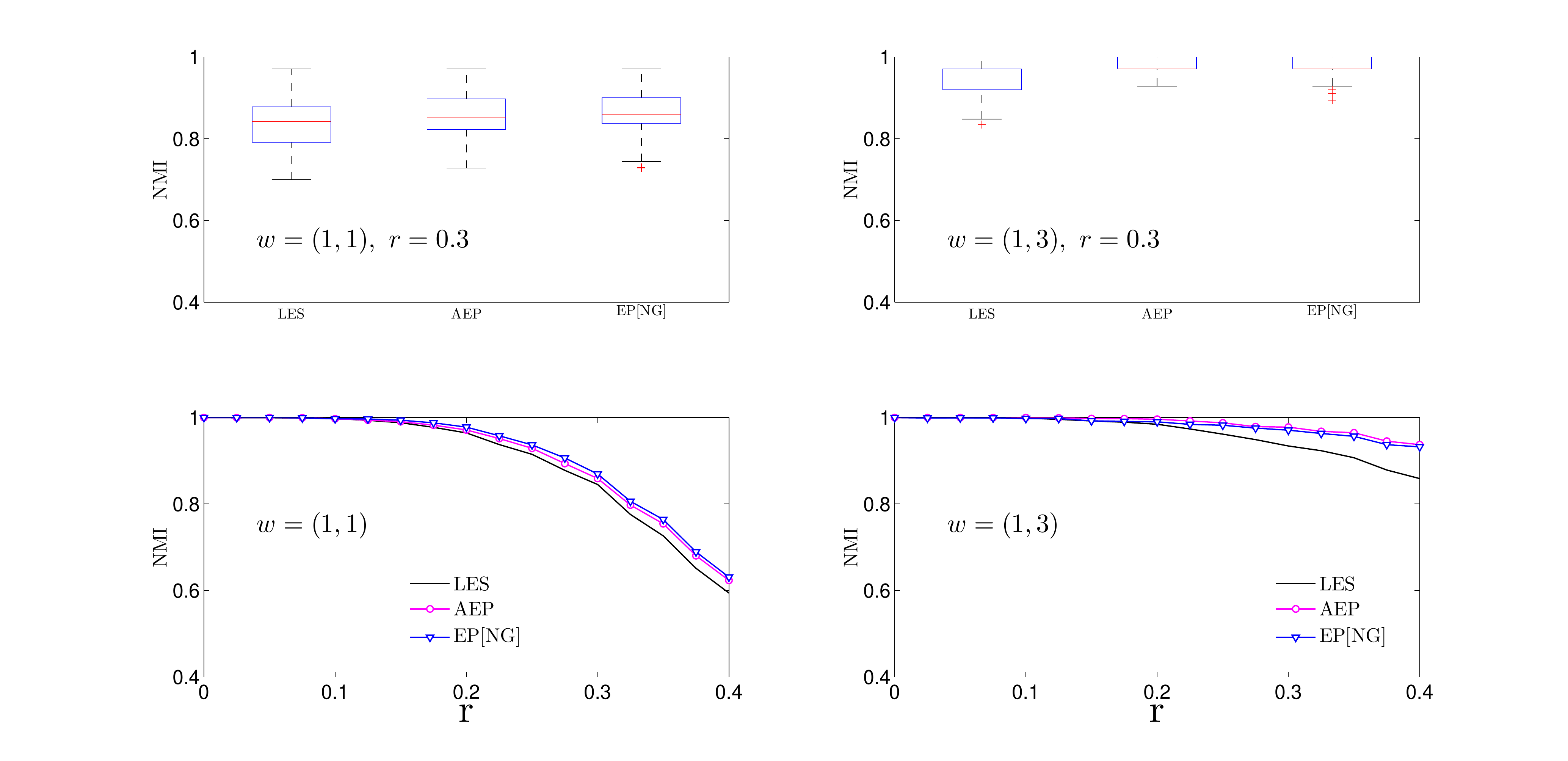}\\
  \caption{Newman-Girvan modularity.  Top row: boxplots of NMI between true and estimated
    labels.   Bottom row:  average NMI against the out-in probability
    ratio $r$. In all plots, $n_1=n_2=150$, $\lambda=15$, and $\gamma=0$.}
  \label{Fig:NewmanGirvanPlots}
\end{figure}

\subsection{Community extraction criterion}\label{SubSec:Sim:ComEXtrCrn}
Following the original extraction paper of \cite{Zhao.et.al.2011}, we generate a
community with background from the regular block model with $K = 2$, $n_1 = 60$, $n_2=240$, and the probability matrix proportional to
$$P_0 = \left(
     \begin{array}{cc}
       0.4 & 0.1 \\
       0.1 & 0.1 \\
     \end{array}
   \right) . $$
 Thus, nodes within the first community are tightly connected, while
 the rest of the nodes have equally weak links with all other nodes
 and represent the background.  We consider four values for the average expected node degree, $15$, $20$, $25$, and $30$.
Figure~\ref{Fig:ComExtrBoxplots} shows that EP[EX] performs
better than SCR and AEP, but somewhat worse than the greedy label-switching tabu search used in the original paper for maximizing the community
extraction criterion (TS).   However, the tabu search is very
computationally intensive and only feasible up to perhaps a thousand nodes, so for larger networks it is not an option at all, and no other method has been previously proposed for this problem.   The AEP method, which does not agree with AE as well as in the other cases, probably suffers from the inherent assymetry of the extraction problem.
\begin{figure}[!ht]
  \centering
  \includegraphics[trim=70 30 80 35,clip,width=0.99\textwidth]{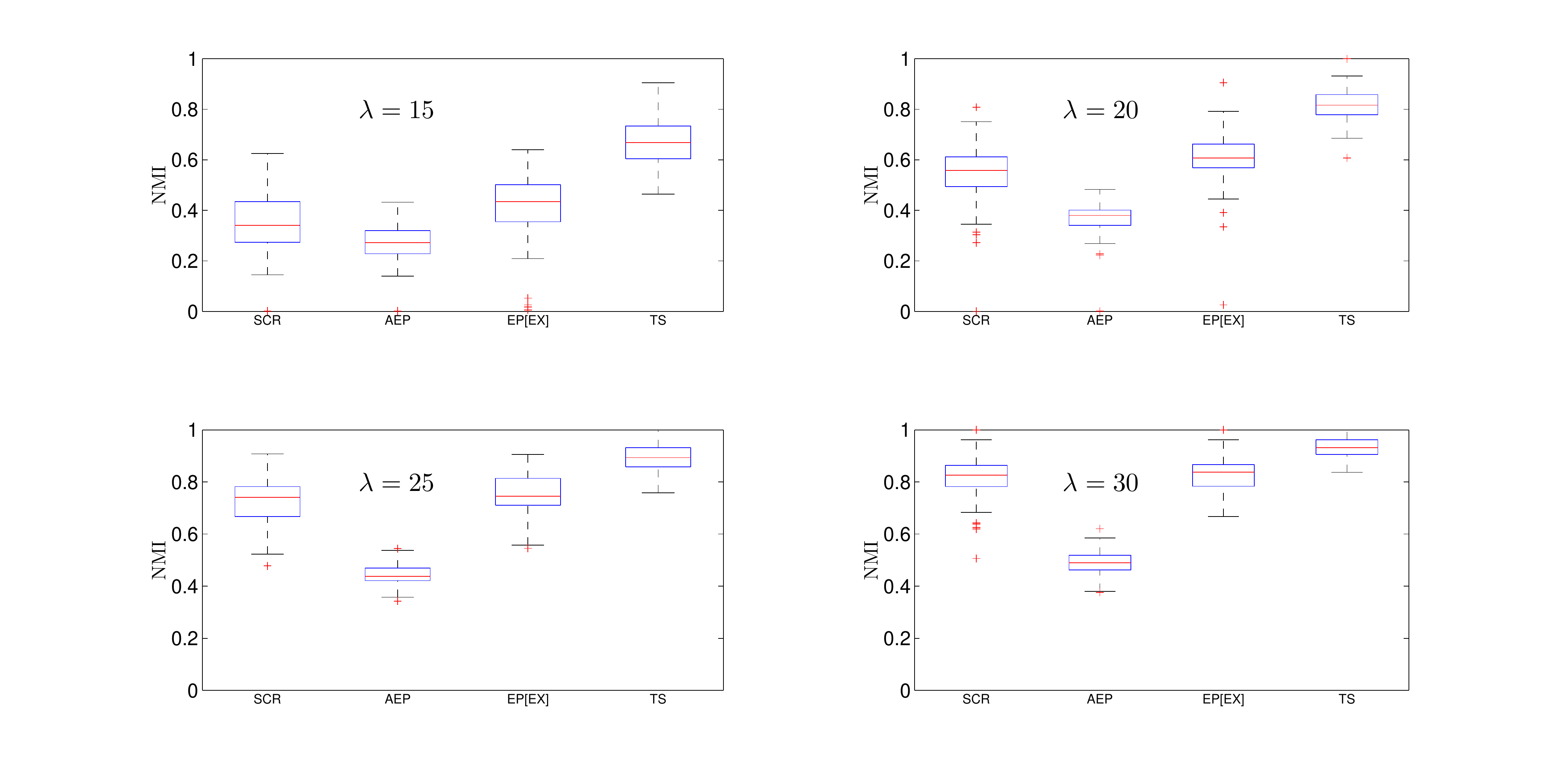}\\
  \caption{Community extraction.   The boxplots of NMI between true and estimated labels. In all plots, $n_1=60$, $n_2=240$, and $\gamma=0$.}
  \label{Fig:ComExtrBoxplots}
\end{figure}

\subsection{Real-world network data}\label{sec:data}


The first network we test our methods on, assembled by \cite{Adamic05}, consists of blogs about US politics and
hyperlinks between blogs. Each blog has been manually labeled as
either liberal or conservative, which we use as the ground truth.  Following \cite{Karrer10}, and \cite{Zhaoetal2012}, we ignore directions of the
hyperlinks and only examine the largest connected component of this
network, which has 1222 nodes and 16,714
edges, with the average degree of approximately 27.
Table ~\ref{Table:PolBlogsNet} and Figure~\ref{Fig:PolBlogs1} show the
performance of different methods.    While AEP, EP[DC], and CPL give
reasonable results, SCR, UPL,
and EP[BM] clearly miscluster the nodes.   This is consistent with
previous analyses which showed that the degree correction has to be
used for this network to achieve the correct partition, because of the presense of hub nodes.

\begin{table}[!ht]
\renewcommand{\arraystretch}{2}
\centering
\caption{The NMI between true and estimated labels for real-world networks.}
\label{Table:PolBlogsNet}
\begin{tabular}{c|c|c|c|c|c|c}
  \hline
  \textbf{Method} & SCR & AEP  & EP[BM] & EP[DC] & UPL & CPL \\ \hline
  \text{Blogs} & 0.290 &0.674 & 0.278 & 0.731 & 0.001 & 0.725 \\ \hline
  \text{Dolphins}   & 0.889 &0.814   &0.889 & 0.889 & 0.889 & 0.889 \\ \hline
\end{tabular}
\end{table}

\begin{figure}[!ht]
\centering
\subfiguretopcaptrue
\subfigure[True Labels]{
    \includegraphics[scale=0.2]{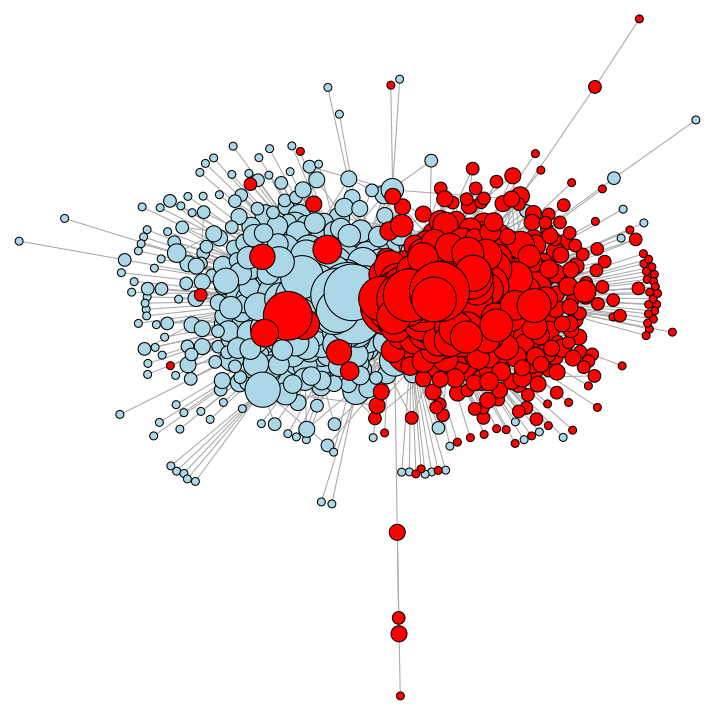}
    \label{Fig:PolBlogsTrue1}
}\\

\subfigure[UPL]{
    \includegraphics[scale=0.2]{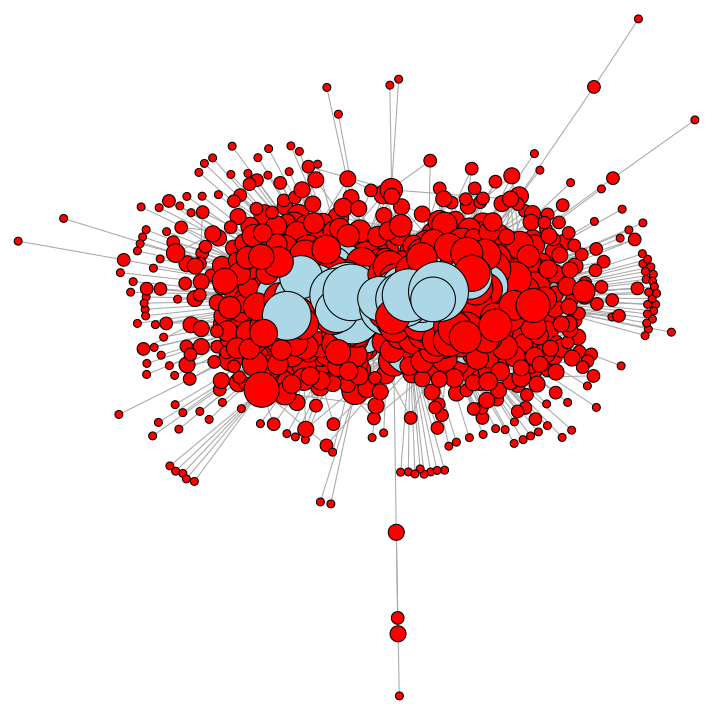}
    \label{Fig:PolBlogsUPL}
}
\subfigure[CPL]{
    \includegraphics[scale=0.2]{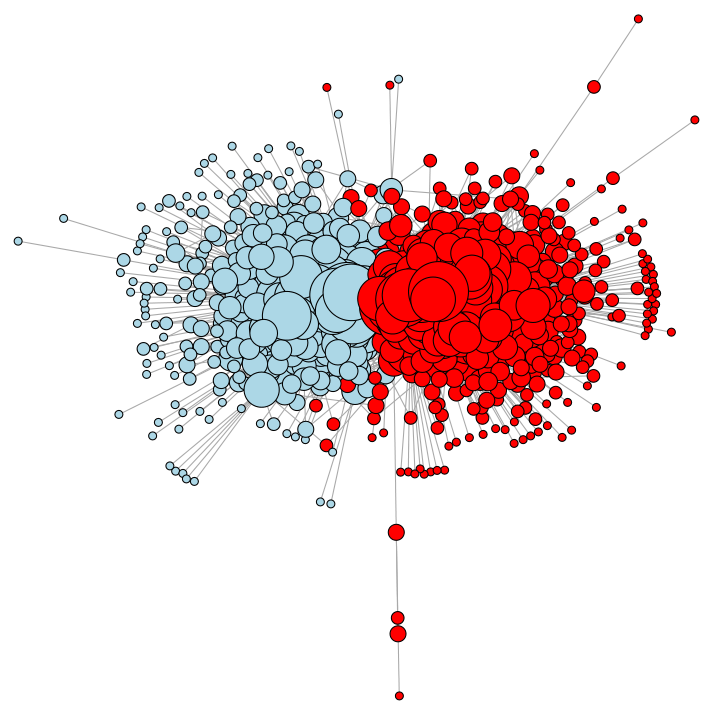}
    \label{Fig:PolBlogsCPL}
}
\subfigure[SCR]{
    \includegraphics[scale=0.2]{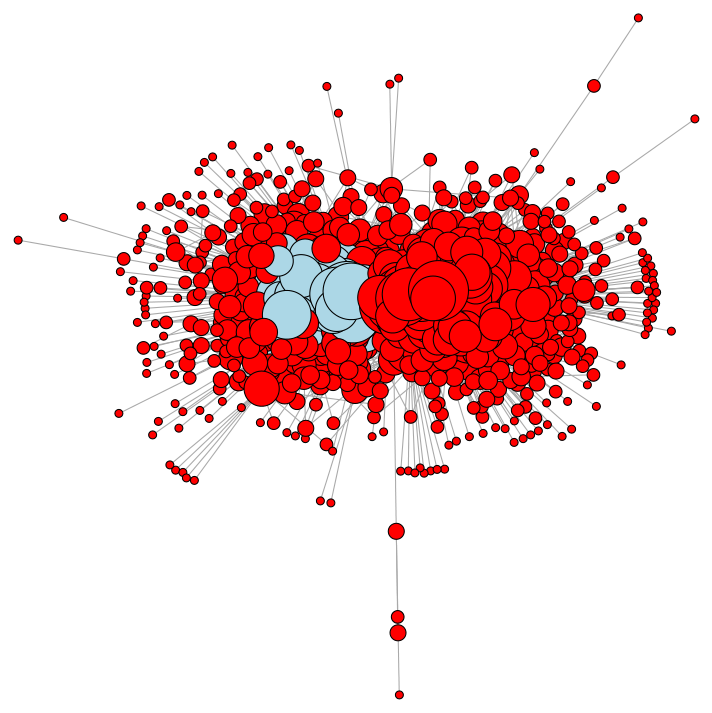}
    \label{Fig:PolBlogsSC}
}
\\
\subfigure[EP(BM)]{
    \includegraphics[scale=0.2]{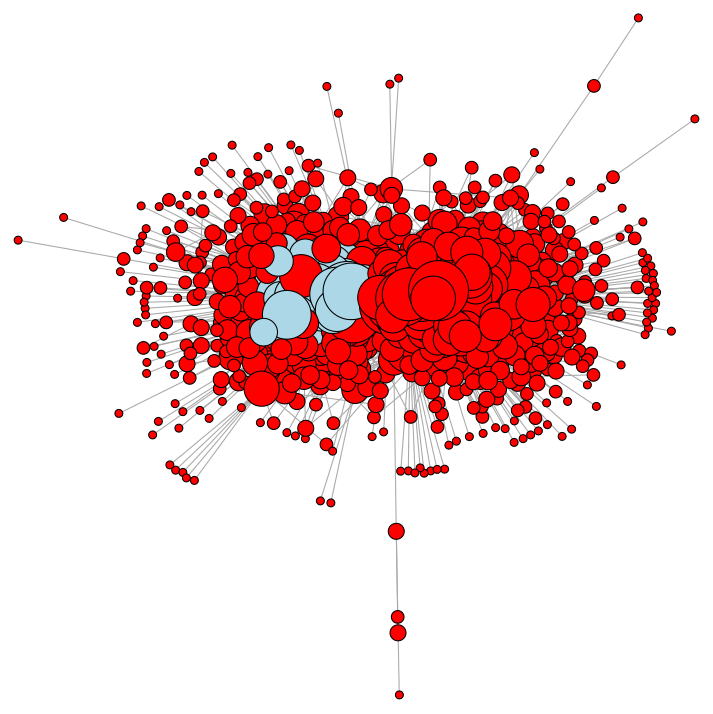}
    \label{Fig:PolBlogsEPBM}
}
\subfigure[EP(DC)]{
    \includegraphics[scale=0.2]{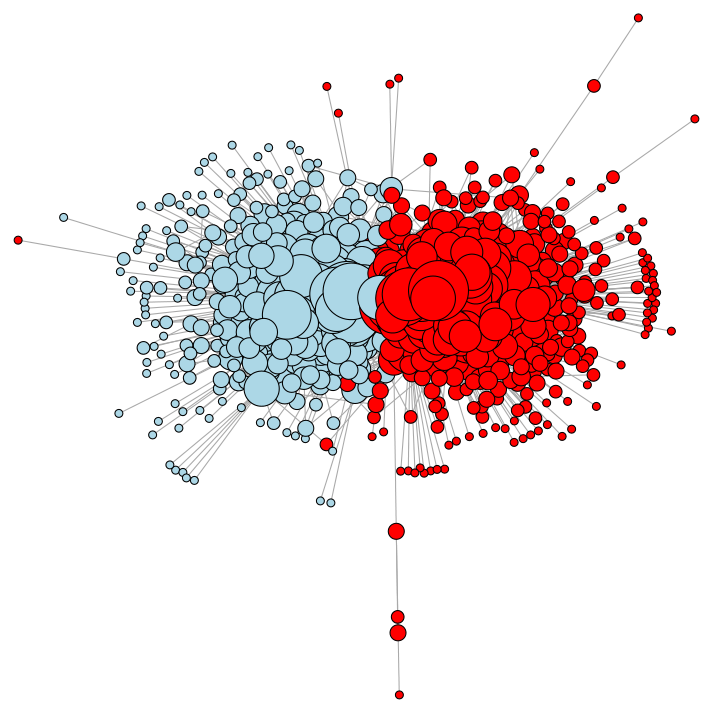}
    \label{Fig:PolBlogsEPDCBM}
}
\subfigure[AEP]{
    \includegraphics[scale=0.2]{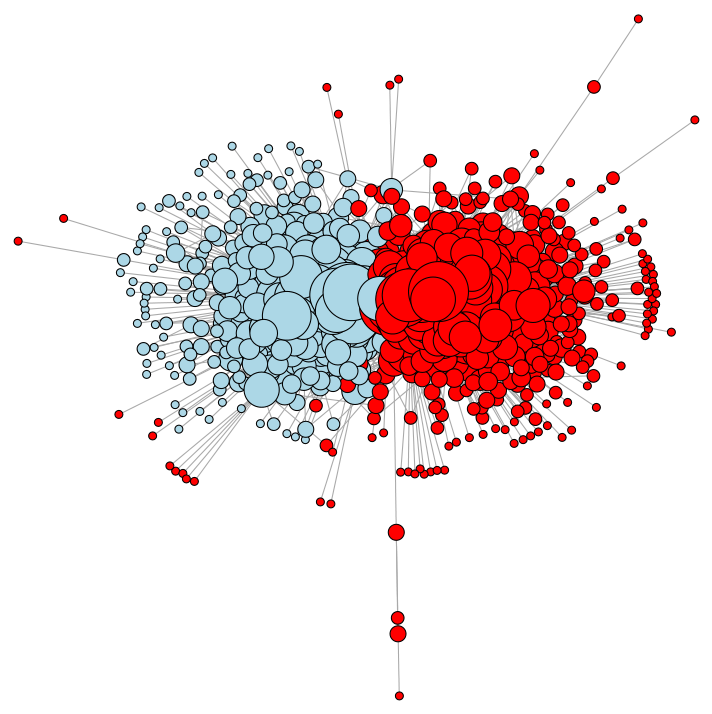}
    \label{Fig:PolBlogsAEP}
}

\caption{The network of political blogs. Node diameter is proportional to the logarithm of its degree and the colors represent community labelss.}
\label{Fig:PolBlogs1}
\end{figure}

The second network we study represents social ties between 62 bottlenose dolphins living in
Doubtful Sound, New Zealand \cite{Lusseau2003, Lusseau2004}. At some point during the study,
one well-connected dolphin (SN100) left the group, and the group split into two
separate parts, which we use as the ground truth in this example.
Table~\ref{Table:PolBlogsNet} and Figure~\ref{Fig:Dolphin} show the performance of different methods. In Figure~\ref{Fig:Dolphin},
node shapes represent the actual split, while the colors represent the estimated label.
The star-shaped node is the dolphin SN100 that left the  group. Excepting that dolphin, SCR, EP[BM], EP[DC], UPL, and CPL all miscluster one node, while AEP misclusters two nodes.   Since this small network can be well modelled by the SBM, there is no difference between DCSBM and SBM based methods, and all methods perform well.

\begin{figure}[!ht]
\centering
\subfiguretopcaptrue

\subfigure[AEP]{
    \includegraphics[scale=0.3]{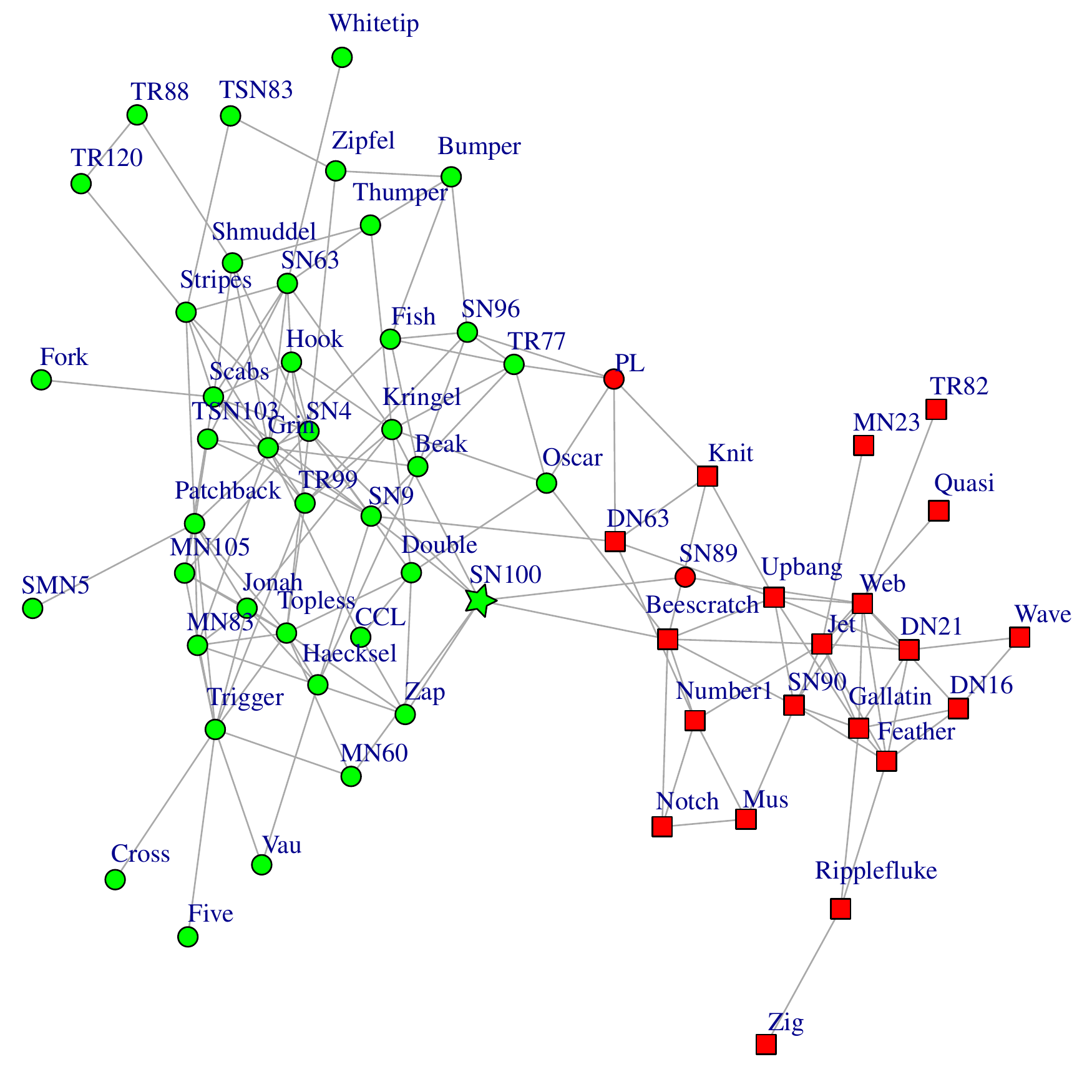}
    \label{Fig:DolphinAEP}
}
\subfigure[SCR, EP, UPL, CPL]{
    \includegraphics[scale=0.3]{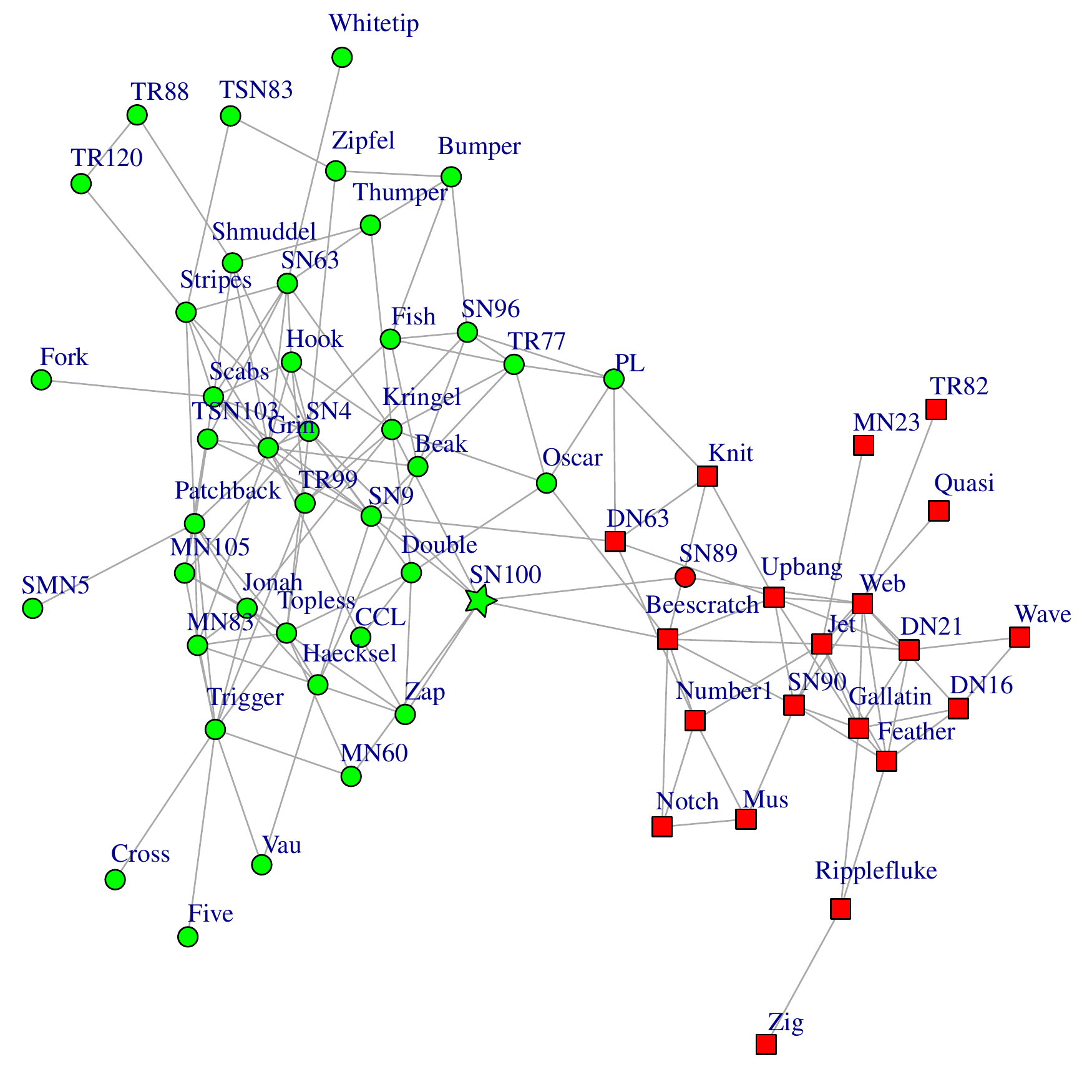}
    \label{Fig:DolphinSCR}
}
\caption{The network of 62 bottlenose dolphins.  Node shapes represent the split after the dolphin SN100 (represented by the star) left the group.  Node colors represent their estimated labels.}
\label{Fig:Dolphin}
\end{figure}

\section*{Acknowledgments}
We thank the Associate Editor and three anonymous referees for detailed and constructive feedback which led to many improvements.  We also thank Yunpeng Zhao (George Mason University) for sharing his code
for the tabu search,  and Arash A. Amini (UCLA) for sharing his code for the pseudo-likelihood
methods and helpful discussions. E.L. is partially supported by NSF grants DMS-01106772 and
DMS-1159005.  R.V. is partially supported by NSF grants DMS 1161372, 1001829, 1265782 and USAF
Grant FA9550-14-1-0009.

\bibliography{allref}

\begin{thebibliography}{}

\bibitem[Adamic and Glance, 2005]{Adamic05}
Adamic, L.~A. and Glance, N. (2005).
\newblock The political blogosphere and the 2004 {US} election.
\newblock In {\em Proceedings of the WWW-2005 Workshop on the Weblogging
  Ecosystem}.

\bibitem[Airoldi et~al., 2008]{Airoldi2008}
Airoldi, E.~M., Blei, D.~M., Fienberg, S.~E., and Xing, E.~P. (2008).
\newblock Mixed membership stochastic blockmodels.
\newblock {\em J. Machine Learning Research}, 9:1981--2014.

\bibitem[Amini et~al., 2013]{Amini.et.al.2013}
Amini, A., Chen, A., Bickel, P., and Levina, E. (2013).
\newblock Fitting community models to large sparse networks.
\newblock {\em Annals of Statistics}, 41(4):2097--2122.

\bibitem[Ball et~al., 2011]{Ball&Karrer&Newman2011}
Ball, B., Karrer, B., and Newman, M. E.~J. (2011).
\newblock An efficient and principled method for detecting communities in
  networks.
\newblock {\em Physical Review E}, 34:036103.

\bibitem[Bhatia, 1996]{Bhatia1996}
Bhatia, R. (1996).
\newblock {\em Matrix Analysis}.
\newblock Springer-Verlag New York.

\bibitem[Bickel and Chen, 2009]{Bickel&Chen2009}
Bickel, P.~J. and Chen, A. (2009).
\newblock A nonparametric view of network models and {N}ewman-{G}irvan and
  other modularities.
\newblock {\em Proc. Natl. Acad. Sci. USA}, 106:21068--21073.

\bibitem[Chaudhuri et~al., 2012]{Chaudhuri&Chung&Tsiatas2012}
Chaudhuri, K., Chung, F., and Tsiatas, A. (2012).
\newblock Spectral clustering of graphs with general degrees in the extended
  planted partition model.
\newblock {\em Journal of Machine Learning Research Workshop and Conference
  Proceedings}, 23:35.1 -- 35.23.

\bibitem[Chung and Lu, 2002]{Chung&Lu2002}
Chung, F. and Lu, L. (2002).
\newblock Connected components in random graphs with given degree sequences.
\newblock {\em Annals of Combinatorics}, 6:125--145.

\bibitem[Decelle et~al., 2012]{Decelle.et.al.2011}
Decelle, A., Krzakala, F., Moore, C., and Zdeborov\'{a}, L. (2012).
\newblock Asymptotic analysis of the stochastic block model for modular
  networks and its algorithmic applications.
\newblock {\em Physical Review E}, 84:066106.

\bibitem[Erd{\H{o}}s and R{\'e}nyi, 1959]{Erdos&Renyi1959}
Erd{\H{o}}s, P. and R{\'e}nyi, A. (1959).
\newblock On random graphs. {I}.
\newblock {\em Publ. Math. Debrecen}, 6:290--297.

\bibitem[Fukuda, 2004]{Fukuda2004}
Fukuda, K. (2004).
\newblock From the zonotope construction to the minkowski addition of convex
  polytopes.
\newblock {\em Journal of Symbolic Computation}, 38(4):1261--1272.

\bibitem[Glover and Lagunas, 1997]{Glover&Laguna1997}
Glover, F.~W. and Lagunas, M. (1997).
\newblock {\em Tabu search}.
\newblock Kluwer Academic.

\bibitem[Goldenberg et~al., 2010]{Goldenberg2010}
Goldenberg, A., Zheng, A.~X., Fienberg, S.~E., and Airoldi, E.~M. (2010).
\newblock A survey of statistical network models.
\newblock {\em Foundations and Trends in Machine Learning}, 2:129--233.

\bibitem[Gritzmann and Sturmfels, 1993]{Gritzmann1993}
Gritzmann, P. and Sturmfels, B. (1993).
\newblock Minkowski addition of polytopes: computational complexity and
  applications to {G}robner bases.
\newblock {\em SIAM Journal on Discrete Mathematics}, 6(2):246--269.

\bibitem[Handcock et~al., 2007]{Handcock2007}
Handcock, M.~D., Raftery, A.~E., and Tantrum, J.~M. (2007).
\newblock Model-based clustering for social networks.
\newblock {\em J. R. Statist. Soc. A}, 170:301--354.

\bibitem[Hoff et~al., 2002]{Hoff2002}
Hoff, P.~D., Raftery, A.~E., and Handcock, M.~S. (2002).
\newblock Latent space approaches to social network analysis.
\newblock {\em Journal of the American Statistical Association}, 97:1090--1098.

\bibitem[Holland et~al., 1983]{Holland83}
Holland, P.~W., Laskey, K.~B., and Leinhardt, S. (1983).
\newblock Stochastic blockmodels: first steps.
\newblock {\em Social Networks}, 5(2):109--137.

\bibitem[Jin, 2015]{Jin2015}
Jin, J. (2015).
\newblock Fast network community detection by score.
\newblock {\em The Annals of Statistics}, 43(1):57--89.

\bibitem[Joseph and Yu, 2013]{Joseph&Yu2013}
Joseph, A. and Yu, B. (2013).
\newblock Impact of regularization on spectral clustering.
\newblock {\em arXiv:1312.1733}.

\bibitem[Karrer and Newman, 2011]{Karrer10}
Karrer, B. and Newman, M. E.~J. (2011).
\newblock Stochastic blockmodels and community structure in networks.
\newblock {\em Physical Review E}, 83:016107.

\bibitem[Le et~al., 2015]{Le&Levina&Vershynin2015}
Le, C.~M., Levina, E., and Vershynin, R. (2015).
\newblock Sparse random graphs: regularization and concentration of the
  {L}aplacian.
\newblock {\em arXiv:1502.03049}.

\bibitem[Lei and Rinaldo, 2015]{Lei&Rinaldo2015}
Lei, J. and Rinaldo, A. (2015).
\newblock Consistency of spectral clustering in sparse stochastic block models.
\newblock {\em The Annals of Statistics}, 43(1):215--237.

\bibitem[Lusseau and Newman, 2004]{Lusseau2004}
Lusseau, D. and Newman, M. E.~J. (2004).
\newblock Identifying the role that animals play in their social networks.
\newblock {\em Proc. R. Soc. London B (Suppl.)}, 271:S477–--S481.

\bibitem[Lusseau et~al., 2003]{Lusseau2003}
Lusseau, D., Schneider, K., Boisseau, O.~J., Haase, P., Slooten, E., and
  Dawson, S.~M. (2003).
\newblock The bottlenose dolphin community of doubtful sound features a large
  propor- tion of long-lasting associations. can geographic isola- tion explain
  this unique trait?
\newblock {\em Behavioral Ecology and Sociobiology}, 54:396--405.

\bibitem[Mariadassou et~al., 2010]{Mariadassouetal2010}
Mariadassou, M., Robin, S., and Vacher, C. (2010).
\newblock Uncovering latent structure in valued graphs: A variational approach.
\newblock {\em The Annals of Applied Statistics}, 4(2):715--742.

\bibitem[Massouli{\'e}, 2014]{Massoulie:2014}
Massouli{\'e}, L. (2014).
\newblock Community detection thresholds and the weak {R}amanujan property.
\newblock In {\em Proceedings of the 46th Annual ACM Symposium on Theory of
  Computing}, STOC '14, pages 694--703.

\bibitem[Mihail and Papadimitriou, 2002]{Mihail&Papadimitriou2002}
Mihail, M. and Papadimitriou, C.~H. (2002).
\newblock On the eigenvalue power law.
\newblock {\em Proceedings of the 6th Interational Workshop on Randomization
  and Approximation Techniques}, pages 254--262.

\bibitem[Mossel et~al., 2012]{Mossel.et.al.2012}
Mossel, E., Neeman, J., and Sly, A. (2012).
\newblock Stochastic block models and reconstruction.
\newblock arXiv:1202.1499.

\bibitem[Mossel et~al., 2014a]{Mossel.et.al.2013}
Mossel, E., Neeman, J., and Sly, A. (2014a).
\newblock Belief propagation, robust reconstruction, and optimal recovery of
  block models.
\newblock {\em COLT}, 35:356--370.

\bibitem[Mossel et~al., 2014b]{Mossel&Neeman&Sly2014}
Mossel, E., Neeman, J., and Sly, A. (2014b).
\newblock A proof of the block model threshold conjecture.
\newblock {\em arXiv:1311.4115}.

\bibitem[Newman, 2006]{Newman2006}
Newman, M. E.~J. (2006).
\newblock Finding community structure in networks using the eigenvectors of
  matrices.
\newblock {\em Physical Review E}, 74(3):036104.

\bibitem[Newman, 2013]{Newman2013}
Newman, M. E.~J. (2013).
\newblock Spectral methods for network community detection and graph
  partitioning.
\newblock {\em Physical Review E}, 88:042822.

\bibitem[Newman and Girvan, 2004]{Newman&Girvan2004}
Newman, M. E.~J. and Girvan, M. (2004).
\newblock Finding and evaluating community structure in networks.
\newblock {\em Physical Review E}, 69(2):026113.

\bibitem[Ng et~al., 2001]{Ng01}
Ng, A., Jordan, M., and Weiss, Y. (2001).
\newblock On spectral clustering: Analysis and an algorithm.
\newblock In Dietterich, T., Becker, S., and Ghahramani, Z., editors, {\em
  Neural Information Processing Systems 14}, pages 849--856. MIT Press.

\bibitem[Nowicki and Snijders, 2001]{Nowicki2001}
Nowicki, K. and Snijders, T. A.~B. (2001).
\newblock Estimation and prediction for stochastic blockstructures.
\newblock {\em Journal of the American Statistical Association},
  96(455):1077--1087.

\bibitem[O'Rourke et~al., 2013]{O'Rourke.et.al.2013}
O'Rourke, S., Vu, V., and Wang, K. (2013).
\newblock Random perturbation of low rank matrices: Improving classical bounds.
\newblock {\em arXiv:1311.2657}.

\bibitem[Qin and Rohe, 2013]{qin2013regularized}
Qin, T. and Rohe, K. (2013).
\newblock Regularized spectral clustering under the degree-corrected stochastic
  blockmodel.
\newblock In {\em Advances in Neural Information Processing Systems}, pages
  3120--3128.

\bibitem[Riolo and Newman, 2012]{Riolo&Newman2012}
Riolo, M. and Newman, M. E.~J. (2012).
\newblock First-principles multiway spectral partitioning of graphs.
\newblock {\em arXiv:1209.5969}.

\bibitem[Rohe et~al., 2011]{Rohe2011}
Rohe, K., Chatterjee, S., and Yu, B. (2011).
\newblock Spectral clustering and the high-dimensional stochastic block model.
\newblock {\em Annals of Statistics}, 39(4):1878–--1915.

\bibitem[Sarkar and Bickel, 2013]{Sarkar&Bickel2013}
Sarkar, P. and Bickel, P. (2013).
\newblock Role of normalization in spectral clustering for stochastic
  blockmodels.
\newblock {\em arXiv:1310.1495}.

\bibitem[Snijders and Nowicki, 1997]{Snijders&Nowicki1997}
Snijders, T. and Nowicki, K. (1997).
\newblock Estimation and prediction for stochastic block-structures for graphs
  with latent block structure.
\newblock {\em Journal of Classification}, 14:75--100.

\bibitem[Weibel, 2010]{Weibel2010}
Weibel, C. (2010).
\newblock Implementation and parallelization of a reverse-search algorithm for
  {M}inkowski sums.
\newblock {\em Proceedings of the 12th Workshop on Algorithm Engineering and
  Experiments}, pages 34--42.

\bibitem[Yao, 2003]{Yao03}
Yao, Y.~Y. (2003).
\newblock Information-theoretic measures for knowledge discovery and data
  mining.
\newblock In {\em Entropy Measures, Maximum Entropy Principle and Emerging
  Applications}, pages 115--136. Springer.

\bibitem[Zhang et~al., 2014]{Zhang.et.al2014overlapping}
Zhang, Y., Levina, E., and Zhu, J. (2014).
\newblock Detecting overlapping communities in networks using spectral methods.
\newblock {\em arXiv:1412.3432}.

\bibitem[Zhao et~al., 2011]{Zhao.et.al.2011}
Zhao, Y., Levina, E., and Zhu, J. (2011).
\newblock Community extraction for social networks.
\newblock {\em Proc. Natl. Acad. Sci. USA}, 108(18):7321--7326.

\bibitem[Zhao et~al., 2012]{Zhaoetal2012}
Zhao, Y., Levina, E., and Zhu, J. (2012).
\newblock Consistency of community detection in networks under degree-corrected
  stochastic block models.
\newblock {\em Annals of Statistics}, 40(4):2266--2292.

\end{thebibliography}
\bibliographystyle{abbrv}
\appendix

\section{Proof of results in Section 2}\label{AppendixA}

The following Lemma bounds the Lipschitz constants of $h_{B,j}$ and $f_B$ on $U_{B}[-1,1]^n$.

\begin{lemma}\label{Lem:LipConst}
Assume that Assumption ($1$) holds. For any $j\leq\kappa$ (see \ref{Eq:GenFuncType}), and $x,y\in U_{B}[-1,1]^n$, we have
\begin{eqnarray*}
  \big|h_{B,j}(x)-h_{B,j}(y)\big| &\leq& 4 \sqrt{n}\|B\|\cdot\|x-y\|, \\
  \big|f_B(x)-f_B(y)\big| &\leq& M\sqrt{n}\log(n)\|B\|\cdot\|x-y\|,
\end{eqnarray*}
where $M$ is a constant independent of $n$.
\end{lemma}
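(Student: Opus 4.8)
The plan is to use the low rank of $B$ to write each $h_{B,j}$ as an explicit quadratic on $\R^m$, bound the increments of $h_{B,j}$ by a short telescoping argument, and then transfer the estimate to $f_B$ through the one–dimensional mean value theorem together with Assumption ($1$).

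First I would reduce $h_{B,j}$ to a quadratic on $\R^m$. Writing the eigendecomposition $B = U_B^\tran \Lambda U_B$ with $\Lambda = \diag(\rho_1,\dots,\rho_m)$, so that $\|\Lambda\| = \|B\|$ and the rows of $U_B$ are orthonormal (hence $\|U_B\| = 1$), equation \eqref{Eq:QuadrFun} with $B$ in place of $A$ becomes, for $x = U_B e$ with $e\in[-1,1]^n$,
$$h_{B,j}(x) = (x + a_{j1})^\tran \Lambda (x + a_{j2}), \qquad a_{jk} := U_B s_{jk},$$
which is precisely the induced function on $U_B[-1,1]^n$. Since $e\in[-1,1]^n$ and $s_{jk}\in\{-1,1\}^n$ both have Euclidean norm at most $\sqrt n$ and $\|U_B\|=1$, every vector $x + a_{jk} = U_B(e+s_{jk})$ appearing here has norm at most $2\sqrt n$. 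For $x = U_B e$ and $y = U_B e'$ in $U_B[-1,1]^n$, telescoping gives $h_{B,j}(x) - h_{B,j}(y) = (x-y)^\tran\Lambda(x+a_{j2}) + (y+a_{j1})^\tran\Lambda(x-y)$, so $|h_{B,j}(x) - h_{B,j}(y)| \le \|\Lambda\|\big(\|x+a_{j2}\| + \|y+a_{j1}\|\big)\|x-y\| \le 4\sqrt n\,\|B\|\,\|x-y\|$, which is the first bound.

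For $f_B = \sum_{j=1}^\kappa g_j\circ h_{B,j}$, I would write $f_B(x) - f_B(y) = \sum_{j=1}^\kappa\big(g_j(h_{B,j}(x)) - g_j(h_{B,j}(y))\big)$ and apply the scalar mean value theorem to each $g_j$ on the interval between $h_{B,j}(y)$ and $h_{B,j}(x)$, writing the $j$-th difference as $g_j'(\xi_j)\,(h_{B,j}(x) - h_{B,j}(y))$. The norm bound from the previous step shows $0\le h_{B,j}\le 4n\|B\|$ on $U_B[-1,1]^n$, and since $B$ has entries bounded by an absolute constant we have $\|B\|\le Cn$, so $\xi_j \le 4Cn^2$ and Assumption ($1$) gives $|g_j'(\xi_j)| \le M_1\log(\xi_j+2) \le M_1\log(4Cn^2+2) = O(\log n)$. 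Combining this with the Lipschitz bound for $h_{B,j}$ and summing the $\kappa$ terms yields $|f_B(x)-f_B(y)| \le M\sqrt n\log(n)\|B\|\|x-y\|$, with $M$ depending only on $\kappa$, $M_1$, and the entrywise bound on $B$.

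I do not expect a genuine obstacle; the computation is short. The two points needing care are that the operator norm entering the telescoping estimate is $\|\Lambda\| = \|B\|$ rather than $\|B\|_F$ — which is exactly where the low rank of $B$ is used — and that Assumption ($1$) controls $g_j'$ only on $[0,\infty)$, so one implicitly needs $h_{B,j}\ge 0$ on $U_B[-1,1]^n$. This nonnegativity is automatic in the community detection applications, where the $h_{B,j}$ are expected edge counts of order $n\lambda_n$ there, and it should be regarded as part of the standing hypotheses accompanying Assumption ($1$) in the general statement.
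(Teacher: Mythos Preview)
Your proof is correct and follows essentially the same approach as the paper: both telescope the bilinear form and use the eigendecomposition of $B$ to replace $\|B(e-e')\|$ by $\|B\|\,\|x-y\|$, then invoke Assumption~($1$) together with the crude $O(n^2)$ size of the $h_{B,j}$ to bound $|g_j'|$ by $O(\log n)$. The only cosmetic difference is that you push the eigendecomposition in at the start and work directly with the $m$-dimensional quadratic $(x+a_{j1})^\tran\Lambda(x+a_{j2})$, whereas the paper telescopes in $\R^n$ first and only then uses $\|B(e-s)\| \le \|B\|\,\|x-y\|$; your observation about the nonnegativity of $h_{B,j}$ needed for Assumption~($1$) is a point the paper leaves implicit.
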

\begin{proof}[Proof of Lemma~\ref{Lem:LipConst}]
Let $e,s\in [-1,1]^n$ such that $x=U_Be,y=U_Bs$ and denote $L = \big|h_{B,j}(x)-h_{B,j}(y)\big|$. Then
\begin{eqnarray*}
   L &=& \big|(e+s_{j1})^T B(e+s_{j2}) -(s+s_{j1})^T B(s+s_{j2})\big| \\
     &=& \big| e^TB(e-s) + (e-s)^T B s +(s_{j2}+s_{j1})^T B (e-s)\big| \nonumber \\
     &\leq& 4\sqrt{n} \|B(e-s)\|.
\end{eqnarray*}
Let $B=\sum_{i=1}^m\rho_i u_i u_i^T$ be the eigendecomposition of $B$. Then
\begin{eqnarray*}
  \|B(e-s)\|^2 &=& \Big\|\sum_{i=1}^m\rho_i u_i u_i^T(e-s)\Big\|^2 = \Big\|\sum_{i=1}^m\rho_i (x_i-y_i) u_i\Big\|^2\\
  &=& \sum_{i=1}^m\rho_i^2 (x_i-y_i)^2 \leq \|B\|^2 \sum_{i=1}^m (x_i-y_i)^2 = \|B\|^2 \cdot\|x-y\|^2.
\end{eqnarray*}
Therefore $L\leq 4\sqrt{n}\|B\|\cdot\|x-y\|$. Since $h_{B,j}$ are quadratic, they are of order $O(n^2)$. Hence by Assumption ($1$), the Lipschitz constants of $g_j$ are of order $\log(n)$. Therefore
$$\big|f_B(x)-f_B(y)\big|\leq 4 \sqrt{n}\log(n)\|B\|\cdot\|x-y\|,$$
which completes the proof.
\end{proof}

In the following proofs we use $M$ to denote a positive constant independent of $n$ the value of which may change from line to line.

\begin{proof}[Proof of Lemma~\ref{Lem:MaxFuncEst}]
Since $\|e+s_{j1}\|\leq 2\sqrt{n}$ and $\|e+s_{j2}\|\leq 2\sqrt{n}$,
\begin{eqnarray*}
  |h_{A,j}(e)-h_{B,j}(e)| &=& |(e+s_{j1})^T (A-B) (e+s_{j2})| \\
   &\leq& 4n\|A-B\|.
\end{eqnarray*}
Since $h_{A,j}$ and $h_{B,j}$ are of order $O(n^2)$, $g_j^\prime$ are bounded by $\log(n)$.
Together with assumption (1) it implies that there exists $M>0$ such that
\begin{equation}\label{Ineq:fAandfEA}
  |f_A(e)-f_B(e)|\leq M n\log(n)\|A-B\|.
\end{equation}
Let $\hat{e} = \arg\max\{f_B(e),e\in\mathcal{E}_A\}$.
Then $f_A(e^*)\geq f_A(\hat{e})$ and by \eqref{Ineq:fAandfEA} we get
\begin{eqnarray}\label{Ineq:fEAhese}
  f_B(\hat{e})-f_B(e^*) &\leq&
  f_B(\hat{e}) - f_A(\hat{e}) + f_A(e^*)- f_B(e^*) \\
  \nonumber&\leq& M n\log(n)\|A-B\|.
\end{eqnarray}
Denote by $\mbox{conv}(S)$ the convex hull of a set $S$.
Then $U_Ac\in\mbox{conv}(U_A\mathcal{E}_A)$ and therefore, there exists $\eta_e\geq 0$, $\sum_{e\in\mathcal{E}_A}\eta_e=1$ such that
$$U_A c = \sum_{e\in\mathcal{E}_A}\eta_e U_A (e) =
U_A\Big(\sum_{e\in\mathcal{E}_A}\eta_e e\Big).$$
Hence
\begin{eqnarray}\label{Ineq:DisPbUcPbUclEU}
  \ \ \ \ \ \mbox{dist}\big(U_B c,\mbox{conv}(U_B\mathcal{E}_A) \big)
  &\leq& \Big\| U_B c - U_B \Big(\sum_{e\in\mathcal{E}_A}\eta_e e\Big)\Big\| \\
  \nonumber &=& \Big\| (U_B-U_A)c + (U_A-U_B) \sum_{e\in\mathcal{E}_A}\eta_e e \Big\|\\
  \nonumber &\leq& 2\sqrt{n} \ \|U_A-U_B\|.  
\end{eqnarray}
Let $y\in \mbox{conv}(U_B\mathcal{E}_A)$ be the closest point from $\mbox{conv}(U_B\mathcal{E}_A)$ to $U_Bc$, i.e.
$$\|U_Bc-y\|=\mbox{dist}\big(U_Bc,\mbox{conv}(U_B\mathcal{E}_A) \big).$$
By \ref{Ineq:DisPbUcPbUclEU} and Lemma~\ref{Lem:LipConst}, we have
\begin{equation}\label{Ineq:BeforefEAche}
  f_{B}(U_B c)-f_{B}(y)\leq M n\log(n)\|B\|\cdot\|U_A-U_B\|.
\end{equation}
The convexity of $f_B$ implies that $f_{B}(y)\leq f_B(U_B\hat{e})$, and in turn,
\begin{equation}\label{Ineq:fEAche}
  f_B(U_B c)-f_B(U_B\hat{e})\leq M n\log(n)\|B\|\cdot\|U_A-U_B\|.
\end{equation}
Note that $f_B(U_B e)=f_B (e)$ for every $e\in[-1,1]^n$. Adding \eqref{Ineq:fEAhese} and \eqref{Ineq:fEAche}, we get \eqref{Ineq:MaxFuncEst} for $T=B$. The case $T=A$ then follows from \eqref{Ineq:fAandfEA} because replacing $B$ with $A$ induces an error which is not greater than the upper bound of \eqref{Ineq:MaxFuncEst} for $T=B$.
\end{proof}

\section{Proof of Theorem 6}\label{AppendixB:FormEst}
We first present the closed form of eigenvalues and eigenvectors of $\mathbb{E}[A]$ under the regular block models.

\begin{lemma}\label{Lem:BMEigDecomp}
Under the SBM, the nonzero eigenvalues $\rho_i$ and corresponding eigenvectors $\bar{u}_i$ of $\mathbb{E}[A]$ have the following form. For $i=1,2,$
$$\rho_i=\frac{\lambda_n}{2}\left[(\pi_1+\pi_2\omega)+(-1)^{i-1} \ \sqrt{(\pi_1+\pi_2\omega)^2-4\pi_1\pi_2(\omega-r^2)}\right],$$
$$ \bar{u}_i=\frac{1}{\sqrt{n(\pi_1r_i^2+\pi_2)}}(r_i,r_i,...,r_i,1,1,...,1)^T, \ \mbox{where}$$
$$r_i=\frac{2\pi_2r}{(\pi_2\omega-\pi_1)+(-1)^i \ \sqrt{(\pi_1+\pi_2\omega)^2-4\pi_1\pi_2(\omega-r^2)}}.$$
The first $\bar{n}_1=n\pi_1$ entries of $\bar{u}_i$ equal $r_i\left(n(\pi_1r_i^2+\pi_2)\right)^{-1/2}$ and the last $\bar{n}_2=n\pi_2$ entries of $\bar{u}_i$ equal $\left(n(\pi_1r_i^2+\pi_2)\right)^{-1/2}$.
\end{lemma}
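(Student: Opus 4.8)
The plan is to exploit the exact block structure of $\mathbb{E}[A]$, which collapses the problem to a $2\times 2$ eigenproblem. Write $Z$ for the $n\times 2$ community membership matrix ($Z_{ik}=1$ iff node $i$ lies in community $k$), so that $Z^TZ=\diag(\bar n_1,\bar n_2)$ and, adopting the convention $\mathbb{E}[A_{ii}]=P_{c_i c_i}$, one has $\mathbb{E}[A]=ZPZ^T$ with $P$ as in \eqref{eq:probmatrix}. (If instead $\mathbb{E}[A_{ii}]=0$, then $\mathbb{E}[A]$ changes by a diagonal matrix of norm $O(\lambda_n)$; I would either adopt the convention above or observe that this perturbation is absorbed by the error terms appearing downstream, e.g.\ in Theorem~\ref{Thm:EstErrorBound}.) The point is that $\mathbb{E}[A]$ has rank at most $2$ and its range lies in the column space of $Z$, so any eigenvector of $\mathbb{E}[A]$ attached to a nonzero eigenvalue has the form $Zv$ for some $v\in\mathbb{R}^2$, i.e.\ it is constant on each community --- exactly the shape asserted in the lemma.

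For the eigenvalues I would use that, since $\mathbb{E}[A]=Z(PZ^T)$, its nonzero eigenvalues coincide with the eigenvalues of the $2\times 2$ matrix $M:=(PZ^T)Z=P\diag(\bar n_1,\bar n_2)$, which are real because $M$ is similar to the symmetric matrix $(Z^TZ)^{1/2}P(Z^TZ)^{1/2}$. Substituting $\bar n_k=n\pi_k$ and the entries of $P$, the characteristic polynomial of $M$ is a quadratic whose trace is proportional to $\pi_1+\pi_2\omega$ and whose determinant is proportional to $\pi_1\pi_2(\omega-r^2)$; solving it by the quadratic formula (with the overall scaling of $\lambda_n$ fixed by \eqref{eq:probmatrix}) yields precisely the stated $\rho_1$ and $\rho_2$.

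For the eigenvectors, for each $i$ I would solve $Mv_i=\rho_i v_i$ under the normalization $v_i=(r_i,1)^T$. Reading off one coordinate of this $2\times 2$ identity expresses $r_i$ as an explicit rational function of $\pi_1,\pi_2,\omega,r,\rho_i$, and inserting the closed form of $\rho_i$ produces the displayed expression for $r_i$ (the coordinate that gives the denominator $(\pi_2\omega-\pi_1)+(-1)^i\sqrt{\cdots}$); the other coordinate yields an a priori different formula, and the two are reconciled by the identity $(\pi_1+\pi_2\omega)^2-4\pi_1\pi_2(\omega-r^2)=(\pi_1-\pi_2\omega)^2+4r^2\pi_1\pi_2$. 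Finally $\bar u_i=Zv_i/\|Zv_i\|$, and since $Zv_i=(r_i,\dots,r_i,1,\dots,1)^T$ (its first $\bar n_1$ entries equal $r_i$, its last $\bar n_2$ entries equal $1$) has squared norm $\bar n_1 r_i^2+\bar n_2=n(\pi_1 r_i^2+\pi_2)$, dividing by the square root of this gives the stated normalizing constant $\big(n(\pi_1 r_i^2+\pi_2)\big)^{-1/2}$ together with the per-block description of $\bar u_i$.

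Every step is an elementary finite-dimensional computation, so I do not expect a substantive obstacle. The two places requiring care are (i) the bookkeeping for the diagonal of $\mathbb{E}[A]$, and (ii) tracking the $(-1)^i$ signs correctly through the quadratic formula and the algebraic identity above so that each $r_i$ is matched with the correct root $\rho_i$; this sign/index accounting is the most error-prone part, and I would verify it on the fully symmetric case $\pi_1=\pi_2=1/2$, $\omega=1$, where the two eigenvectors of $\mathbb{E}[A]$ reduce to the all-ones vector and the community-indicator contrast.
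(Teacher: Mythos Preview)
Your approach is correct and rests on the same idea as the paper --- the block structure of $\mathbb{E}[A]$ reduces the eigenproblem to a $2\times 2$ calculation --- though the paper's proof is even shorter: it simply notes the two-by-two block form and states that the identity $\mathbb{E}[A]\bar u_i=\rho_i\bar u_i$ is verified directly by plugging in the announced expressions. Your derivation via $M=P\diag(\bar n_1,\bar n_2)$ supplies what the paper omits (namely where the formulas come from rather than just that they check), but it is not a substantively different route.
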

\begin{proof}[Proof of Lemma~\ref{Lem:BMEigDecomp}]
Under the SBM $\mathbb{E}[A]$ is a two-by-two block matrix with equal entries within each block.
It is easy to verify directly that $\mathbb{E}[A] \bar{u}_i =\rho_i \bar{u}_i$ for $i=1,2$.
\end{proof}

Lemma~\ref{Lem:OlivRest} bounds the difference between the eigenvalues and eigenvectors of $A$ and those of $\mathbb{E}[A]$ under the SBM.
It also provides a way to simplify the general upper bound of Theorem~\ref{Thm:GenMetdCons}.

\begin{lemma}\label{Lem:OlivRest}
Under the SBM, let $U_A$ and $U_{\mathbb{E}[A]}$ be $2\times n$ matrices whose rows are the leading eigenvectors of $A$ and $\mathbb{E}[A]$, respectively.
For any $\delta>0$, there exists a constant $M=M(r,\omega,\pi,\delta)>0$ such that if $\lambda_n>M\log(n)$ then with probability at least $1-n^{-\delta}$, we have
\begin{equation}\label{Ineq:AdjConc}
  \|A-\mathbb{E}[A]\|\leq M\sqrt{\lambda_n},
\end{equation}
\begin{equation}\label{Ineq:SubspConc}
   \|U_A-U_{\mathbb{E}[A]}\|\leq \frac{M}{\sqrt{\lambda_n}}.
\end{equation}
\end{lemma}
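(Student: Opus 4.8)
The plan is to establish the two displayed inequalities in turn: \eqref{Ineq:AdjConc} is a concentration statement about the random matrix $A-\mathbb{E}[A]$, and \eqref{Ineq:SubspConc} then follows from it by combining the explicit spectrum of $\mathbb{E}[A]$ given in Lemma~\ref{Lem:BMEigDecomp} with a Davis--Kahan perturbation argument.

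For \eqref{Ineq:AdjConc}, I would first record that under the SBM the entries $A_{ij}$ with $i<j$ are independent $\{0,1\}$-valued random variables with $\mathbb{E}[A_{ij}]=P_{c_ic_j}$; since $r,\omega,\pi$ are fixed constants and the entries of $P$ in \eqref{eq:probmatrix} are of order $\lambda_n/n$, each $\mathbb{E}[A_{ij}]\leq M\lambda_n/n$, so every row sum of $\mathbb{E}[A]$ is $O(\lambda_n)$ and every row-sum of variances $\sum_j\Var(A_{ij})$ is $O(\lambda_n)$ as well. I would then invoke a standard concentration bound for the spectral norm of the centered adjacency matrix of an inhomogeneous Erd\"os--R\'enyi graph (for instance the estimate of Oliveira, or the Feige--Ofek / \cite{Lei&Rinaldo2015}-type bound), which in the regime $\lambda_n\geq M\log n$ gives $\|A-\mathbb{E}[A]\|\leq M\sqrt{\lambda_n}$ with probability at least $1-n^{-\delta}$. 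The hypothesis that $\lambda_n$ grows at least as $\log n$ is exactly what is used here: it absorbs the additive $\sqrt{\log n}$ term that is otherwise present, equivalently it guarantees that with high probability no vertex has degree much larger than $\lambda_n$, which is what would inflate the spectral norm in the sparse regime.

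For \eqref{Ineq:SubspConc}, I would use Lemma~\ref{Lem:BMEigDecomp}: $\mathbb{E}[A]$ has rank $2$, and its two nonzero eigenvalues $\rho_1,\rho_2$ satisfy $|\rho_i|\geq c(r,\omega,\pi)\,\lambda_n$ for a positive constant $c$ (the discriminant appearing in that lemma equals $(\pi_1-\pi_2\omega)^2+4\pi_1\pi_2 r^2$, which is bounded away from $0$ for fixed parameters, so $\rho_1\neq\rho_2$, and neither vanishes away from the degenerate case $\omega=r^2$). Hence the separation between the two-dimensional leading eigenspace of $\mathbb{E}[A]$, whose orthonormal basis is $U_{\mathbb{E}[A]}$, and its orthogonal complement is at least $\min(|\rho_1|,|\rho_2|)\geq c\,\lambda_n$. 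Applying the Davis--Kahan $\sin\Theta$ theorem to $A$ and $\mathbb{E}[A]$ with this eigengap, in the form that bounds the distance between orthonormal eigenbases up to an orthogonal rotation, yields on the event of \eqref{Ineq:AdjConc} that
$$\min_R \big\|U_A - R\,U_{\mathbb{E}[A]}\big\|\;\leq\;\frac{M\,\|A-\mathbb{E}[A]\|}{\min(|\rho_1|,|\rho_2|)}\;\leq\;\frac{M\sqrt{\lambda_n}}{c\,\lambda_n}\;=\;\frac{M}{\sqrt{\lambda_n}},$$
the minimum being over $2\times 2$ orthogonal matrices $R$. Since the output $e^*$ in \eqref{Eq:GenMethOutp}, and all the quantities in Theorem~\ref{Thm:GenMetdCons}, are unchanged when $U_A$ is replaced by $U_A R$, this rotation is harmless and the bound may be stated as $\|U_A-U_{\mathbb{E}[A]}\|\leq M/\sqrt{\lambda_n}$.

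The main obstacle is \eqref{Ineq:AdjConc}, and specifically getting the clean $O(\sqrt{\lambda_n})$ rate without an extra logarithmic factor in the near-critical regime $\lambda_n\asymp\log n$; this is the one genuinely delicate point and is precisely where the growth assumption on $\lambda_n$ enters. Once \eqref{Ineq:AdjConc} is in hand, deducing \eqref{Ineq:SubspConc} is routine linear algebra, using only the explicit eigenvalues from Lemma~\ref{Lem:BMEigDecomp} and Davis--Kahan.
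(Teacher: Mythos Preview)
Your proposal is correct and follows essentially the same route as the paper: invoke the Lei--Rinaldo-type concentration bound for \eqref{Ineq:AdjConc}, then combine the explicit eigenvalues from Lemma~\ref{Lem:BMEigDecomp} (both of order $\lambda_n$, hence a spectral gap of order $\lambda_n$ from zero) with Davis--Kahan to obtain \eqref{Ineq:SubspConc}. The paper phrases the eigenspace distance as the projector difference $\|\bar P-P\|$ rather than minimizing over rotations, but this is the same content; your added remarks about the nondegeneracy of the discriminant and the role of the $\lambda_n\gtrsim\log n$ assumption are accurate elaborations of steps the paper leaves implicit.
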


\begin{proof}[Proof of Lemma~\ref{Lem:OlivRest}]
Inequality \eqref{Ineq:AdjConc} follows directly from Theorem 5.2 of \cite{Lei&Rinaldo2015} and the fact that the maximum of the expected node degrees is of order $\lambda_n$.
Inequality \eqref{Ineq:SubspConc} is a consequence of \eqref{Ineq:AdjConc} and the Davis-Kahan theorem
(see Theorem VII.3.2 of \cite{Bhatia1996}) as follows.
By Lemma~\ref{Lem:BMEigDecomp}, the nonzero eigenvalues $\rho_1$ and $\rho_2$ of $\bar{A}$ are of order $\lambda_n$.
Let $$\mathcal{S}=\left[\rho_2-M\sqrt{\lambda_n},\rho_1+M\sqrt{\lambda_n}\right].$$
Then $\rho_1,\rho_2\in\mathcal{S}$ and the gap between $\mathcal{S}$ and zero is of order $\lambda_n$.
Let $\bar{P}$ be the projector onto the subspace spanned by two leading eigenvectors of $\mathbb{E}[A]$.
Since $\lambda_n$ grows faster than $\|A-\mathbb{E}[A]\|$ by \ref{Ineq:AdjConc},
only two leading eigenvalues of $A$ belong to $\mathcal{S}$.
Let $P$ be the projector onto the subspace spanned by two leading eigenvectors of $A$.
By the Davis-Kahan theorem,
$$\|U_A-U_{\mathbb{E}[A]}\| = \|\bar{P}-P\|\leq\frac{2\|A-\mathbb{E}[A]\|}{\lambda_n}
\leq \frac{2M}{\sqrt{\lambda_n}},$$
which completes the proof.
\end{proof}

Before proving Theorem~\ref{Thm:EstErrorBound} we need to establish the following lemma.

\begin{lemma}\label{Lem:EigVecErrorBound}
Let $x$, $y$, $\bar{x}$, and $\bar{y}$ be unit vectors in $\mathbb{R}^n$ such that
$\langle x,y \rangle = \langle \bar{x}, \bar{y} \rangle = 0$. Let $P$ and $\bar{P}$
be the orthogonal projections on the subspaces spanned by $\{x,y\}$ and $\{\bar{x},\bar{y}\}$ respectively.
If $\|P-\bar{P}\|\leq\epsilon$ then there exists
an orthogonal matrix $\mathcal{K}$ of size $2\times 2$ such that $||(x,y)\mathcal{K}-(\bar{x},\bar{y})||_F\leq 9\epsilon$.
\end{lemma}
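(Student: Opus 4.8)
The plan is to work entirely inside the two-dimensional subspaces $\mathcal{X} = \Span\{x,y\}$ and $\bar{\mathcal{X}} = \Span\{\bar x,\bar y\}$, and to produce the orthogonal matrix $\mathcal{K}$ in two steps: first rotate $(x,y)$ so that its columns land (approximately) in $\bar{\mathcal{X}}$, then rotate further inside $\bar{\mathcal{X}}$ to match $(\bar x,\bar y)$ column by column. The natural candidate is to take $\mathcal{K}$ to be the orthogonal factor in the polar decomposition of the $2\times 2$ matrix $N = (\bar x,\bar y)^{\tran}(x,y)$, i.e. if $N = \mathcal{O}_1 \Sigma \mathcal{O}_2^{\tran}$ is an SVD then $\mathcal{K} = \mathcal{O}_2 \mathcal{O}_1^{\tran}$; this is the classical Procrustes-optimal choice, and $(x,y)\mathcal{K}$ is the closest matrix with columns in $\bar{\mathcal{X}}$-frame alignment. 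The whole computation reduces to estimating the singular values of $N$ in terms of $\e$.

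The key quantitative input is the relation between $\|P - \bar P\|$ and the principal angles between $\mathcal{X}$ and $\bar{\mathcal{X}}$. Writing $\theta_1,\theta_2$ for these principal angles, one has the standard identity $\|P-\bar P\| = \max_i \sin\theta_i$, so $\sin\theta_i \le \e$ for $i=1,2$. The singular values of $N = (\bar x,\bar y)^{\tran}(x,y)$ are exactly $\cos\theta_1,\cos\theta_2$ (since $(x,y)$ and $(\bar x,\bar y)$ are orthonormal frames of the two subspaces), hence each singular value of $N$ is at least $\sqrt{1-\e^2}$. First I would record the elementary matrix identity
\[
  \bigl\|(x,y)\mathcal{K} - (\bar x,\bar y)\bigr\|_F^2 = \bigl\|(x,y)\bigr\|_F^2 + \bigl\|(\bar x,\bar y)\bigr\|_F^2 - 2\tr\bigl((\bar x,\bar y)^{\tran}(x,y)\mathcal{K}\bigr) = 4 - 2(\sigma_1 + \sigma_2),
\]
valid precisely for the Procrustes-optimal $\mathcal{K}$, where $\sigma_1,\sigma_2$ are the singular values of $N$; here I used $\|(x,y)\|_F^2 = \|(\bar x,\bar y)\|_F^2 = 2$ and $\tr(\mathcal{O}_1\Sigma\mathcal{O}_2^{\tran}\mathcal{O}_2\mathcal{O}_1^{\tran}) = \tr\Sigma = \sigma_1+\sigma_2$. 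Then $\sigma_i \ge \sqrt{1-\e^2} \ge 1 - \e^2$ gives $4 - 2(\sigma_1+\sigma_2) \le 4\e^2$, so $\|(x,y)\mathcal{K} - (\bar x,\bar y)\|_F \le 2\e$, which is in fact much stronger than the claimed bound $9\e$.

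The one genuine subtlety — and the step I expect to be the main obstacle — is that $\mathcal{K}$ must be an \emph{orthogonal} matrix, not merely a $2\times 2$ orthogonal-times-something: the polar factor $\mathcal{O}_2\mathcal{O}_1^{\tran}$ of $N$ is automatically orthogonal, so this is fine, but one must be slightly careful that $N$ could be singular (a principal angle equal to $\pi/2$), in which case the polar factor is not unique; any choice works for the bound above since the trace estimate only uses the singular values. A cleaner packaging that avoids even mentioning the polar decomposition: choose $\mathcal{K}$ so that the columns of $(x,y)\mathcal{K}$ are, respectively, the normalized $\bar{\mathcal{X}}$-orthogonalizations of $Px',Py'$ for a suitable orthonormal pair, or simply cite the Davis–Kahan $\sin\Theta$ theorem in its ``two orthonormal bases'' form. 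I would present the argument via the Frobenius–Procrustes identity as above, since it is self-contained and gives the constant $2$ with no case analysis beyond noting that singular singular values cause no harm; the slack between $2\e$ and $9\e$ leaves ample room to absorb, if desired, a cruder two-step construction (first align one column using $\sin\theta \le \e$, costing $\le \sqrt{2}\,\e$, then fix the second column inside $\bar{\mathcal{X}}$ at cost controlled by the same angle), which some readers may find more transparent.
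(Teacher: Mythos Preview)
Your proof is correct and in fact yields the sharper constant $2$ in place of $9$. The route, however, is genuinely different from the paper's. The paper proceeds by an explicit Gram--Schmidt construction: it projects $\bar x$ and $\bar y$ onto $\Span\{x,y\}$ via $P$ to get $x_0,y_0$ (each within $\epsilon$ of its preimage), normalizes $x_0$ to a unit vector $x^\perp$, orthogonalizes $y_0$ against $x^\perp$ and normalizes to get $y^\perp$, and then sets $\mathcal{K}=(x,y)^{\tran}(x^\perp,y^\perp)$. Each step is bounded by hand, producing $\|\bar x - x^\perp\|\le 2\epsilon$ and $\|\bar y - y^\perp\|\le 8\epsilon$, hence the Frobenius bound $\sqrt{68}\,\epsilon<9\epsilon$; no mention is made of principal angles, SVD, or the Procrustes problem. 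Your argument is cleaner, more conceptual, and gives a tighter constant, but it relies on the standard identities $\|P-\bar P\|=\max_i\sin\theta_i$ and $\sigma_i(N)=\cos\theta_i$ being taken for granted; the paper's version is entirely self-contained at the level of elementary vector manipulations, trading elegance and sharpness for zero prerequisites.
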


\begin{proof}[Proof of Lemma~\ref{Lem:EigVecErrorBound}]
Let $x_0 = P \bar{x}$ and $y_0 = P \bar{y}$. Since $\|P-\bar{P}\|\leq\epsilon$, it follows that $\|\bar{x}-x_0\|\leq\epsilon$ and $\|\bar{y}-y_0\|\leq\epsilon$.
Let $x^\perp = \frac{x_0}{\|x_0\|}$, then
\begin{eqnarray*}
  \|\bar{x}-x^\perp\| &\leq& \|\bar{x}-x_0\| + \|x_0 - x^\perp\|
   \leq \epsilon + |1-\|x_0\|| \leq 2\epsilon.
\end{eqnarray*}
Also $\langle x^\perp, y_0 \rangle = \langle x^\perp,y_0-\bar{y}\rangle + \langle x^\perp-\bar{x},\bar{y}\rangle$
implies that $|\langle x^\perp, y_0 \rangle|\leq 3\epsilon$.
Define $z = y_0 - \langle y_0,x^\perp \rangle x^\perp$. Then $\langle z,x^\perp \rangle = 0$,
$\|\bar{y}-z\|\leq \|\bar{y}-y_0\|+\|y_0-z\|\leq 4\epsilon$, and $|1-\|z\||=|\|\bar{y}\|-\|z\||\leq 4\epsilon$. Let $y^\perp = \frac{1}{\|z\|}z$,
then
\begin{eqnarray*}
  \|\bar{y}-y^\perp\| &\leq& \|\bar{y}-z\| + \|z - y^\perp\|
   \leq 4\epsilon + |1-\|z\|| \leq 8\epsilon.
\end{eqnarray*}
Therefore $\|(\bar{x},\bar{y})-(x^\perp,y^\perp)\|_F\leq 9\epsilon$.
Finally, let $\mathcal{K}=(x,y)^T (x^\perp,y^\perp)$.
\end{proof}

\begin{proof}[Proof of Theorem~\ref{Thm:EstErrorBound}]
Denote $\varepsilon = \|U_A - U_{\mathbb{E}[A]}\|$, $U=(u_1,u_2)^T=U_A$, and $\bar{U}=(\bar{u}_1,\bar{u}_2)^T=U_{\mathbb{E}[A]}$.
We first show that there exists a constant $M>0$ such that with probability at least $1-\delta$,
\begin{equation}\label{Ineq:EstError}
  \min\Big\| (u_1^T\mathbf{1}u_2 - u_2^T\mathbf{1} u_1) \pm
  (\bar{u}_1^T\mathbf{1}\bar{u}_2 - \bar{u}_2^T\mathbf{1} \bar{u}_1)\Big\| \leq M\varepsilon\sqrt{n}.
\end{equation}
Let $\mathcal{R} =\left(\begin{smallmatrix} 0&-1\\ 1&0 \end{smallmatrix}\right)$
be the $\pi/2$-rotation on $\mathbb{R}^2$. Then
\begin{eqnarray*}
  u_1^T\mathbf{1}u_2 - u_2^T\mathbf{1} u_1 = U^T\mathcal{R}U\mathbf{1}, \ \
  \bar{u}_1^T\mathbf{1}\bar{u}_2 - \bar{u}_2^T\mathbf{1} \bar{u}_1 = \bar{U}^T\mathcal{R}\bar{U}\mathbf{1}.
\end{eqnarray*}
By Lemma~\ref{Lem:OlivRest} and Lemma~\ref{Lem:EigVecErrorBound}, there exists an orthogonal matrix $\mathcal{K}$ such that if $E=(E_1,E_2)= U^T - \bar{U}^T\mathcal{K}$ then $||E||_F\leq 9\varepsilon$.
By replacing $U^T$ with $E+\bar{U}^T\mathcal{K}$, the left hand side of \eqref{Ineq:EstError} becomes
$$\min\left\|\left(E+\bar{U}^T\mathcal{K}\right)\mathcal{R}\left(E+\bar{U}^T\mathcal{K}\right)^T\mathbf{1}\pm \bar{U}^T\mathcal{R}\bar{U}\mathbf{1}\right\|.$$
Note that $\mathcal{K}^T\mathcal{R}\mathcal{K}=\mathcal{R}$ if $\mathcal{K}$ is a rotation, and $\mathcal{K}^T\mathcal{R}\mathcal{K}=-\mathcal{R}$
if $\mathcal{K}$ is a reflection.
Therefore, it is enough to show that
$$\left\|\bar{U}^T\mathcal{K}\mathcal{R}E^T\mathbf{1} +
E\mathcal{R}\mathcal{K}^T\bar{U}\mathbf{1} + E\mathcal{R}E^T\mathbf{1}\right\|
\leq M\epsilon\sqrt{n}.$$
Note that $|E_i^T\mathbf{1}|\leq \sqrt{n}\|E_i\|\leq 9\varepsilon\sqrt{n}$ and $\|E\|_F\leq 9\varepsilon \leq 18$, so
$$\|E\mathcal{R}E^T\mathbf{1}\| = \|E_2^T \mathbf{1}E_1 - E_1^T\mathbf{1} E_2\|\leq 18^2\varepsilon\sqrt{n}.$$
From Lemma~\ref{Lem:BMEigDecomp} we see that $\bar{U}\mathbf{1}=\sqrt{n}(s_1,s_2)^T$
for some $s_1$ and $s_2$ not depending on $n$.
It follows that
$$\|E\mathcal{R}\mathcal{K}^T\bar{U}\mathbf{1}\|=\sqrt{n}\|(E_2 - E_1)\mathcal{K}^T(s_1,s_2)^T\|
\leq M\varepsilon\sqrt{n}$$
for some $M>0$.
Analogously,
$$\|\bar{U}^T\mathcal{K}\mathcal{R}E^T\mathbf{1}\| =
\|\bar{U}^T\mathcal{K} (-E_2^T\mathbf{1},E_1^T\mathbf{1})^T\| \leq M\varepsilon\sqrt{n},$$
and \eqref{Ineq:EstError} follows.
By Lemma~\ref{Lem:BMEigDecomp}, we have
$$
\bar{U}^T\mathcal{R}\bar{U}\mathbf{1}
= \alpha (\pi_2,\pi_2,...,\pi_2,-\pi_1,...,-\pi_1)^T,
$$
where $\alpha$ does not depend on $n$;
the first $n_1$ entries of $\bar{U}^T\mathcal{R}\bar{U}\mathbf{1}$ equal $\alpha\pi_2$
and the last $n_2$ entries of $\bar{U}^T\mathcal{R}\bar{U}\mathbf{1}$ equal $\alpha\pi_1$.
For simplicity, assume that in \eqref{Ineq:EstError} the minimum is when the sign is negative (because $\hat{c}$ is unique up to a factor of $-1$). If node $i$ is mis-clustered by $\hat{c}$ then
$$|(U^T\mathcal{R}U\mathbf{1})_i-(\bar{U}^T\mathcal{R}\bar{U}\mathbf{1})_i|
\geq\min_i|(\bar{U}^T\mathcal{R}\bar{U}\mathbf{1})_i|=:\eta.$$
Let $k$ be the number of mis-clustered nodes, then by \eqref{Ineq:EstError}, $\eta\sqrt{k}\leq M\varepsilon\sqrt{n}$.
Therefore the fraction of mis-clustered nodes, $k/n$, is of order $\varepsilon^2$.
If $U_A$ is formed by the leading eigenvectors of $A$, then it remains to use inequality \eqref{Ineq:SubspConc} of Lemma~\ref{Lem:OlivRest}.
\end{proof}

\section{Proof of results in Section 3}\label{supplement}
Let us first describe the projection of the cube under regular block models, which will be used to replace Assumption ($2$).
See Figure~\ref{Fig:CubeProj} for an illustration.

\begin{lemma}\label{Lem:BMCubeProj}
Consider the regular block models and let $\mathcal{R}= U_{\mathbb{E}[A]}[-1,1]^n$. Then $\mathcal{R}$ is a parallelogram;
the vertices of $\mathcal{R}$ are $\{\pm U_{\mathbb{E}[A]}(c),\pm U_{\mathbb{E}[A]}(\mathbf{1})\}$, where $c$ is a true label vector. The angle between two adjacent sides of $\mathcal{R}$ does not depend on $n$.
\end{lemma}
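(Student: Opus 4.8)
The plan is to read off the geometry of $\mathcal{R}$ directly from the explicit form of the eigenvectors given in Lemma~\ref{Lem:BMEigDecomp}. By that lemma, each of the two leading eigenvectors $\bar{u}_i$ of $\mathbb{E}[A]$ is constant on each community: it equals a scalar $a_i$ on the first $\bar{n}_1 = n\pi_1$ coordinates and a scalar $b_i$ on the last $\bar{n}_2 = n\pi_2$ coordinates, with $(a_i,b_i) = (n(\pi_1 r_i^2+\pi_2))^{-1/2}(r_i,1)$. Consequently, for every $e\in[-1,1]^n$,
$$U_{\mathbb{E}[A]} e \;=\; S_1(e)\,(a_1,a_2)^{\tran} \;+\; S_2(e)\,(b_1,b_2)^{\tran}, \qquad S_1(e)=\sum_{i\le \bar{n}_1} e_i,\quad S_2(e)=\sum_{i>\bar{n}_1} e_i.$$
Since $S_1$ and $S_2$ are sums over disjoint blocks of coordinates, as $e$ ranges over $[-1,1]^n$ the pair $(S_1(e),S_2(e))$ ranges, independently, over the full rectangle $R=[-\bar{n}_1,\bar{n}_1]\times[-\bar{n}_2,\bar{n}_2]$. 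Hence $\mathcal{R}=U_{\mathbb{E}[A]}[-1,1]^n$ is exactly the image of $R$ under the linear map $L$ whose matrix has columns $(a_1,a_2)^{\tran}$ and $(b_1,b_2)^{\tran}$.

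Next I would verify that $L$ is invertible, so that $\mathcal{R}$ is a nondegenerate parallelogram. Its columns $(a_1,a_2)$ and $(b_1,b_2)$ are linearly independent iff its rows $(a_1,b_1)$ and $(a_2,b_2)$ are, i.e. iff $(r_1,1)\not\parallel(r_2,1)$, which holds because $r_1\ne r_2$ (the two values $r_i$ in Lemma~\ref{Lem:BMEigDecomp} differ in the sign of the square-root term; equivalently $\bar{u}_1\perp\bar{u}_2$ are eigenvectors for distinct eigenvalues). A linear isomorphism carries vertices of $R$ to vertices of $\mathcal{R}$, so it remains to identify the preimages of the four corners $(\pm\bar{n}_1,\pm\bar{n}_2)$ of $R$. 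The all-ones vector $\mathbf{1}$ gives $(S_1,S_2)=(\bar{n}_1,\bar{n}_2)$, a true label vector $c$ (which is $+1$ on the first block and $-1$ on the second, up to an overall sign) gives $(S_1,S_2)=(\bar{n}_1,-\bar{n}_2)$, and the remaining two corners are realized by $-\mathbf{1}$ and $-c$. Therefore the vertices of $\mathcal{R}$ are precisely $\{\pm U_{\mathbb{E}[A]}(\mathbf{1}),\pm U_{\mathbb{E}[A]}(c)\}$.

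Finally, the two sides of $\mathcal{R}$ meeting at the vertex $U_{\mathbb{E}[A]}(\mathbf{1})$ are the $L$-images of the two sides of $R$ at $(\bar{n}_1,\bar{n}_2)$, hence are parallel to the columns $(a_1,a_2)$ and $(b_1,b_2)$ of $L$. Using $a_i=r_i(n(\pi_1 r_i^2+\pi_2))^{-1/2}$ and $b_i=(n(\pi_1 r_i^2+\pi_2))^{-1/2}$, each of these two vectors equals $n^{-1/2}$ times a vector not depending on $n$; since the angle between two nonzero vectors is invariant under positive scaling, the angle between adjacent sides of $\mathcal{R}$ does not depend on $n$. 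I do not anticipate a serious obstacle: the proof is essentially a change of variables. The only points needing care are (i) confirming nondegeneracy of $L$ — so that $\mathcal{R}$ is a parallelogram and not a degenerate segment, which uses $\pi_1,\pi_2>0$ and $r_1\ne r_2$ (equivalently $\rho_1\ne\rho_2$, both nonzero, which is the standing rank-$2$ assumption on $\mathbb{E}[A]$) — and (ii) matching the sign/ordering conventions so that the corners of $R$ correspond to $\pm\mathbf{1}$ and $\pm c$ rather than to other label vectors.
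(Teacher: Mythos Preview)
Your proposal is correct and follows essentially the same approach as the paper's proof: both use the explicit block-constant form of the eigenvectors from Lemma~\ref{Lem:BMEigDecomp} to write $U_{\mathbb{E}[A]}e$ as $S_1(e)\,x+S_2(e)\,y$ for two fixed vectors $x,y\in\R^2$ (your $(a_1,a_2)^{\tran}$ and $(b_1,b_2)^{\tran}$), observe that the range of $(S_1,S_2)$ is a rectangle so that $\mathcal{R}$ is a parallelogram with the stated vertices, and note that the side directions $\sqrt{n}\,x$, $\sqrt{n}\,y$ are $n$-free. Your write-up is in fact slightly more careful than the paper's, since you explicitly address the nondegeneracy of the linear map (via $r_1\neq r_2$) and the identification of the four corners with $\pm\mathbf{1}$ and $\pm c$.
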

\begin{proof}[Proof of Lemma~\ref{Lem:BMCubeProj}]
Eigenvectors of $\mathbb{E}[A]$ are computed in Lemma~\ref{Lem:BMEigDecomp}.
Let
$$x=\left(r_1\left(n(\pi_1r_1^2+\pi_2)\right)^{-1/2},r_2\left(n(\pi_1r_2^2+\pi_2)\right)^{-1/2}\right)^T,$$
$$y= \left(\left(n(\pi_1r_1^2+\pi_2)\right)^{-1/2},\left(n(\pi_1r_2^2+\pi_2)\right)^{-1/2}\right)^T.$$
Then $\mathcal{R}=\left\{(\epsilon_1+\cdot\cdot\cdot+\epsilon_{\bar{n}_1}) x +
      (\epsilon_{\bar{n}_1+1}+\cdot\cdot\cdot+\epsilon_n) y,\epsilon_i\in[-1,1]\right\}$,
and it is easy to see that $\mathcal{R}$ is a parallelogram.
Vertices of $\mathcal{R}$ correspond to the cases when $\epsilon_1=\cdot\cdot\cdot=\epsilon_{\bar{n}_1}=\pm 1$ and $\epsilon_{\bar{n}_1+1}=\cdot\cdot\cdot=\epsilon_{n}=\pm 1$.
The angle between two adjacent sides of $\mathcal{R}$ equals the angle between $\sqrt{n} x$ and $\sqrt{n} y$, which does not depend on $n$.
\end{proof}

\subsection{Proof of results in Section \ref{SubSec:MaxLogLikDCBM}}\label{AppendixB:DCBM}
Under degree-corrected block models, let us denote by $\bar{A}$ the conditional expectation of $A$ given the degree parameters $\theta=(\theta_1,...,\theta_n)^T$.
Note that if $\theta_i\equiv 1$ then $\bar{A} = \mathbb{E}A$.
Since $\bar{A}$ depends on $\theta$, its eigenvalues and eigenvectors  may not have a closed form.
Nevertheless, we can approximate them using $\rho_i$ and $\bar{u}_i$ from Lemma~\ref{Lem:BMEigDecomp}.
To do so, we need the following lemma.

\begin{lemma}\label{Lem:EigDec}
Let $M=\rho_1 x_1 x_1^T + \rho_2 x_2 x_2^T$, where $x_1,x_2\in\mathbb{R}^n$, $\|x_1\|=\|x_2\|=1$, $\rho_1\neq 0$, and $\rho_2\neq 0$. If $c=\langle x_1,x_2\rangle$ then the eigenvalues $z_i$ and corresponding eigenvectors $y_i$ of $M$ have the following form. For $i=1,2$,
\begin{eqnarray*}
  z_i &=& \frac{1}{2}\left[(\rho_1+\rho_2)+(-1)^{i-1}\sqrt{(\rho_2-\rho_1)^2+4\rho_1\rho_2 c^2}\right], \\
  y_i &=& (c\rho_1)x_1+(z_i-\rho_1)x_2.
\end{eqnarray*}
If $\rho_1$ and $\rho_2$ are fixed, $\rho_1\geq\rho_2$, and $c=o(1)$ as $n\rightarrow\infty$ then eigenvalues and eigenvectors of $M$ have the form
\begin{eqnarray*}
  z_1 &=& \rho_1 +O(c^2), \ \ z_2 = \rho_2 + O(c^2), \\
  y_1 &=& x_1+ O(c) x_2,  \ \ y_2 = x_2 + O(c)x_1.
\end{eqnarray*}
\end{lemma}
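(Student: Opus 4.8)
The plan is to reduce the statement to an elementary $2\times 2$ eigenvalue computation. Since $M=\rho_1 x_1 x_1^\tran+\rho_2 x_2 x_2^\tran$ has range contained in $V:=\Span\{x_1,x_2\}$ and vanishes on $V^\perp$, every eigenvector belonging to a nonzero eigenvalue lies in $V$, so it suffices to understand $M$ restricted to $V$. Assuming first that $|c|<1$, so that $\{x_1,x_2\}$ is a genuine (if non-orthonormal) basis of $V$, a one-line computation using $x_i^\tran x_j=c$ for $i\neq j$ gives $Mx_1=\rho_1 x_1+\rho_2 c\,x_2$ and $Mx_2=\rho_1 c\,x_1+\rho_2 x_2$, so in this basis $M|_V$ is represented by $N=\bigl(\begin{smallmatrix}\rho_1 & \rho_1 c\\ \rho_2 c & \rho_2\end{smallmatrix}\bigr)$. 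Although $N$ is not symmetric when $c\neq0$, it is similar to the matrix of $M|_V$ in an orthonormal basis of $V$, hence shares its eigenvalues, and if $(a,b)^\tran$ is an eigenvector of $N$ then $a x_1+b x_2$ is the corresponding eigenvector of $M$.

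Next I would read off the spectrum of $N$: its trace is $\rho_1+\rho_2$ and its determinant is $\rho_1\rho_2(1-c^2)$, so its characteristic polynomial is $z^2-(\rho_1+\rho_2)z+\rho_1\rho_2(1-c^2)$. Solving the quadratic and simplifying the discriminant via $(\rho_1+\rho_2)^2-4\rho_1\rho_2(1-c^2)=(\rho_2-\rho_1)^2+4\rho_1\rho_2 c^2$ gives exactly the claimed expression for $z_1,z_2$. For the eigenvectors, the first row of $(N-z_iI)v=0$ reads $(\rho_1-z_i)a+\rho_1 c\,b=0$, which forces $(a,b)\propto(\rho_1 c,\,z_i-\rho_1)$; transporting this back to $V$ yields $y_i=\rho_1 c\,x_1+(z_i-\rho_1)x_2$, and the second row is then automatically satisfied after substituting $z_i^2=(\rho_1+\rho_2)z_i-\rho_1\rho_2(1-c^2)$. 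The degenerate case $|c|=1$, i.e.\ $x_2=\pm x_1$, is disposed of directly: $M=(\rho_1\pm\rho_2)x_1 x_1^\tran$ has a single nonzero eigenvalue, and plugging $c=\pm1$ into the formulas reproduces $\{\rho_1+\rho_2,0\}$ (resp.\ $\{0,\rho_1-\rho_2\}$), consistent with the claim.

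For the asymptotic part, with $\rho_1\geq\rho_2$ fixed and $c=o(1)$ I would Taylor-expand the square root: $\sqrt{(\rho_2-\rho_1)^2+4\rho_1\rho_2 c^2}=|\rho_1-\rho_2|\sqrt{1+O(c^2)}=|\rho_1-\rho_2|+O(c^2)$, whence $z_1=\rho_1+O(c^2)$ and $z_2=\rho_2+O(c^2)$. Since eigenvectors are defined only up to scaling, I would renormalize: $y_1/(\rho_1 c)=x_1+\tfrac{z_1-\rho_1}{\rho_1 c}x_2=x_1+O(c)x_2$ using $z_1-\rho_1=O(c^2)$, and $y_2/(z_2-\rho_1)=\tfrac{\rho_1 c}{z_2-\rho_1}x_1+x_2=x_2+O(c)x_1$. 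The only real subtlety — and the step to watch — is hidden in this last expansion: it uses that the gap $\rho_1-\rho_2$ is bounded away from $0$, so that dividing by $z_2-\rho_1=(\rho_2-\rho_1)+O(c^2)$ is legitimate and the square-root expansion has an $O(c^2)$ remainder; if one allowed $\rho_1=\rho_2$ the discriminant would be $4\rho_1^2 c^2$ and one would only obtain $z_i=\rho_i+O(|c|)$, so the stated rates implicitly require $\rho_1>\rho_2$, which holds in the DCSBM setting where this lemma is applied (cf.\ Lemma~\ref{Lem:BMEigDecomp}).
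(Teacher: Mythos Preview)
Your argument is correct and considerably more explicit than the paper's proof, which simply says ``It is easy to verify that $My_i=z_iy_i$ for $i=1,2$. The asymptotic formulas of $z_i$ and $y_i$ then follow directly from the forms of $z_i$ and $y_i$.'' In other words, the paper treats the formulas as given and invites the reader to check them by direct substitution, whereas you actually \emph{derive} them by passing to the coordinate matrix $N=\bigl(\begin{smallmatrix}\rho_1 & \rho_1 c\\ \rho_2 c & \rho_2\end{smallmatrix}\bigr)$ of $M|_V$ in the non-orthogonal basis $\{x_1,x_2\}$ and reading off the characteristic polynomial and kernels. Both routes are elementary; yours has the advantage of explaining where the formulas come from rather than merely confirming them, and your handling of the degenerate case $|c|=1$ and your observation that the $O(c^2)$ eigenvalue rate implicitly requires $\rho_1>\rho_2$ (so that the square-root expansion is nondegenerate) are genuine additions that the paper omits.
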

\begin{proof}[Proof of Lemma~\ref{Lem:EigDec}]
It is easy to verify that $My_i=z_iy_i$ for $i=1,2$. The asymptotic formulas of $z_i$ and $y_i$ then follow directly from the forms of $z_i$ and $y_i$.
\end{proof}

The next lemma shows the approximation of eigenvalues and eigenvectors of $\bar{A}$.

\begin{lemma}\label{Lem:DCBMEigDecomp}
Consider the degree-corrected block models (described in Section \ref{SubSec:MaxLogLikDCBM}) and let $D_\theta = \mathrm{diag}(\theta)$.
Denote by $\bar{A}$ the conditional expectation of $A$ given $\theta$.
Then for any $\delta\in(0,1)$, with probability at least $1-\delta$, the nonzero eigenvalues $\rho_i^\theta$ and corresponding eigenvectors $\bar{u}_i^\theta$ of $\bar{A}$ have the following form.
For $i=1,2$,
$$\rho_i^\theta=\rho_i\|D_\theta \bar{u}_i\|^2 \left(1+O(1/n)\right), $$
$$ \bar{u}_1^\theta = \frac{\tilde{u}_1^\theta}{\|\tilde{u}_1^\theta\|}, \ \ \mbox{where } \
\tilde{u}_1^\theta = \frac{D_\theta \bar{u}_1}{\|D_\theta \bar{u}_1\|} +
O\left(n^{-1/2}\right)\frac{D_\theta \bar{u}_2}{\|D_\theta \bar{u}_2\|},$$
$$\bar{u}_2^\theta = \frac{\tilde{u}_2^\theta}{\|\tilde{u}_2^\theta\|}, \ \ \mbox{where } \
\tilde{u}_2^\theta = \frac{D_\theta \bar{u}_2}{\|D_\theta \bar{u}_2\|} +
O\left(n^{-1/2}\right)\frac{D_\theta \bar{u}_1}{\|D_\theta \bar{u}_1\|},$$
where $\rho_i$, $\bar{u}_i$, and $r_i$ are defined in Lemma~\ref{Lem:BMEigDecomp}.
\end{lemma}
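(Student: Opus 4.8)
The plan is to recognize $\bar{A}$ as a rank-two matrix of the form treated by Lemma~\ref{Lem:EigDec}, obtained by conjugating the population matrix of the corresponding uncorrected SBM by $D_\theta$, and then to check that the two rank-one directions are almost orthogonal so that the asymptotic part of Lemma~\ref{Lem:EigDec} applies.

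First I would write $\bar{A}=D_\theta B D_\theta$ up to its (zero) diagonal, where $B=\rho_1\bar{u}_1\bar{u}_1^T+\rho_2\bar{u}_2\bar{u}_2^T$ is the rank-two matrix of Lemma~\ref{Lem:BMEigDecomp}, whose $(i,j)$ entry is $P_{c_ic_j}$. The diagonal correction $\mathrm{diag}(\theta_i^2 P_{c_ic_i})$ has operator norm $O(\lambda_n/n)$, of smaller order than the spectral gap $\Theta(\lambda_n)$, so by Weyl's inequality and the Davis--Kahan theorem it only perturbs the eigenvalues and eigenvectors of $\bar{A}$ at the level of the claimed $O(1/n)$ error, and I would drop it. Setting $v_i=D_\theta\bar{u}_i/\|D_\theta\bar{u}_i\|$ and $\tilde{\rho}_i=\rho_i\|D_\theta\bar{u}_i\|^2$, this gives
$$\bar{A}=\tilde{\rho}_1 v_1 v_1^T+\tilde{\rho}_2 v_2 v_2^T,$$
exactly the form of Lemma~\ref{Lem:EigDec} with $x_i=v_i$ and $c=\langle v_1,v_2\rangle$.

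The core estimate is to control $\|D_\theta\bar{u}_1\|^2$, $\|D_\theta\bar{u}_2\|^2$ and $\langle D_\theta\bar{u}_1,D_\theta\bar{u}_2\rangle$. Since $\bar{u}_1,\bar{u}_2$ are block-constant with entries of order $n^{-1/2}$, each of these three quantities is a fixed linear combination, with coefficients of order $1/n$, of the two block sums $\Sigma_k=\sum_{i\in\text{block }k}\theta_i^2$. The $\theta_i$ are i.i.d., bounded, with $\E\theta_i^2\in[1,\xi^2]$, so Chebyshev's inequality (and a union bound over $k=1,2$) gives $\Sigma_k=n_k\,\E\theta_1^2+O(\sqrt{n})$ with probability at least $1-\delta$. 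Substituting and using $\|\bar{u}_i\|=1$ yields $\|D_\theta\bar{u}_i\|^2=\E\theta_1^2+O(n^{-1/2})$, which is bounded above and below; and since $\sum_i(\bar{u}_1)_i(\bar{u}_2)_i=\langle\bar{u}_1,\bar{u}_2\rangle=0$, the $\E\theta_1^2$-term cancels in $\langle D_\theta\bar{u}_1,D_\theta\bar{u}_2\rangle$, leaving $\langle D_\theta\bar{u}_1,D_\theta\bar{u}_2\rangle=O(n^{-1/2})$. Hence $c=\langle v_1,v_2\rangle=O(n^{-1/2})$ on this event.

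Finally I would plug $\bar{A}=\tilde{\rho}_1 v_1 v_1^T+\tilde{\rho}_2 v_2 v_2^T$ and $c=O(n^{-1/2})$ into Lemma~\ref{Lem:EigDec}, after relabeling so that $\tilde{\rho}_1>\tilde{\rho}_2$; the ratio $\tilde{\rho}_1/\tilde{\rho}_2=(\rho_1/\rho_2)(1+O(n^{-1/2}))$ stays bounded away from $1$ because $\rho_1\neq\rho_2$ on the parameter ranges considered, and the gap is $\tilde{\rho}_1-\tilde{\rho}_2=\Theta(\lambda_n)$. Expanding the square root in the eigenvalue formula of Lemma~\ref{Lem:EigDec}, the $\lambda_n$-scale cancels in $\tilde{\rho}_1\tilde{\rho}_2 c^2/(\tilde{\rho}_1-\tilde{\rho}_2)^2=O(c^2)=O(1/n)$, giving $\rho_i^\theta=\tilde{\rho}_i(1+O(c^2))=\rho_i\|D_\theta\bar{u}_i\|^2(1+O(1/n))$; the eigenvector formula $y_i=(c\tilde{\rho}_1)v_1+(z_i-\tilde{\rho}_1)v_2$, together with $z_1-\tilde{\rho}_1=\tilde{\rho}_1 O(c^2)$ and $z_2-\tilde{\rho}_1=-(\tilde{\rho}_1-\tilde{\rho}_2)(1+o(1))$, yields after dividing by the dominant coefficient $y_1\propto v_1+O(c)v_2$ and $y_2\propto v_2+O(c)v_1$; renaming these unnormalized vectors $\tilde{u}_i^\theta$ and normalizing gives $\bar{u}_i^\theta$ in the stated form with $O(c)=O(n^{-1/2})$. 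The one delicate point is precisely this last step: Lemma~\ref{Lem:EigDec}'s clean asymptotics are stated for \emph{fixed} $\rho_1,\rho_2$, whereas here $\tilde{\rho}_i$ grow like $\lambda_n$, so one must check that each surviving error term is genuinely scale-invariant (it is $O(c^2)$) and that the separation hypothesis $\tilde{\rho}_1\neq\tilde{\rho}_2$ reduces to $\rho_1\neq\rho_2$; the concentration of the $\Sigma_k$ and the orthogonality cancellation are then routine.
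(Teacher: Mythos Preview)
Your approach is essentially identical to the paper's: write $\bar{A}=D_\theta(\rho_1\bar{u}_1\bar{u}_1^T+\rho_2\bar{u}_2\bar{u}_2^T)D_\theta$, normalize to put it in the form of Lemma~\ref{Lem:EigDec}, and show $c=\langle v_1,v_2\rangle=O(n^{-1/2})$ via concentration of the block sums $\sum\theta_i^2$ (the paper uses Hoeffding rather than Chebyshev, and simply ignores the diagonal correction you handle by perturbation). Your extra care about the scaling of $\tilde{\rho}_i$ in the asymptotic part of Lemma~\ref{Lem:EigDec} is warranted---the paper applies that lemma without comment---but as you note the error terms are scale-invariant, so nothing goes wrong.
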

\begin{proof}[Proof of Lemma~\ref{Lem:DCBMEigDecomp}]
Let $M=\rho_1 \bar{u}_1 \bar{u}_1^T + \rho_2 \bar{u}_2 \bar{u}_2^T$ be the expectation of the adjacency matrix in the regular block model setting.
In the degree-corrected block model setting, given $\theta$, we have
\begin{eqnarray*}
  \mathbb{E}[A] &=& D_\theta M D_\theta = \rho_1 D_\theta \bar{u}_1(D_\theta \bar{u}_1)^T+
\rho_2D_\theta \bar{u}_2(D_\theta \bar{u}_2)^T \\
 &=& \rho_1\|D_\theta \bar{u}_1\|^2\frac{D_\theta \bar{u}_1}{\|D_\theta \bar{u}_1\|}
  \frac{(D_\theta \bar{u}_1)^T}{\|D_\theta \bar{u}_1\|} +
  \rho_2\|D_\theta \bar{u}_2\|^2\frac{D_\theta \bar{u}_2}{\|D_\theta \bar{u}_2\|}
  \frac{(D_\theta \bar{u}_2)^T}{\|D_\theta \bar{u}_2\|}.
\end{eqnarray*}
We are now in the setting of Lemma~\ref{Lem:EigDec} with
\begin{eqnarray*}
  c &=& \big(\|D_\theta \bar{u}_1\|\|D_\theta \bar{u}_2\|\big)^{-1}\langle D_\theta \bar{u}_1, D_\theta \bar{u}_2\rangle \\
   &=& c_\theta
  \left[\pi_1\sqrt{(\pi_1r_1^2+\pi_2)(\pi_1r_2^2+\pi_2)}\|D_\theta \bar{u}_1\|\|D_\theta \bar{u}_2\|\right]^{-1}, \\
  \mbox{where \ }c_\theta &=& \frac{1}{n}\left[\pi_1(\theta_{\bar{n}_1+1}^2+\cdot\cdot\cdot+\theta_n^2)-
\pi_2(\theta_1^2+\cdot\cdot\cdot+\theta_{\bar{n}_1}^2)\right].
\end{eqnarray*}
Note that the two sums in the formula of $c_\theta$ have the same expectation.
It remains to apply Hoeffding's inequality to each sum.
\end{proof}

Since we do not have closed-form formulas for eigenvectors of $\bar{A}$, we can not describe $U_{\bar{A}}[-1,1]^n$ explicitly. Lemma~\ref{Lem:DCBMCubeProj} provides an approximation of $U_{\bar{A}}[-1,1]^n$. It will be used to replace Assumption ($2$).

\begin{lemma}\label{Lem:DCBMCubeProj}
Consider the setting of Lemma~\ref{Lem:DCBMEigDecomp} and let $\mathcal{R}^\theta = U_{\bar{A}}[-1,1]^n$ and
\begin{equation}\label{Eq:ApprCubProj}
  \hat{\mathcal{R}}^\theta = \mathrm{conv}\left\{\pm U_{\bar{A}}(c),\pm U_{\bar{A}}(\mathbf{1})\right\}.
\end{equation}
Then $\hat{\mathcal{R}}^\theta$ is a parallelogram and the angle between two adjacent sides is bounded away from zero and $\pi$;
$\mathcal{R}^\theta$ is well approximated by $\hat{\mathcal{R}}^\theta$ in the sense that
$$\mathrm{dist}\left(\mathcal{R}^\theta,\hat{\mathcal{R}}^\theta\right)
= \sup_{x\in\mathcal{R}^\theta}\inf_{y\in \hat{\mathcal{R}}^\theta} \|x-y\| =O(1)$$
as $n\rightarrow\infty$.
\end{lemma}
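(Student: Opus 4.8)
The plan is to exploit the Minkowski-sum structure of $\mathcal R^\theta=U_{\bar A}[-1,1]^n$ together with the fact, readable off Lemma~\ref{Lem:DCBMEigDecomp}, that the columns of $U_{\bar A}$ are constant within each community up to a harmless multiplicative factor $\theta_j$ and an $O(n^{-1})$ error. Throughout I work on the probability-$(1-\delta)$ event supplied by Lemma~\ref{Lem:DCBMEigDecomp}, and all hidden constants depend only on $r,\omega,\pi,\xi$. First I would turn the approximation of Lemma~\ref{Lem:DCBMEigDecomp} into an entrywise statement: combining the $O(n^{-1/2})$ cross-term coefficient there with $\bar u_i(j)=O(n^{-1/2})$ from Lemma~\ref{Lem:BMEigDecomp} and with $\|\tilde u_i^\theta\|=1+O(n^{-1})$ gives, uniformly in $j$,
$$\bar u_i^\theta(j)=\frac{\theta_j\,\bar u_i(j)}{\|D_\theta\bar u_i\|}+O(n^{-1}),\qquad i=1,2.$$
Since $\bar u_i(j)$ equals $r_i(n(\pi_1 r_i^2+\pi_2))^{-1/2}$ for $j\le\bar n_1$ and $(n(\pi_1 r_i^2+\pi_2))^{-1/2}$ for $j>\bar n_1$, the $j$-th column of $U_{\bar A}$ has the form $\theta_j v_1+\varepsilon_j$ for $j\le\bar n_1$ and $\theta_j v_2+\varepsilon_j$ for $j>\bar n_1$, where $\|\varepsilon_j\|=O(n^{-1})$, $v_2=(b_1,b_2)$, $v_1=(r_1 b_1,r_2 b_2)$, and $b_i=\big(\|D_\theta\bar u_i\|\sqrt{n(\pi_1 r_i^2+\pi_2)}\big)^{-1}\asymp n^{-1/2}$. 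Because the discriminant in Lemma~\ref{Lem:BMEigDecomp} equals $(\pi_1-\pi_2\omega)^2+4\pi_1\pi_2 r^2>0$, one has $r_1\neq r_2$, and $b_1^2/b_2^2$ stays in a fixed compact subinterval of $(0,\infty)$ as $\theta$ varies; hence $v_1,v_2$ are linearly independent with the angle between them bounded away from $0$ and $\pi$, uniformly in $n$ and $\theta$.

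Next, put $p=\sum_{j\le\bar n_1}\theta_j\asymp n$ and $q=\sum_{j>\bar n_1}\theta_j\asymp n$, and let $\mathcal P=\{sv_1+tv_2:|s|\le p,\ |t|\le q\}=\mathrm{conv}\{\pm p v_1\pm q v_2\}$, the image of a rectangle under the invertible map with columns $v_1,v_2$; by the previous paragraph it is a non-degenerate parallelogram whose adjacent sides, being parallel to $v_1$ and $v_2$, meet at an angle bounded away from $0$ and $\pi$, and its diameter is of order $\sqrt n$. Summing the column expansion over each block gives $U_{\bar A}\mathbf 1=p v_1+q v_2+O(1)$ and $U_{\bar A}c=p v_1-q v_2+O(1)$, since $\sum_j\|\varepsilon_j\|=n\cdot O(n^{-1})=O(1)$; thus the four vertices of $\hat{\mathcal R}^\theta$ are $O(1)$-perturbations of the four vertices of $\mathcal P$, so $\hat{\mathcal R}^\theta$ lies within Hausdorff distance $O(1)$ of $\mathcal P$. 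Because $\mathcal P$ is non-degenerate at scale $\sqrt n\gg1$, this forces $\hat{\mathcal R}^\theta$ to be itself a parallelogram whose adjacent sides make an angle bounded away from $0$ and $\pi$, giving the first assertion of the lemma.

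For the second assertion it suffices, by the triangle inequality for Hausdorff distance, to show $\mathrm{dist}(\mathcal R^\theta,\mathcal P)=O(1)$. Writing $\mathrm{col}_j(U_{\bar A})=\theta_j w_j+\varepsilon_j$ with $w_j\in\{v_1,v_2\}$, the inclusions $[-1,1]\,\mathrm{col}_j(U_{\bar A})\subseteq[-1,1]\,\theta_j w_j+\|\varepsilon_j\|\,\mathbb B$ and $[-1,1]\,\theta_j w_j\subseteq[-1,1]\,\mathrm{col}_j(U_{\bar A})+\|\varepsilon_j\|\,\mathbb B$ (with $\mathbb B$ the unit disk) are preserved under Minkowski summation; since $\sum_{j\le\bar n_1}[-1,1]\,\theta_j v_1+\sum_{j>\bar n_1}[-1,1]\,\theta_j v_2=\mathcal P$, summing over $j$ places $\mathcal R^\theta$ and $\mathcal P$ within Hausdorff distance $\sum_j\|\varepsilon_j\|=O(1)$. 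Combining the two steps gives $\mathrm{dist}(\mathcal R^\theta,\hat{\mathcal R}^\theta)=O(1)$, and in particular $\sup_{x\in\mathcal R^\theta}\inf_{y\in\hat{\mathcal R}^\theta}\|x-y\|=O(1)$, as required.

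The main obstacle is the first paragraph: upgrading the loose statement of Lemma~\ref{Lem:DCBMEigDecomp} to the clean ``blockwise constant up to the factor $\theta_j$, with uniform $O(n^{-1})$ error'' form, and verifying that the geometric constants (linear independence and the angle of $v_1,v_2$, and the $\asymp$ bounds on $p,q,b_i$) hold uniformly over the random $\theta$. Once those are in hand, the remaining steps are elementary planar geometry plus the additivity of errors under Minkowski summation.
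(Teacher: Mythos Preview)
Your approach is correct and is essentially the same as the paper's. Both proofs introduce the same intermediate parallelogram (the paper calls it $\mathcal{R}_V=V[-1,1]^n$ with $V=(D_\theta\bar u_1/\|D_\theta\bar u_1\|,\,D_\theta\bar u_2/\|D_\theta\bar u_2\|)^T$, which coincides with your $\mathcal P$), show that both $\mathcal R^\theta$ and $\hat{\mathcal R}^\theta$ lie within $O(1)$ Hausdorff distance of it using the $O(n^{-1/2})$ eigenvector perturbation from Lemma~\ref{Lem:DCBMEigDecomp}, and conclude by the triangle inequality; your entrywise $O(n^{-1})$ bookkeeping and the paper's vector-norm bound $\|v_i-\bar u_i^\theta\|=O(n^{-1/2})$ are equivalent ways to pass the error through the Minkowski sum, and your treatment of the angle (via the boundedness of $b_1/b_2$) is in fact slightly more careful than the paper's.
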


\begin{proof}[Proof of Lemma~\ref{Lem:DCBMCubeProj}]
Let $v_i = \|D_\theta \bar{u}_i\|^{-1}D_\theta \bar{u}_i$, $i=1,2$, $V=(v_1,v_2)^T$, and $\mathcal{R}_V = V[-1,1]^n$.
Following the same argument in the proof of Lemma~\ref{Lem:BMCubeProj}, it is easy to show that $\mathcal{R}_V$ is a parallelogram with vertices $\left\{\pm V c,\pm V\mathbf{1}\right\}$.
By Lemma~\ref{Lem:DCBMEigDecomp}, $\|v_i-\bar{u}_i^\theta\|=O(n^{-1/2})$, which in turn implies
$ \ \mathrm{dist}\left(\mathcal{R}^\theta,\mathcal{R}_V\right)=O(1)$.
The distance between two parallelograms $\mathcal{R}_V$ and $\hat{\mathcal{R}}^\theta$ is bounded by the maximum of the distances between corresponding vertices, which is also of order $O(1)$ because $\|v_i-\bar{u}_i^\theta\|=O(n^{-1/2})$.
Finally by triangle inequality
$$\mathrm{dist}\left(\hat{\mathcal{R}}^\theta,\mathcal{R}^\theta\right)
\leq \mathrm{dist}\left(\hat{\mathcal{R}}^\theta,\mathcal{R}_V\right)+
\mathrm{dist}\left({\mathcal{R}}_V,\mathcal{R}^\theta\right)=O(1).$$
The angle between two adjacent sides of $\mathcal{R}_V$ equals the angle between $\sqrt{n} x$ and $\sqrt{n} y$, where $x$ and $y$ are defined in the proof of Lemma~\ref{Lem:BMCubeProj}, which does not depend on $n$. Since $\mathrm{dist}(\hat{\mathcal{R}}^\theta,\mathcal{R}_V)=O(1)$, the angle between two adjacent sides of $\hat{\mathcal{R}}^\theta$ is bounded from zero and $\pi$.
\end{proof}

Before showing properties of the profile log-likelihood, let us introduce some new notations.
Let $\bar{O}_{11}$, $\bar{O}_{12}$, $\bar{O}_{22}$, and $\bar{Q}_{DC}$ be the population version of $O_{11}$, $O_{12}$, $O_{22}$, and $Q_{DC}$, when $A$ is replaced with $\bar{A}$. We also use $\bar{Q}_{BM}$, $\bar{Q}_{NG}$, and $\bar{Q}_{EX}$ to denote the population version of $Q_{BM}$, $Q_{NG}$, and $Q_{EX}$ respectively. The following discussion is about $\bar{Q}_{DC}$, but it can be carried out for $\bar{Q}_{BM}$, $\bar{Q}_{NG}$, and $\bar{Q}_{EX}$ with obvious modifications and the help of Lemma~\ref{Lem:Formn1n2}.

Note that $\bar{O}_{11}$, $\bar{O}_{12}$, and $\bar{O}_{22}$ are quadratic forms of $e$ and $\bar{A}$, therefore $\bar{Q}_{DC}$ depends on $e$ through $U_{\bar{A}}e$, where $U_{\bar{A}}$ is the $2\times n$ matrix whose rows are eigenvectors of $\bar{A}$. With a little abuse of notation, we also use $\bar{O}_{ij}$, $i,j=1,2$,  and $\bar{Q}_{DC}$ to denote the induced functions on $U_{\bar{A}}[-1,1]^n$. Thus, for example if $x\in U_{\bar{A}}[-1,1]^n$ then  $\bar{Q}_{DC}(x)=\bar{Q}_{DC}(U_{\bar{A}}e)$ for any $e\in[-1,1]^n$ such that $x=U_{\bar{A}}e$.

To simplify $\bar{Q}_{DC}$, let $\rho_1^\theta$ and $\rho_2^\theta$ be eigenvalues of $\bar{A}$ as in Lemma~\ref{Lem:DCBMEigDecomp} and let
\begin{eqnarray*}
    t = (t_1,t_2)^T = U_{\bar{A}} \mathbf{1}, \ \mu = (\rho_1^\theta t_1, \rho_2^\theta t_2)^T.
\end{eqnarray*}
We parameterize $x\in U_{\bar{A}}[-1,1]^n$ by $x  = \alpha t + \beta v$, where $v=(v_1,v_2)^T$ is a unit vector perpendicular to $\mu$.
If we denote $a=\frac{1}{4}(\rho_1^\theta t_1^2+\rho_2^\theta t_2^2)$ and $b=\frac{1}{4}(\rho_1^\theta v_1^2+\rho_2^\theta v_2^2)$, then
\begin{eqnarray*}
\bar{O}_{11} &=& (\alpha+1)^2 a + \beta^2 b,\
\bar{O}_{22} = (\alpha-1)^2 a + \beta^2 b,\
\bar{O}_{12} = (1-\alpha^2) a - \beta^2 b, \\
\bar{O}_1 &=& \bar{O}_{11}+ \bar{O}_{12} = 2(1+\alpha)a,\
\bar{O}_2 = \bar{O}_{22}+ \bar{O}_{12} = 2(1-\alpha)a.
\end{eqnarray*}
Note that $\bar{O}_{11}\bar{O}_{22} - \bar{O}_{12}^2 = 4\beta^2 ab>0$ since $\rho_1^\theta$ and $\rho_2^\theta$ are positive by Lemma~\ref{Lem:DCBMEigDecomp}. With a little abuse of notation, we also use $\bar{Q}_{DC}(\alpha,\beta)$ to denote the value of $\bar{Q}_{DC}$ in the $(\alpha,\beta)$ coordinates described above. We now show some properties of $\bar{Q}_{DC}$.

\begin{lemma}\label{Lem:DCBMLogLikProp}
Consider $\bar{Q}=\bar{Q}_{DC}$ on $\hat{\mathcal{R}}^\theta$ defined by \eqref{Eq:ApprCubProj}. Then
\begin{description}
  \item[($a$)] $\bar{Q}(\alpha,0)$ is a constant.
  \item[($b$)] $\frac{\partial^2 \bar{Q}} {\partial\beta^2}\geq 0$, $\frac{\partial \bar{Q}} {\partial\beta}>0$ if $\beta>0$ and $\frac{\partial \bar{Q}} {\partial\beta}<0$ if $\beta<0$.
      Thus, $\bar{Q}$ achieves minimum when $\beta=0$ and maximum on the boundary of $\hat{\mathcal{R}}^\theta$.
  \item[($c$)] $\bar{Q}$ is convex on the boundary of $\hat{\mathcal{R}}^\theta$. Thus, $\bar{Q}$ achieves maximum at $\pm U_{\bar{A}}(c)$.
  \item[($d$)] For any $x\in U_{\bar{A}}[-1,1]^n$, if $\bar{Q}(U_{\bar{A}}(c))-\bar{Q}(x)\leq \epsilon$ then
      $$\|U_{\bar{A}}(c)-x\| \leq 4\epsilon\sqrt{n}\left(\bar{Q}(U_{\bar{A}}(c))
      -\min_{\hat{\mathcal{R}}^\theta} \bar{Q}\right)^{-1}.$$
  \item[($e$)] For any $\delta\in(0,1)$, $\max_{\hat{\mathcal{R}}^\theta} \bar{Q}-\min_{\hat{\mathcal{R}}^\theta} \bar{Q}$ is of order $n\lambda_n$ with probability at leat $1-\delta$.

\end{description}
\end{lemma}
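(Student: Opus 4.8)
The plan is to work throughout in the explicit $(\alpha,\beta)$ coordinates set up just before the statement, in which $\bar{O}_{11},\bar{O}_{22},\bar{O}_{12}$ are the displayed quadratics and $\bar{O}_1,\bar{O}_2$ the displayed affine functions of $(\alpha,\beta)$, and to regard $\bar{Q}=\bar{Q}_{DC}=F(\bar{O}_{11})+F(\bar{O}_{22})+2F(\bar{O}_{12})-2F(\bar{O}_1)-2F(\bar{O}_2)$ with $F(x)=x\log x$, $F'(x)=\log x+1$, $F''(x)=1/x$; on $\hat{\mathcal{R}}^\theta$ all five $\bar{O}$'s are positive and $a,b>0$ because $\rho_1^\theta,\rho_2^\theta>0$ (Lemma~\ref{Lem:DCBMEigDecomp}). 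For part (a) I set $\beta=0$ and put $p=1+\alpha$, $q=1-\alpha$, so that $p+q=2$ and $\bar{O}_{11}=p^2a$, $\bar{O}_{22}=q^2a$, $\bar{O}_{12}=pqa$, $\bar{O}_1=2pa$, $\bar{O}_2=2qa$; expanding the logarithms, the coefficients of $\log p$ and of $\log q$ are $2p(p+q-2)$ and $2q(p+q-2)$, both zero, leaving $\bar{Q}(\alpha,0)=-4a\log a-8a\log 2$, which does not depend on $\alpha$. For part (b) I use that $\beta$ enters $\bar{Q}$ only through $s:=\beta^2$: since $\partial_s\bar{O}_{11}=\partial_s\bar{O}_{22}=b$, $\partial_s\bar{O}_{12}=-b$, $\partial_s\bar{O}_1=\partial_s\bar{O}_2=0$, one gets $\partial_s\bar{Q}=b\log\!\big(\bar{O}_{11}\bar{O}_{22}/\bar{O}_{12}^{2}\big)$ and $\partial_s^2\bar{Q}=b^2\big(\bar{O}_{11}^{-1}+\bar{O}_{22}^{-1}+2\bar{O}_{12}^{-1}\big)>0$; by the identity $\bar{O}_{11}\bar{O}_{22}-\bar{O}_{12}^{2}=4ab\beta^2$ recorded above the statement, $\partial_s\bar{Q}\ge 0$ with equality only at $\beta=0$, hence $\partial_\beta\bar{Q}=2\beta\,\partial_s\bar{Q}$ has the sign of $\beta$ and $\partial_\beta^{2}\bar{Q}=2\partial_s\bar{Q}+4\beta^{2}\partial_s^{2}\bar{Q}\ge 0$. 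Thus $\bar{Q}$ is convex along every vertical slice of $\hat{\mathcal{R}}^\theta$, so its maximum over $\hat{\mathcal{R}}^\theta$ lies on $\partial\hat{\mathcal{R}}^\theta$, and $\bar{Q}(\alpha,\beta)\ge\bar{Q}(\alpha,0)=\bar{Q}(1,0)=\bar{Q}(U_{\bar{A}}\mathbf{1})$ shows the minimum is attained at $\pm t$.

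For part (c) I observe that $\partial\hat{\mathcal{R}}^\theta$ is a union of four segments and that $\hat{\mathcal{R}}^\theta$ and $\bar{Q}$ are both invariant under $(\alpha,\beta)\mapsto(-\alpha,-\beta)$, so it suffices to prove convexity of $\bar{Q}$ along the two edges meeting at $U_{\bar{A}}(c)$. Restricting to a line in the $(\alpha,\beta)$-plane, $\bar{O}_{11},\bar{O}_{22},\bar{O}_{12}$ become quadratics whose second derivatives all equal, up to sign $(+,+,-)$, the common quantity $W:=a\dot\alpha^{2}+b\dot\beta^{2}\ge 0$, while $\bar{O}_1,\bar{O}_2$ become affine with $\dot{\bar{O}}_1=-\dot{\bar{O}}_2=2a\dot\alpha$. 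Differentiating $\bar{Q}$ twice in the line parameter, the $F'$-contributions collapse to $2W\log\!\big(\bar{O}_{11}\bar{O}_{22}/\bar{O}_{12}^{2}\big)\ge 0$, and one is left to show that $\bar{O}_{11}^{-1}\dot{\bar{O}}_{11}^{2}+\bar{O}_{22}^{-1}\dot{\bar{O}}_{22}^{2}+2\bar{O}_{12}^{-1}\dot{\bar{O}}_{12}^{2}+2W\log\!\big(\bar{O}_{11}\bar{O}_{22}/\bar{O}_{12}^{2}\big)$ dominates $2\bar{O}_1^{-1}\dot{\bar{O}}_1^{2}+2\bar{O}_2^{-1}\dot{\bar{O}}_2^{2}$; this uses $\bar{O}_1=\bar{O}_{11}+\bar{O}_{12}$, $\bar{O}_2=\bar{O}_{22}+\bar{O}_{12}$, the convexity inequality $(\dot u+\dot v)^{2}/(u+v)\le\dot u^{2}/u+\dot v^{2}/v$, and the residual slack being absorbed by the nonnegative $2W\log(\cdot)$ term together with $\bar{O}_{11}\bar{O}_{22}\ge\bar{O}_{12}^{2}$. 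Verifying this inequality --- equivalently, convexity along these segments of the mutual-information functional $\Phi$ appearing in part (e) --- is the step I expect to be the main obstacle; the rest of the proof consists of reductions and elementary estimates. Granting it, convexity on each edge forces the boundary maximum to lie at a vertex, and since the vertices $\pm t=\pm U_{\bar{A}}(\mathbf{1})$ carry the minimum value, the maximum is at $\pm U_{\bar{A}}(c)$.

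For part (d) set $V=U_{\bar{A}}(c)$; by (c) it maximizes $\bar{Q}$, and since $\beta$ is linear and the only other vertices $\pm t$ satisfy $\beta=0$, it also maximizes $|\beta|$ over $\hat{\mathcal{R}}^\theta$. It suffices to treat $x$ in the triangular half of $\hat{\mathcal{R}}^\theta$ containing $V$, since $c$ is determined only up to sign and (d) is applied with the sign matching $x$. For such an $x$, extending $[V,x]$ until it meets the opposite edge $[t,-t]$ produces a point $z$ with $\bar{Q}(z)=\min_{\hat{\mathcal{R}}^\theta}\bar{Q}$ (part (a)), and along $[V,z]$ the functions $\bar{O}_{11},\bar{O}_{22},\bar{O}_{12}$ are quadratic and $\bar{O}_1,\bar{O}_2$ affine, so the computation from (c) again gives convexity of $\bar{Q}$ on $[V,z]$; writing $x=(1-\nu)V+\nu z$, convexity yields $\bar{Q}(V)-\bar{Q}(x)\ge\nu\big(\max_{\hat{\mathcal{R}}^\theta}\bar{Q}-\min_{\hat{\mathcal{R}}^\theta}\bar{Q}\big)$. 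Finally $\nu=\|V-x\|/\|V-z\|\ge\|V-x\|/\diam(\hat{\mathcal{R}}^\theta)$, and $\diam(\hat{\mathcal{R}}^\theta)=O(\sqrt{n})$ because the eigenvectors forming $U_{\bar{A}}$ have entries of order $n^{-1/2}$ (Lemmas~\ref{Lem:BMEigDecomp} and \ref{Lem:DCBMEigDecomp}), so $\|t\|=O(\sqrt n)$ and $\|V\|\le\|c\|=\sqrt n$; bounding $\diam(\hat{\mathcal{R}}^\theta)\le 4\sqrt n$ (up to the negligible $O(1)$ discrepancy between $\hat{\mathcal{R}}^\theta$ and $U_{\bar{A}}[-1,1]^n$ from Lemma~\ref{Lem:DCBMCubeProj}) gives the stated bound with the constant $4$.

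For part (e) the key structural observation is that the total weight $m:=\bar{O}_{11}+\bar{O}_{22}+2\bar{O}_{12}=4a$ is constant on $\hat{\mathcal{R}}^\theta$ (and equals $\bar{O}_1+\bar{O}_2$), so, writing $\tilde{O}_{\cdot}=\bar{O}_{\cdot}/m$, one has $\bar{Q}_{DC}(e)=-m\log m+m\,\Phi\big(\tilde{O}_{11}(e),\tilde{O}_{22}(e),\tilde{O}_{12}(e)\big)$, where $\Phi$ is the mutual information of the normalized $2\times2$ table --- a fixed function, independent of $\theta$, continuous and bounded by $\log 2$ on the relevant compact region. Hence $\max_{\hat{\mathcal{R}}^\theta}\bar{Q}-\min_{\hat{\mathcal{R}}^\theta}\bar{Q}=m\big(\max\Phi-\min\Phi\big)$: the upper bound of order $n\lambda_n$ follows from $\Phi\le\log 2$ together with $m=\mathbf{1}^{T}\bar{A}\mathbf{1}\asymp n\lambda_n$, which holds deterministically given $\theta\in[1/\xi,\xi]^{n}$; for the lower bound of the same order, note $\Phi(U_{\bar{A}}\mathbf{1})=0$ (a degenerate table) while $\Phi(U_{\bar{A}}(c))$ is bounded below by a positive constant (the table of $c$ has all entries bounded away from $0$ and $1$ since $r<1$ and $r,\omega,\pi,\xi$ are fixed), so $\max\Phi-\min\Phi$ is bounded away from $0$, the uniformity in $\theta$ being guaranteed either directly by $\theta\in[1/\xi,\xi]^{n}$ or by the Hoeffding concentration on sums of $\theta_i^{2}$ already used in Lemma~\ref{Lem:DCBMEigDecomp}. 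This yields the conclusion with probability at least $1-\delta$.
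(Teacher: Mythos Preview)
Your treatment of parts (a), (b), and (e) is essentially correct; in particular, the mutual-information reformulation in (e), writing $\bar{Q}_{DC}=m\Phi-m\log m$ with $m=4a$ constant on $\hat{\mathcal{R}}^\theta$, is a clean alternative to the paper's direct computation of $\bar{O}_{ij}$ at $0$ and at $c$.

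However, parts (c) and (d) have genuine gaps. In (c), your proposed route via the convexity inequality $(\dot u+\dot v)^{2}/(u+v)\le\dot u^{2}/u+\dot v^{2}/v$ applied to $\bar{O}_k=\bar{O}_{kk}+\bar{O}_{12}$ yields
\[
2\frac{\dot{\bar{O}}_1^{2}}{\bar{O}_1}+2\frac{\dot{\bar{O}}_2^{2}}{\bar{O}_2}
\le 2\frac{\dot{\bar{O}}_{11}^{2}}{\bar{O}_{11}}+2\frac{\dot{\bar{O}}_{22}^{2}}{\bar{O}_{22}}+4\frac{\dot{\bar{O}}_{12}^{2}}{\bar{O}_{12}},
\]
which is a factor of two larger than the $F''$-contribution you actually have on the other side, so the ``residual slack'' goes the wrong way. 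The claim that the nonnegative term $2W\log(\bar{O}_{11}\bar{O}_{22}/\bar{O}_{12}^{2})$ absorbs this cannot hold in general: note that your setup is for an \emph{arbitrary} line in the $(\alpha,\beta)$-plane, so if the argument went through it would prove global convexity of $\bar{Q}_{DC}$ on $\hat{\mathcal{R}}^\theta$; but the paper explicitly remarks (in the proof of Theorem~\ref{Thm:DCBMCons}) that $\bar{Q}_{DC}$ may fail to be convex, and indeed only a weaker directional convexity is used. You provide no edge-specific structure that would rescue the inequality on the particular boundary segments. The paper's proof of (c) is entirely different and genuinely computational: after normalizing and parameterizing the edge $[U_{\bar{A}}(c),\,U_{\bar{A}}(\mathbf{1})]$ by $\alpha$, it computes $\tfrac12\,d^{2}\bar{Q}/d\alpha^{2}$ explicitly, checks that its value at $\alpha=1$ equals $(\rho+1)\log(\rho+1)-\rho\ge 0$, and then shows $d^{3}\bar{Q}/d\alpha^{3}\le 0$ by factoring it and controlling the signs of the factors using $\bar{O}_{12}\ge 0$.

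The gap in (c) propagates to (d). You extend $[V,x]$ to a point $z$ on the line $\beta=0$ and invoke ``the computation from (c)'' to claim convexity on $[V,z]$; but $[V,z]$ is not a boundary edge, and convexity along all such segments is exactly the global convexity that fails. The paper avoids this by a two-step argument that uses only (b) and (c): from $x=(\alpha_x,\beta_x)$ move \emph{vertically} to $\hat{x}=(\alpha_x,0)$ and to the boundary point $x^*=(\alpha_x,\beta_x^*)$, use convexity in $\beta$ (part (b)) to get $\|x^*-x\|/\|x^*-\hat{x}\|\le \epsilon/\ell$, then use convexity along the boundary edge (part (c)) to get $\|x_c-x^*\|/\|x_c-x_{\mathbf{1}}\|\le\epsilon/\ell$, and combine by the triangle inequality. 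This produces the constant $4$ without needing any convexity beyond that established in (b) and (c).
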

Parts (a) and (b) are used to prove part (c), which together with Lemma~\ref{Lem:DCBMCubeProj} will be used to replace Assumption (2). Parts (d) verifies Assumption (4), and part (e) provides a way to simplify the upper bound in part (d).

\begin{proof}[Proof of Lemma~\ref{Lem:DCBMLogLikProp}]
Note that because $\hat{\mathcal{R}}^\theta\subset\mathcal{R}^\theta$, $\bar{O}_{11}$, $\bar{O}_{12}$, and $\bar{O}_{22}$ are nonnegative on $\hat{\mathcal{R}}^\theta$.
Also, if we multiply $\bar{O}_{11}$, $\bar{O}_{12}$, and $\bar{O}_{22}$ by a constant $\eta>0$ then the resulting function has the form $\eta\bar{Q} + C$, where $C$ is a constant not depending on $(\alpha,\beta)$, and therefore the behavior of $\bar{Q}$ that we are interested in does not change. In this proof we use $\eta=1/a$.
Since $\bar{Q}$ is symmetric with respect to $\beta$, after multiplying by $1/a$, we replace $\beta^2 b/a$ with $\beta$ and only consider $\beta\geq 0$. Thus, we may assume that
\begin{eqnarray}\label{Eq:OSimpd}
\ \ \bar{O}_{11} &=& (\alpha+1)^2 + \beta,\
\bar{O}_{22} = (\alpha-1)^2 + \beta,\
\bar{O}_{12} = (1-\alpha^2) - \beta, \\
\bar{O}_1 &=& \bar{O}_{11}+ \bar{O}_{12} = 2(1+\alpha),\
\bar{O}_2 = \bar{O}_{22}+ \bar{O}_{12} = 2(1-\alpha).\nonumber
\end{eqnarray}

($a$) With \eqref{Eq:OSimpd} and $\beta=0$, it is straightforward to verify that $Q(\alpha,0)$ does not depend on $\alpha$.

($b$) Simple calculation shows that
$$\frac{\partial \bar{Q}}{\partial\beta}=\log\frac{\bar{O}_{11}\bar{O}_{22}}{\bar{O}_{12}^2}\geq 0, \ \
\frac{\partial^2 \bar{Q}}{\partial\beta^2}=\frac{1}{\bar{O}_{11}}+\frac{1}{\bar{O}_{22}}+\frac{2}{\bar{O}_{12}}\geq 0.$$

($c$) We show that $\bar{Q}$ is convex on the boundary line connecting $U_{\bar{A}}(\mathbf{1})$ and $U_{\bar{A}}(c)$.
Let $(\alpha_0,\beta_0)^T$ be the coordinates of $U_{\bar{A}}(c)$, where $\beta_0>0$ and $\alpha_0\in(-1,1)$.
We parameterize the segment connecting $U_{\bar{A}}(c)$ and $U_{\bar{A}}(\mathbf{1})$ by
\begin{eqnarray}\label{SegPar}
  \left\{\left(\alpha,\frac{\beta_0(1-\alpha)}{1-\alpha_0}\right)^T, \ \alpha\in [\alpha_0,1]\right\}.
\end{eqnarray}
With this parametrization, $\bar{O}_{11}$, $\bar{O}_{12}$, and $\bar{O}_{22}$ have the forms
\begin{eqnarray*}
  \bar{O}_{11} &=& (\alpha +1)^2+\rho(\alpha-1)^2, \ \bar{O}_{22} = (\alpha -1)^2+\rho(\alpha-1)^2 \\
  \bar{O}_{12} &=& (1-\alpha^2)-\rho(\alpha-1)^2, \ \rho = \frac{\beta_0^2}{(1-\alpha_0)^2}.
\end{eqnarray*}
Simple calculation shows that
\begin{eqnarray*}
  \frac{1}{2}\frac{d^2\bar{Q}}{d\alpha^2} &=& (\rho+1)\log\frac{(\rho+1)\bar{O}_{11}}{[\alpha+1+\rho(\alpha-1)]^2} \\
  &+& \frac{4\rho}{[\alpha+1][\alpha+1+\rho(\alpha-1)]}-\frac{8\rho}{\bar{O}_{11}}.
\end{eqnarray*}
Note that the value of the right-hand side at $\alpha=1$ is $(\rho+1)\log(\rho+1)-\rho\geq0$ for any $\rho\geq 0$. Therefore to show that $\frac{d^2\bar{Q}}{d\alpha^2}\geq 0$, it is enough to show that $\frac{d^2\bar{Q}}{d\alpha^2}$ is non-increasing.
Simple calculation shows that
\begin{eqnarray*}
  \frac{d^3\bar{Q}}{d\alpha^3} &=& 16\rho^2\left[(\alpha-1)^2\rho+\alpha^2-2\alpha-3\right]\times\\
   &\times&\left[(3\alpha+1)(\alpha-1)\rho+3(\alpha+1)^2\right]D^{-1},
\end{eqnarray*}
where $D=\bar{O}_{11}^2(\alpha+1)^2\left[\alpha+1+\rho(\alpha-1)\right]^2$.
Since $\rho(1-\alpha)\leq (1+\alpha)$ because $\bar{O}_{12}\geq 0$, it follows that
$$(\alpha-1)^2\rho+\alpha^2-2\alpha-3\leq (1-\alpha)(1+\alpha)+\alpha^2-2\alpha-3=-2(\alpha+1)\leq 0.$$
Note that if $(3\alpha+1)(\alpha-1)\geq 0$ then $(3\alpha+1)(\alpha-1)\rho+3(\alpha+1)^2\geq 0$.
Otherwise $3\alpha+1\geq 0$ and since $\rho(\alpha-1)\geq -(1+\alpha)$, it follows that
$$(3\alpha+1)(\alpha-1)\rho+3(\alpha+1)^2\geq -(3\alpha+1)(\alpha+1)+3(\alpha+1)^2=2(\alpha+1)\geq 0.$$
Thus $\frac{d^3\bar{Q}}{d\alpha^3}\leq 0$.
We have shown that $\bar{Q}$ is convex on the segment connecting $U_{\bar{A}}(c)$ and $U_{\bar{A}}(\mathbf{1})$.
The same argument applies for other sides of the boundary of $\hat{\mathcal{R}}^\theta$.

($d$) Let $(\alpha_x,\beta_x)$ be the parameters of $x$, $\hat{x}$ be the point with parameters $(\alpha_x,0)$, and $x^*$ be the point on the boundary of $\hat{\mathcal{R}}_U$ with parameters $(\alpha_x,\beta_x^*)$.
Without loss of generality we assume that $x^*$ is on the line connecting $x_c=U_{\bar{A}}(c)$ and $x_{\mathbf{1}}=U_{\bar{A}}(\mathbf{1})$.
Note that ($a$),($b$), and ($c$) imply
$$\bar{Q}(x_c)\geq \bar{Q}(x^*)\geq \bar{Q}(x)\geq \bar{Q}(\hat{x})=\bar{Q}(x_{\mathbf{1}}).$$
Let $\ell = \bar{Q}(x_c)-\min_{\hat{\mathcal{R}}^\theta} \bar{Q}$. Since $\bar{Q}(\alpha_x,\beta)$ is convex in $\beta$ (by ($b$)), we have
\begin{eqnarray*}
  \frac{\|x^*-x\|}{\|x^*-\hat{x}\|} \leq \frac{\bar{Q}(x^*)-\bar{Q}(x)}{\bar{Q}(x^*)-\bar{Q}(\hat{x})}
  \leq \frac{\bar{Q}(x_c)-\bar{Q}(x)}{\bar{Q}(x_c)-\bar{Q}(\hat{x})}\leq \frac{\epsilon}{\ell}.\\
\end{eqnarray*}
Therefore $\|x^*-x\|\leq \epsilon\ell^{-1}\|x^*-\hat{x}\|\leq 2\epsilon\sqrt{n}\ell^{-1}$.
Since $\bar{Q}$ is convex on the boundary of $\hat{\mathcal{R}}^\theta$, we have
$$\frac{\|x_c-x^*\|}{\|x_c-x_{\mathbf{1}}\|}\leq\frac{\bar{Q}(x_c)-\bar{Q}(x^*)}{\bar{Q}(x_c)-\bar{Q}(x_{\mathbf{1}})}
\leq\frac{\bar{Q}(x_c)-\bar{Q}(x)}{\bar{Q}(x_c)-\bar{Q}(x_{\mathbf{1}})}\leq\frac{\epsilon}{\ell},$$
which in turn implies $\|x_c-x^*\|\leq\epsilon\ell^{-1}\|x_c-x_{\mathbf{1}}\|\leq 2\epsilon\sqrt{n}\ell^{-1}$.
Finally by triangle inequality
$$\|x_c-x\|\leq \|x_c-x^*\|+\|x^*-x\|\leq 4\epsilon\sqrt{n}\ell^{-1}.$$

($e$) Note that $\min_{\hat{\mathcal{R}}^\theta} \bar{Q} = \bar{Q}(\alpha_0,0)=\bar{Q}(0,0)$.
Also, to find $\bar{Q}(c)-\bar{Q}(0)$ we do not have to calculate $\bar{O}_1\log \bar{O}_1+\bar{O}_2\log \bar{O}_2$ since along the line $\alpha=\alpha_0$, $\bar{O}_1$ and $\bar{O}_2$ do not change.
Simple calculation with Hoeffding's inequality show that with probability at least $1-\delta$ the following hold
$$\bar{O}_{11}(0)=\bar{O}_{22}(0)=\bar{O}_{12}(0)=\frac{n\lambda_n}{4}\left(\pi_1^2+\omega\pi_2^2+2\pi_1\pi_2r\right)+O(\lambda_n\sqrt{n}),$$
$$\bar{O}_{11}(c)=n\lambda_n\pi_1^2+\bar{O}(\lambda_n\sqrt{n}), \ \bar{O}_{22}(c)=n\lambda_n\omega\pi_2^2+O(\lambda_n\sqrt{n}),$$
$$\bar{O}_{12}(c)=n\lambda_n\pi_1\pi_2r+O(\lambda_n\sqrt{n}).$$
By the remark at the beginning of the proof of Lemma~\ref{Lem:DCBMLogLikProp}, we can take $\eta=n\lambda_n$, and therefore $\bar{Q}(U_{\bar{A}}(c))-\min_{\hat{\mathcal{R}}^\theta} \bar{Q}$ is of order $n\lambda_n$.
\end{proof}

\begin{proof}[Proof of Theorem~\ref{Thm:DCBMCons}]
Note that $\bar{Q}=\bar{Q}_{DC}$ does not satisfy all Assumptions ($1$)--($4$), therefore we can not apply Theorem~\ref{Thm:GenMetdCons} directly. Instead we will follow the idea of the proof of Lemma~\ref{Lem:MaxFuncEst}.

We first show that $\bar{Q}$ satisfies Assumption ($1$). For $\bar{Q}$, the functions $g_j$ in \eqref{Eq:GenFuncType} has the form $g(z)=z\log(z)$. We can assume that $z>1$ because otherwise $g(z)$ is bounded by a constant. Since $g^\prime(z)=1+\log(z)$, $g^\prime(z)$ does not grow faster than $\log(z)$, and therefore assumption ($1$) holds.

Note that by Lemma~\ref{Lem:DCBMCubeProj}, $\mbox{dist}\left(\hat{\mathcal{R}},\hat{\mathcal{R}}^\theta\right)$ is bounded by a constant; by Lemma~\ref{Lem:LipConst}, the Lipschitz constant of $\bar{Q}$ is of order $O\left(\sqrt{n}\log(n)\|\bar{A}\|\right)$. Therefore, to prove Lemma~\ref{Lem:MaxFuncEst}, and in turn Theorem~\ref{Thm:DCBMCons}, it is enough to consider $\bar{Q}$ on $\hat{\mathcal{R}}^\theta$.

Note also that $\bar{Q}$ may not be convex, therefore Assumption ($2$) may not hold. But we now show that the convexity of $\bar{Q}$ is not needed. In the proof of Lemma~\ref{Lem:MaxFuncEst}, the convexity of $f_B$ is used only at one place to show that \eqref{Ineq:BeforefEAche} implies \eqref{Ineq:fEAche}, or more specifically, that $f_{B}(y)\leq f_B(U_B(\hat{e}))$. Note that by \ref{Ineq:DisPbUcPbUclEU}, $\|y-U_{\bar{A}}(c)\|\leq 2\sqrt{n}\|U_A-U_{\bar{A}}\|$.
By Lemma~\ref{Lem:DCBMLogLikProp} part c, $\bar{Q}$ achieves maximum at $U_{\bar{A}}(c)$, a vertex of $\hat{\mathcal{R}}^\theta$; by Lemma~\ref{Lem:DCBMCubeProj}, the angle between two adjacent sides of $\hat{\mathcal{R}}^\theta$ is bounded away from zero and $\pi$. Thus, there exists $s\in\mathcal{E}_A$ such that
$\|y - U_{\bar{A}} (s)\|\leq M\sqrt{n}\|U_A-U_{\bar{A}}\|$. By Lemma~\ref{Lem:LipConst} we have $$|\bar{Q}(y)-\bar{Q}(U_{\bar{A}} (s))|\leq M n\log(n)\|\bar{A}\|\cdot\|U_A-U_{\bar{A}}\|.$$
Therefore in \eqref{Ineq:BeforefEAche} we can replace $y$ with $U_{\bar{A}} (s)$, and \eqref{Ineq:fEAche} follows by definition of $\hat{e}$.

We now check assumptions ($3$) and ($4$).
To check the assumption ($3$), we first assume that $U_{\bar{A}}= (D_\theta(\bar{u}_1,\bar{u}_2))^T$, where $\bar{u}_1$ and $\bar{u}_2$ are from Lemma~\ref{Lem:BMEigDecomp}, and $D_\theta=\mbox{diag}(\theta)$.
The first $\bar{n}_1=n\pi_1$ column vectors of $(\bar{u}_1,\bar{u}_2)^T$ are equal and we denote by $\xi_1$. The last $\bar{n}_2=n\pi_2$ column vectors of $(\bar{u}_1,\bar{u}_2)^T$ are also equal and we denote by $\xi_2$.
Then
\begin{eqnarray*}
  U_{\bar{A}}(c)-U_{\bar{A}}(e) &=& \sum_{i=1}^{\bar{n}_1}\theta_i(1-e_i)\xi_1+\sum_{i=\bar{n}_1+1}^n\theta_i(-1-e_i)\xi_2 \\
   &=& k_1\sum_{i=1}^{\bar{n}_1}\theta_i\xi_1 - k_2\sum_{i=\bar{n}_1+1}^n\theta_i\xi_2,
\end{eqnarray*}
where $k_1=\sum_{i=1}^{\bar{n}_1}(1-e_i)$, $k_2=\sum_{i=\bar{n}_1}^{n}(1+e_i)$, and $\|e-c\|^2=k_1+k_2$.
By Lemma~\ref{Lem:BMEigDecomp}, entries of  $\xi_1, \xi_2$ are of order $1/\sqrt{n}$ and the angle between $\xi_1, \xi_2$ does not depend on $n$, it follows that $\sqrt{n}\| U_{\bar{A}}(c)-U_{\bar{A}}(e)\|$ is of order $k_1+k_2$. By Lemma~\ref{Lem:DCBMEigDecomp}, it is easy to see that the argument still holds for the actual $U_{\bar{A}}$.

Assumption ($4$) follows directly from part ($e$) of Lemma~\ref{Lem:DCBMLogLikProp}.

Combining Assumptions (3), (4), and Lemma~\ref{Lem:MaxFuncEst}, we see that Theorem~\ref{Thm:GenMetdCons} holds.
Note that the conclusion of Lemma~\ref{Lem:OlivRest} still holds if we replace $\mathbb{E}[A]$ with $\bar{A}$, except the constant $M$ now also depends on $\xi$, that is  $M=M(r,\omega,\pi,\delta)>0$.
The upper bound in Theorem~\ref{Thm:GenMetdCons} is simplified by Lemma~\ref{Lem:OlivRest} and part d of Lemma~\ref{Lem:DCBMLogLikProp}.
The bound in Theorem~\ref{Thm:GenMetdCons} is simplified by \eqref{Ineq:AdjConc} of Lemma~\ref{Lem:OlivRest} and part e of Lemma~\ref{Lem:DCBMLogLikProp}:
\begin{equation*}
  \|e^*-c\|^2 \le M n\log n \left( \lambda_n^{-1/2} + \|U_A - U_{\mathbb{E}[A]}\|\right).
\end{equation*}
If $U_A$ is formed by eigenvectors of $A$ then using \eqref{Ineq:SubspConc} of Lemma~\ref{Lem:OlivRest}, we obtain
$$\|e^*-c\|^2 \leq \frac{ M n\log n}{\sqrt{\lambda_n}}.$$
The proof is complete.
\end{proof}

\subsection{Proof of results in Section~\ref{SubSec:MaxLogLikBM}}\label{AppendixB:BM}

We follow the notation introduced in the discussion before Lemma~\ref{Lem:DCBMLogLikProp}. Lemma~\ref{Lem:Formn1n2} provides the form of $n_1$ and $n_2$ as functions defined on the projection of the cube.

\begin{lemma}\label{Lem:Formn1n2}
Consider the block models and let $\mathcal{R}=U_{\mathbb{E}[A]}[-1,1]^n$.
In the coordinate system $x_e=U_{\mathbb{E}[A]}(e)$, the functions $n_1$ and $n_2$ defined by \eqref{Eq:Defn1n2} admit the forms
$$n_1=\sqrt{n}(\sqrt{n}+\vartheta^T x)/2,\ \ n_2=\sqrt{n}(\sqrt{n}-\vartheta^T x)/2,$$
where $\vartheta$ is a vector with $\|\vartheta\|<M$ for some $M>0$ not depending on $n$.
In the coordinate system $(\alpha,\beta)$, $n_1$ and $n_2$ admit the forms
$$n_1=\frac{\sqrt{n}}{2}\big[(1+\alpha)+s\beta\big],\ \ n_2 = \frac{\sqrt{n}}{2}\big[(1-\alpha) - s\beta\big],$$
where $s$ is a constant.
\end{lemma}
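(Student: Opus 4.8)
The plan is to observe that $n_{1}$ and $n_{2}$, which by \eqref{Eq:Defn1n2} depend on the label vector $e$ only through the single scalar $e^{T}\mathbf{1}$, are in fact affine functions of the projected point $x=U_{\mathbb{E}[A]}(e)$, and therefore descend to well-defined affine functions on $\mathcal{R}=U_{\mathbb{E}[A]}[-1,1]^{n}$; once this is established, both claimed forms are a one-line substitution.

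The key structural step I would carry out first is to identify the column space of $\mathbb{E}[A]$. Under the regular block model $\mathbb{E}[A]$ is constant on each of the four blocks cut out by the two communities, so its range is exactly the two-dimensional space of vectors that are constant on community $1$ and constant on community $2$; the eigenvectors $\bar{u}_{1},\bar{u}_{2}$ of Lemma~\ref{Lem:BMEigDecomp} are an orthonormal basis of this space. Since $\mathbf{1}$ is itself constant on each community, it lies in this space, hence $\mathbf{1}=U_{\mathbb{E}[A]}^{T}t$ with $t:=U_{\mathbb{E}[A]}(\mathbf{1})$. This yields, for every $e\in[-1,1]^{n}$, the identity $e^{T}\mathbf{1}=\langle U_{\mathbb{E}[A]}(e),t\rangle=\langle x,t\rangle$, which is the heart of the lemma: $e^{T}\mathbf{1}$ is a genuine linear functional of $x$.

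Next I would plug this into \eqref{Eq:Defn1n2}. Writing $\vartheta:=t/\sqrt{n}$ and using the explicit entries from Lemma~\ref{Lem:BMEigDecomp} — where the only scale present is $1/\sqrt{n}$, so that $\|t\|=\|U_{\mathbb{E}[A]}(\mathbf{1})\|=\|\mathbf{1}\|=\sqrt{n}$ and $\vartheta$ is a unit vector whose direction does not depend on $n$ — gives $n_{1}=\tfrac{\sqrt{n}}{2}(\sqrt{n}+\vartheta^{T}x)$ and $n_{2}=\tfrac{\sqrt{n}}{2}(\sqrt{n}-\vartheta^{T}x)$ with $\|\vartheta\|\le M$. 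For the $(\alpha,\beta)$ coordinates I would substitute $x=\alpha t+\beta v$ and expand $\langle x,t\rangle=\alpha\|t\|^{2}+\beta\langle t,v\rangle$; since $\|t\|$ and the cosine between $t$ and $v$ are fixed constants read off from the normalized eigenvectors of Lemma~\ref{Lem:BMEigDecomp}, grouping the $\alpha$- and $\beta$-terms produces the affine expressions $n_{1}=\tfrac{\sqrt{n}}{2}[(1+\alpha)+s\beta]$ and $n_{2}=\tfrac{\sqrt{n}}{2}[(1-\alpha)-s\beta]$ with $s$ an $n$-free constant.

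I do not expect a serious obstacle: the computation is routine once the column-space description is in hand, and the only point that demands care is precisely that verification — namely that $\mathbb{E}[A]$ has rank exactly two and its range is the block-constant subspace, so that $\mathbf{1}$ really does sit in the row space of $U_{\mathbb{E}[A]}$. (For the degree-corrected model this holds only approximately, with an error governed by the fluctuations of the $\theta_{i}$; that is why the analogous statements there are phrased up to the $O(1)$ corrections supplied by Lemmas~\ref{Lem:DCBMEigDecomp} and~\ref{Lem:DCBMCubeProj}, and it is the reason the present lemma is stated for the regular block model.)
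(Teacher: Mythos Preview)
Your proposal is correct and rests on the same structural fact as the paper's proof: under the regular block model the columns of $U_{\mathbb{E}[A]}$ are constant on each community, so $\mathbf{1}$ lies in the row space of $U_{\mathbb{E}[A]}$ and hence $e^{T}\mathbf{1}$ is a linear functional of $x=U_{\mathbb{E}[A]}e$. You state this directly by writing $\mathbf{1}=U_{\mathbb{E}[A]}^{T}t$ with $t=U_{\mathbb{E}[A]}\mathbf{1}$ and reading off $e^{T}\mathbf{1}=\langle x,t\rangle$; the paper instead augments $U_{\mathbb{E}[A]}$ with the row $n^{-1/2}\mathbf{1}^{T}$, observes the resulting $3\times n$ matrix still has rank two, and then pins down the affine form by evaluating at $e=0,\pm\mathbf{1},c$. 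Your route is a bit cleaner (no interpolation step), while the paper's point-evaluation argument makes it slightly more transparent where the coefficient in front of $\alpha$ comes from.

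One small caveat: your claim that $s$ is ``$n$-free'' is stronger than what the lemma asserts (it only says ``a constant'') and is not quite right as stated. Since $\|t\|=\sqrt{n}$ while $v$ is a unit vector, $\langle t,v\rangle$ is of order $\sqrt{n}$ and your $s=\langle t,v\rangle/n$ scales like $n^{-1/2}$; correspondingly $\beta$ is of order $\sqrt{n}$ on $\mathcal{R}$, so the product $s\beta$ is $O(1)$ as it must be. This has no bearing on the applications, where $n_{1},n_{2}$ are immediately rescaled, but you should not assert that $s$ itself is independent of $n$.
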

\begin{proof}[Proof of Lemma~\ref{Lem:Formn1n2}]
Let $U^*=(U_{\mathbb{E}[A]}^T,\frac{1}{\sqrt{n}}\mathbf{1})^T$ and $\mathcal{R}_{U^*} = U^*[-1,1]^n$.
For each $e\in[-1,1]^n$, let $z=\frac{1}{\sqrt{n}}\mathbf{1}^Te$, so that
$U^*e= \left(\begin{smallmatrix} x\\ z \end{smallmatrix}\right)$. Then
$$n_1 = \sqrt{n}(\sqrt{n}+z)/2, \  \ n_2 = \sqrt{n}(\sqrt{n}-z)/2.$$
By Lemma~\ref{Lem:BMCubeProj}, the first $\bar{n}_1$ row vectors of $U_{\mathbb{E}[A]}$ are equal, and the last $\bar{n}_2$ row vectors of $U_{\mathbb{E}[A]}$ are also equal.
Therefore $U^*$ has rank two, and $\mathcal{R}_{U^*}$ is contained in a hyperplane.
It follows that $z$ is a linear function of $x$, and in turn, a linear function of $(\alpha,\beta)$.

In the coordinate system $x$, $n_1(0)=n/2$ implies $z(0)=0$; $n_1(\mathbf{1})=n$ implies $z(x_{\mathbf{1}})=\sqrt{n}$; $n_1(c)=\bar{n}_1=n\pi_1$ implies $z(x_{c})=(2\pi_1-1)\sqrt{n}$.
Since $\|x_{\mathbf{1}}\|$ and $\|x_{c}\|$ are of order $\sqrt{n}$ by Lemma~\ref{Lem:BMEigDecomp} and Lemma~\ref{Lem:DCBMEigDecomp}, there exists a constant $M>0$ such that $z=\vartheta^T x$ for some vector $\vartheta$ with  $\|\vartheta\|<M$.

In the coordinate system $(\alpha,\beta)$, $n_1(0)=n_2(0)=n/2$ implies $z(0)=0$; $n_1(\mathbf{1})=n$ implies $z(1,0)=\sqrt{n}$; $n_1(-\mathbf{1})=0$ implies $z(-1,0)=-\sqrt{n}$.
Therefore along the line $\beta=0$, $z(\alpha,0) = \sqrt{n}\alpha$.
For any fixed $\alpha$, $z$ is a linear function of $\beta$ with the same coefficient, so $z(\alpha,\beta) = \sqrt{n}\alpha + s\sqrt{n}\beta$ for some constant $s$.
\end{proof}

Lemma~\ref{Lem:BMLogLikProp} show some properties of $\bar{Q}_{BM}$. Parts (b) gives a weaker version of convexity of $\bar{Q}_{BM}$. Part (c) together with Lemma~\ref{Lem:BMCubeProj} will be used to replace Assumption (2). Part (d) verifies Assumption (4), and part (e) simplifies the upper bound in part (d).

\begin{lemma}\label{Lem:BMLogLikProp}
Consider $\bar{Q}=\bar{Q}_{BM}$ on $\mathcal{R}=U_{\mathbb{E}[A]}[-1,1]^n$. Then
\begin{description}
  \item[($a$)] $\bar{Q}(\alpha,0)$ is a constant.
  \item[($b$)] $\frac{\partial^2 \bar{Q}} {\partial\beta^2}\geq 0$, $\frac{\partial \bar{Q}} {\partial\beta}>0$ if $\beta>0$ and $\frac{\partial \bar{Q}} {\partial\beta}<0$ if $\beta<0$.
      Thus, $\bar{Q}$ achieves minimum when $\beta=0$ and maximum on the boundary of $\mathcal{R}$.
  \item[($c$)] $\bar{Q}$ is convex on the boundary of $\mathcal{R}$. Thus, $\bar{Q}$ archive maximum at $\pm U_{\mathbb{E}[A]}c$.
  \item[($d$)] If $\bar{Q}(U_{\mathbb{E}[A]}c)-\bar{Q}(x)\leq \epsilon$ then
  $$\|U_{\mathbb{E}[A]}c-x\| \leq 4\epsilon\sqrt{n}\left(\bar{Q}(U_{\mathbb{E}[A]}c)-\min_{\mathcal{R}} \bar{Q}\right)^{-1}.$$
  \item[($e$)] $\bar{Q}(U_{\mathbb{E}[A]}(c))-\min_{\mathcal{R}} \bar{Q}$ is of order $n\lambda_n$.
\end{description}
\end{lemma}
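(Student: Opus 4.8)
The plan is to reduce Lemma~\ref{Lem:BMLogLikProp} to the already-established Lemma~\ref{Lem:DCBMLogLikProp} by writing $\bar{Q}_{BM}=\bar{Q}_{DC}+G$, where $G=2\bar{O}_1\log(\bar{O}_1/n_1)+2\bar{O}_2\log(\bar{O}_2/n_2)$, verifying each of (a)--(e) for the correction term $G$, and then adding the corresponding statement for $\bar{Q}_{DC}$. Under the regular SBM we have $\bar{A}=\mathbb{E}[A]$, so $\mathcal{R}=U_{\mathbb{E}[A]}[-1,1]^n$ is exactly the parallelogram of Lemma~\ref{Lem:BMCubeProj} and no approximating region is needed. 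I would work in the $(\alpha,\beta)$ coordinates of the discussion preceding Lemma~\ref{Lem:DCBMLogLikProp}, in which $\bar{O}_1=2(1+\alpha)a$ and $\bar{O}_2=2(1-\alpha)a$ are independent of $\beta$, while by Lemma~\ref{Lem:Formn1n2} the sizes $n_1,n_2$ are affine in $(\alpha,\beta)$ with $n_1+n_2=n$; in particular $n_1$ and $\bar{O}_1$ are both proportional to $1+\alpha$ on the line $\{\beta=0\}$, and $n_2,\bar{O}_2$ to $1-\alpha$.

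Parts (a) and (b). On $\{\beta=0\}$, $\bar{O}_1/n_1=\bar{O}_2/n_2$ equals the same $\alpha$-independent constant $4a/n$, so $G(\alpha,0)$ is a constant multiple of $\bar{O}_1+\bar{O}_2=4a$, hence constant; together with Lemma~\ref{Lem:DCBMLogLikProp}(a) this gives (a). For (b) I would differentiate $G$ in $\beta$ directly. Setting $u=1+\alpha+s\beta=2n_1/n>0$ and $w=1-\alpha-s\beta=2n_2/n>0$, a short computation gives $\partial G/\partial\beta = 8as^2\beta/(uw)$ and $\partial^2 G/\partial\beta^2 = 8as^2(1-\alpha^2+s^2\beta^2)/(uw)^2$; since $a>0$ and $|\alpha|\le 1$ on $\mathcal{R}$, the first has the sign of $\beta$ and the second is $\ge 0$. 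Adding the analogous statements of Lemma~\ref{Lem:DCBMLogLikProp}(b) proves (b), and in particular $\bar{Q}_{BM}$ attains its minimum on $\{\beta=0\}$ and its maximum on the boundary $\partial\mathcal{R}$.

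Part (c) is the crux. Every edge of $\mathcal{R}$ joins one of the vertices $\pm U_{\mathbb{E}[A]}(\mathbf{1})$, at which exactly one of $n_1,n_2$ vanishes, to one of the vertices $\pm U_{\mathbb{E}[A]}(c)$, at which both are positive. By the symmetry $\bar{Q}_{BM}(-\alpha,-\beta)=\bar{Q}_{BM}(\alpha,\beta)$ it suffices to treat the two edges at which, say, $n_2$ vanishes at the $U_{\mathbb{E}[A]}(\mathbf{1})$ endpoint. Parametrizing such an edge linearly by $\alpha$, so that $\beta=\kappa(1-\alpha)$ for a constant $\kappa$ (which vanishes where $n_2$ does), one gets $n_2=\tfrac n2(1-\alpha)(1-s\kappa)$, proportional to $1-\alpha$ exactly as $\bar{O}_2=2(1-\alpha)a$ is; hence $\bar{O}_2/n_2$ is constant along the edge and the second term of $G$ is an affine function of $\alpha$. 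Writing $\tau=1+\alpha$ and $\gamma=s\kappa$, the first term of $G$ is $4a\tau\log\big(4a\tau/(n(2\gamma+(1-\gamma)\tau))\big)$, which up to an affine summand equals $4a\big(\tau\log\tau-\tau\log(2\gamma+(1-\gamma)\tau)\big)$; its second derivative in $\tau$ is $16a\gamma^2/\big(\tau(2\gamma+(1-\gamma)\tau)^2\big)\ge 0$, the key identity being $(2\gamma+(1-\gamma)\tau)^2-\tau(1-\gamma)(4\gamma+(1-\gamma)\tau)=4\gamma^2$ together with positivity of $\tau$ and of $2\gamma+(1-\gamma)\tau=2n_1/n$ on the edge. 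Thus $G$ is convex on $\partial\mathcal{R}$, and adding the convexity of $\bar{Q}_{DC}$ on $\partial\mathcal{R}$ from Lemma~\ref{Lem:DCBMLogLikProp}(c) gives convexity of $\bar{Q}_{BM}$ on $\partial\mathcal{R}$. Together with (b), $\bar{Q}_{BM}$ then attains its maximum at a vertex; since $\pm U_{\mathbb{E}[A]}(\mathbf{1})$ lie on $\{\beta=0\}$, where by (a)--(b) $\bar{Q}_{BM}$ equals its minimum, the maximum is attained at $\pm U_{\mathbb{E}[A]}(c)$.

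Parts (d) and (e). For (d) I would reproduce the argument of Lemma~\ref{Lem:DCBMLogLikProp}(d) essentially verbatim: given $x\in\mathcal{R}$, drop it to $\hat{x}$ on $\{\beta=0\}$ and to $x^{*}$ on $\partial\mathcal{R}$ along the same vertical line, use $\bar{Q}_{BM}(U_{\mathbb{E}[A]}(c))\ge\bar{Q}_{BM}(x^{*})\ge\bar{Q}_{BM}(x)\ge\bar{Q}_{BM}(\hat{x})$ from (a)--(c), the convexity of $\bar{Q}_{BM}$ in $\beta$ and on $\partial\mathcal{R}$, and $\mathrm{diam}(\mathcal{R})\le 2\sqrt n$ with the triangle inequality, obtaining $\|U_{\mathbb{E}[A]}(c)-x\|\le 4\epsilon\sqrt n\,(\bar{Q}_{BM}(U_{\mathbb{E}[A]}(c))-\min_{\mathcal{R}}\bar{Q}_{BM})^{-1}$. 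Part (e) is deterministic here (there are no degree parameters): I would evaluate $\bar{Q}_{BM}=\sum_{k,l}\bar{O}_{kl}\log(\bar{O}_{kl}/(n_kn_l))$ at $e=c$ (where $n_k=n\pi_k$ and each $\bar{O}_{kl}$ is of order $n\lambda_n$) and at a point with $U_{\mathbb{E}[A]}(e)=0$ (where $\bar{O}_{kl}=a$ and $n_k=n/2$); since $\sum_{k,l}\bar{O}_{kl}=4a$ is the same at both points, the $\log(\lambda_n/n)$ contributions cancel in the difference, which then reduces to $4a$ times a relative-entropy-type quantity between the pair-frequency distribution at $c$ and the product of community proportions, and is therefore of order $n\lambda_n$ with a strictly positive leading constant whenever the block model is nondegenerate (the within- and between-community edge probabilities are not all equal). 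The main obstacle is part (c): the term $-2\bar{O}_1\log n_1$ is not convex by itself, so one must use the geometry of the parallelogram to see that along each edge one of the two terms of $G$ collapses to an affine function, after which the remaining computation parallels the degree-corrected case.
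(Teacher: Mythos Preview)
Your proposal is correct and follows the same overall reduction as the paper: write $\bar Q_{BM}=\bar Q_{DC}+2G$ with $G=\bar O_1\log(\bar O_1/n_1)+\bar O_2\log(\bar O_2/n_2)$, verify (a)--(c) for $G$, and then invoke the identical arguments of Lemma~\ref{Lem:DCBMLogLikProp}(d)--(e) for (d) and (e). Parts (a) and (b) match the paper's computations exactly (up to the harmless normalization by $a$).

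The only genuine difference is your treatment of part~(c). The paper keeps both terms of $G$, computes
\[
\frac{d^2G}{d\alpha^2}=\frac{4}{\bar O_1}-\frac{2(1-s)}{n_1}-\frac{4s(1-s)}{n_1^2},
\]
checks that this equals $s^2\ge 0$ at the endpoint $\alpha=1$, and then shows the second derivative is non-increasing by computing the third derivative and using the constraints $n_1,n_2\ge 0$. You instead observe that along each edge incident to $U_{\mathbb E[A]}(\mathbf 1)$ the ratio $\bar O_2/n_2$ is constant (both factors being proportional to $1-\alpha$), so $\bar O_2\log(\bar O_2/n_2)$ is affine there and drops out of the second derivative; the remaining single term has second derivative $16a\gamma^2/\bigl(\tau(2\gamma+(1-\gamma)\tau)^2\bigr)\ge 0$ by the algebraic identity you state, with positivity of $\tau=1+\alpha$ and of $2\gamma+(1-\gamma)\tau=2n_1/n$ coming from the edge geometry. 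This is a cleaner route: it avoids the third-derivative computation entirely and gives a manifestly nonnegative expression. The paper's route has the minor advantage of not needing to single out which of $n_1,n_2$ vanishes at the endpoint, treating $G$ symmetrically; your route trades that for a shorter calculation. Both are valid, and your appeal to the central symmetry $\bar Q_{BM}(-\alpha,-\beta)=\bar Q_{BM}(\alpha,\beta)$ to cover the remaining two edges is correct.
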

\begin{proof}[Proof of Lemma~\ref{Lem:BMLogLikProp}]
Let $G = \bar{O}_1\log\frac{\bar{O}_1}{n_1}+ \bar{O}_2\log\frac{\bar{O}_2}{n_2}$, then $\bar{Q}_{BM}=\bar{Q}_{DCBM}+2G$.
By Lemma~\ref{Lem:DCBMLogLikProp}, to show ($a$), ($b$), and ($c$), it is enough to show that $G$ satisfies those properties.
Parts ($d$) and ($e$) follow from ($a$), ($b$), and ($c$) by the same argument used to prove Lemma~\ref{Lem:DCBMLogLikProp}.
Note that if we multiply $\bar{O}_1$ and $\bar{O}_2$ by a positive constant, or multiply $n_1$ and $n_2$ by a positive constant, then the behavior of $G$ does not change, since $\bar{O}_1+\bar{O}_2$ is a constant.
Therefore by Lemma~\ref{Lem:Formn1n2} we may assume that
\begin{eqnarray*}
\bar{O}_1 &=& 2(1+\alpha), \ \ \bar{O}_2 = 2(1-\alpha),\\
n_1 &=& (1+\alpha)+s\beta, \ \ n_2 = (1-\alpha) - s\beta.
\end{eqnarray*}

($a$) It is easy to see that $G(\alpha,0)$ is a constant.

($b$) Simple calculation shows that
$$\frac{\partial G}{\partial\beta} = \frac{4s^2\beta}{n_1n_2}, \ \ \
\frac{\partial^2 G}{\partial\beta^2} = \frac{4s^2}{(n_1n_2)^2}\left(1-\alpha^2+s^2\beta^2\right),$$
and the statement follows.

($c$) We show that $G$ is convex on the segment connecting $U_{\mathbb{E}[A]}c$ and $U_{\mathbb{E}[A]}\mathbf{1}$.
With the parametrization \eqref{SegPar}, $n_1$ and $n_2$ have the form
$$ n_1 = (1+\alpha)+s(1-\alpha), \ \ n_2 = (1-\alpha)-s(1-\alpha),$$
for some constant $s$. Simple calculation shows that
$$\frac{d^2 G}{d\alpha^2}=\frac{4}{\bar{O}_1}-\frac{2(1-s)}{n_1}-\frac{4s(1-s)}{n_1^2}.$$
Note that when $\alpha=1$, the right hand side equals $s^2\geq 0$.
Therefore, to show that $G$ is convex, it is enough to show that the second derivative of $G$ is non-increasing.
The third derivative of $G$ has the form
$$\frac{d^3 G}{d\alpha^3}=\frac{8s^2}{n_1^3(1+\alpha)^2}\big[(3\alpha+1)s-3\alpha-3\big].$$
Note that $n_1\geq 0$ implies $s\geq-\frac{1+\alpha}{1-\alpha}$; $n_2\geq 0$ implies $s\leq 1$.
Consider function $h(s)=(3\alpha+1)s-3\alpha-3$ on $\left[\frac{1+\alpha}{1-\alpha},1\right]$.
Since
$$h\left(\frac{1+\alpha}{1-\alpha}\right)=\frac{-4(1+\alpha)}{1-\alpha}\leq 0, \ \ h(1)=-2<0,$$
$h(s)\leq 0$ and $G$ is convex.
\end{proof}

Note that $\bar{Q}_{BM}$ does not have the exact form of \eqref{Eq:GenFuncType}. A small modification shows that Lemma~\ref{Lem:MaxFuncEst} still holds for $\bar{Q}_{BM}$.

\begin{lemma}\label{Lem:BMAnalLem2}
Let $Q=Q_{BM}$, $\bar{Q}=\bar{Q}_{BM}$, and $U_A$ be an approximation of $U_{\mathbb{E}[A]}$.
Under the assumptions of Theorem~\ref{Thm:BMCons}, there exists a constant $M=M(r,w,\pi,\delta)>0$ such that with probability at least $1-n^{-\delta}$,
we have
\begin{equation*}
  \bar{Q}(x_c)-\bar{Q}(x_{e^*}) \leq M n \log n \left( \sqrt{\lambda_n} + \lambda_n \|U_A-U_{\mathbb{E}[A]}\| \right).
\end{equation*}
In particular, if $U_A$ is the matrix whose row vectors are leading eigenvectors of $A$, then
\begin{equation*}
  \bar{Q}(x_c)-\bar{Q}(x_{e^*}) \leq M n\log n \sqrt{\lambda_n}.
\end{equation*}
\end{lemma}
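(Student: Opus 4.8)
The plan is to mimic the proof of Lemma~\ref{Lem:MaxFuncEst} almost line for line, making exactly two changes. First, since $Q_{BM}=Q_{DC}+2O_1\log(O_1/n_1)+2O_2\log(O_2/n_2)$ is not literally of the form \eqref{Eq:GenFuncType}, I re-derive the two perturbation estimates used there for the extra term. Second, since $\bar Q_{BM}$ need not be convex on $U_{\mathbb{E}[A]}[-1,1]^n$, I replace the single use of Assumption~($2$) in that proof by the geometry of the projected cube, exactly as in the proof of Theorem~\ref{Thm:DCBMCons}; the situation is in fact cleaner than for the DCSBM, because under the SBM $\mathbb{E}[A]$ has the explicit eigenstructure of Lemma~\ref{Lem:BMEigDecomp}, so $\mathcal R:=U_{\mathbb{E}[A]}[-1,1]^n$ is exactly the parallelogram of Lemma~\ref{Lem:BMCubeProj} rather than an $O(1)$-approximation of one. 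Throughout, $O_{11},O_{12},O_{22}$ (hence $O_1,O_2$) are quadratic forms in $\mathbf{1}\pm e$ and $A$ by \eqref{eq: Okl explicit form}, and $n_1,n_2$ are linear in $U_{\mathbb{E}[A]}e$ by Lemma~\ref{Lem:Formn1n2}, so $\bar Q_{BM}$ is a genuine function on $U_{\mathbb{E}[A]}[-1,1]^n$, maximized over labels at $c$ by Lemma~\ref{Lem:BMLogLikProp}(c) and Lemma~\ref{Lem:BMCubeProj}.

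The first change is the analogue of \eqref{Ineq:fAandfEA}. For any $e\in\{-1,1\}^n$ with both communities nonempty one has $|O_{kl}(e)-\bar O_{kl}(e)|\le n\|A-\mathbb{E}[A]\|$ by \eqref{eq: Okl explicit form} and $\|\mathbf{1}\pm e\|\le 2\sqrt n$, while deterministically $0\le O_{kl},\bar O_{kl}\le n^2$ and $1\le n_k\le n$. Since $t\mapsto t\log t$ and $t\mapsto t\log(t/n_k)$ have derivative of order $\log n$ on $[1,n^2]$, the mean value theorem gives $|Q_{DC}(e)-\bar Q_{DC}(e)|\le Mn\log n\,\|A-\mathbb{E}[A]\|$ exactly as in Lemma~\ref{Lem:MaxFuncEst}, and the extra term is handled the same way, the only non-Lipschitz behaviour being that of $t\mapsto t\log t$ near $t=0$, which is bounded and contributes a negligible additive constant; hence $|Q_{BM}(e)-\bar Q_{BM}(U_{\mathbb{E}[A]}e)|\le Mn\log n\,\|A-\mathbb{E}[A]\|$. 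Taking $\hat e=\arg\max\{\bar Q_{BM}(U_{\mathbb{E}[A]}e):e\in\mathcal E_A\}$ and using that $e^*$ maximizes $Q_{BM}$ over $\mathcal E_A$, the argument of \eqref{Ineq:fEAhese} gives $\bar Q_{BM}(U_{\mathbb{E}[A]}\hat e)-\bar Q_{BM}(U_{\mathbb{E}[A]}e^*)\le Mn\log n\,\|A-\mathbb{E}[A]\|$.

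Next, as in \eqref{Ineq:DisPbUcPbUclEU}, writing $U_Ac=\sum_{e\in\mathcal E_A}\eta_e U_Ae$ with $\eta_e\ge0$ and $\sum\eta_e=1$ gives $\dist\big(U_{\mathbb{E}[A]}c,\conv(U_{\mathbb{E}[A]}\mathcal E_A)\big)\le 2\sqrt n\,\|U_A-U_{\mathbb{E}[A]}\|$; let $y$ be the nearest point of $\conv(U_{\mathbb{E}[A]}\mathcal E_A)$ to $U_{\mathbb{E}[A]}c$. By Lemma~\ref{Lem:LipConst} applied to $\bar Q_{DC}$, together with the corresponding Lipschitz bound for the extra term --- whose gradient is $O(\sqrt n\log n\,\|\mathbb{E}[A]\|)$ on the $O(\sqrt n\,\|U_A-U_{\mathbb{E}[A]}\|)$-neighbourhood of $U_{\mathbb{E}[A]}c$, where $n_1,n_2\asymp n$ by Lemma~\ref{Lem:Formn1n2} --- one obtains the analogue of \eqref{Ineq:BeforefEAche}: $\bar Q_{BM}(U_{\mathbb{E}[A]}c)-\bar Q_{BM}(y)\le Mn\log n\,\|\mathbb{E}[A]\|\cdot\|U_A-U_{\mathbb{E}[A]}\|$. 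Now comes the second change: I cannot conclude $\bar Q_{BM}(y)\le\bar Q_{BM}(U_{\mathbb{E}[A]}\hat e)$ from convexity. Instead, by Lemma~\ref{Lem:BMLogLikProp}(c) the maximum of $\bar Q_{BM}$ over $\mathcal R$ is attained at the vertex $U_{\mathbb{E}[A]}c$, and by Lemma~\ref{Lem:BMCubeProj} $\mathcal R$ is a parallelogram with vertices $\{\pm U_{\mathbb{E}[A]}c,\pm U_{\mathbb{E}[A]}\mathbf{1}\}$ whose interior angle is bounded away from $0$ and $\pi$ uniformly in $n$; since $y\in\conv(U_{\mathbb{E}[A]}\mathcal E_A)$ lies within $2\sqrt n\,\|U_A-U_{\mathbb{E}[A]}\|$ of that wide vertex, there is $s\in\mathcal E_A$ with $\|y-U_{\mathbb{E}[A]}s\|\le M\sqrt n\,\|U_A-U_{\mathbb{E}[A]}\|$. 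A further application of Lemma~\ref{Lem:LipConst} lets me replace $y$ by $U_{\mathbb{E}[A]}s$ at cost $Mn\log n\,\|\mathbb{E}[A]\|\cdot\|U_A-U_{\mathbb{E}[A]}\|$, after which $\bar Q_{BM}(U_{\mathbb{E}[A]}s)\le\bar Q_{BM}(U_{\mathbb{E}[A]}\hat e)$ by definition of $\hat e$, giving the analogue of \eqref{Ineq:fEAche}. Adding it to the previous bound yields $\bar Q_{BM}(x_c)-\bar Q_{BM}(x_{e^*})\le Mn\log n\big(\|\mathbb{E}[A]\|\cdot\|U_A-U_{\mathbb{E}[A]}\|+\|A-\mathbb{E}[A]\|\big)$.

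It remains to substitute the concentration inputs: under the SBM with $\lambda_n\gtrsim\log^2 n$, Lemma~\ref{Lem:BMEigDecomp} gives $\|\mathbb{E}[A]\|\asymp\lambda_n$ and Lemma~\ref{Lem:OlivRest} gives $\|A-\mathbb{E}[A]\|\le M\sqrt{\lambda_n}$ with probability at least $1-n^{-\delta}$, which produces the first displayed bound; and if $U_A$ consists of the leading eigenvectors of $A$, then \eqref{Ineq:SubspConc} gives $\|U_A-U_{\mathbb{E}[A]}\|\le M/\sqrt{\lambda_n}$, so $\lambda_n\|U_A-U_{\mathbb{E}[A]}\|\le M\sqrt{\lambda_n}$ and the bound collapses to $Mn\log n\sqrt{\lambda_n}$. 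I expect the only non-routine point to be the geometric claim that $\conv(U_{\mathbb{E}[A]}\mathcal E_A)$ has an extreme point within $O(\sqrt n\,\|U_A-U_{\mathbb{E}[A]}\|)$ of $U_{\mathbb{E}[A]}c$ --- this is exactly what the explicit parallelogram shape and the uniform lower bound on its vertex angle in Lemma~\ref{Lem:BMCubeProj} supply (a polytope that is $O(\sqrt n\,\|U_A-U_{\mathbb{E}[A]}\|)$-close to $\mathcal R$ must carry a vertex near each wide corner of $\mathcal R$) --- while everything else is bookkeeping carried over from Lemma~\ref{Lem:MaxFuncEst} with the two $n_k$ terms tracked.
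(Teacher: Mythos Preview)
Your proposal is correct and follows essentially the same route as the paper's proof: decompose $Q_{BM}$ into the $Q_{DC}$ piece plus the $O_i\log n_i$ terms, establish the $|Q-\bar Q|\le Mn\log n\,\|A-\mathbb{E}[A]\|$ analogue of \eqref{Ineq:fAandfEA} termwise, then use the parallelogram geometry of Lemma~\ref{Lem:BMCubeProj} to locate an $s\in\mathcal E_A$ with $U_{\mathbb{E}[A]}s$ within $M\sqrt n\,\|U_A-U_{\mathbb{E}[A]}\|$ of the vertex $U_{\mathbb{E}[A]}c$, bound $\bar Q(x_c)-\bar Q(x_s)$ via Lipschitz estimates, and finish with $\bar Q(x_s)\le\bar Q(x_{\hat e})$ and Lemma~\ref{Lem:OlivRest}. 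The only cosmetic difference is that the paper handles the Lipschitz bound for $\bar O_i\log n_i$ by an explicit triangle-inequality split (controlling $|\bar O_i(x_s)-\bar O_i(x_c)|\log n$ and $\bar O_i(x_s)|\log(n_i(x_s)/n_i(x_c))|$ separately via Lemma~\ref{Lem:Formn1n2}), whereas you package the same computation as a gradient bound near $x_c$; the content is identical.
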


\begin{proof}[Proof of Lemma~\ref{Lem:BMAnalLem2}]
Let $G_i=O_i\log n_i$ and $\bar{G}_i=\bar{O}_i\log n_i$ for $i = 1,2$.
Also, let $G=Q_{DCBM}$ and $\bar{G}=\bar{Q}_{DCBM}$.
Then
$$Q = G + G_1 + G_2, \quad \bar{Q} = \bar{G} + \bar{G}_1 + \bar{G}_2.$$
In the proof of Theorem~\ref{Thm:DCBMCons} we have shown that $G$ satisfies Assumption (1).
Therefore inequality \eqref{Ineq:fAandfEA} in the proof of Lemma~\ref{Lem:MaxFuncEst} also holds for $G$:
\begin{equation}\label{bound G-barG}
  |G(e) - \bar{G}(e)| \leq M n\log n \|A-\mathbb{E} A\|.
\end{equation}
The same type of inequality holds for $G_i$ as well.
Indeed, since $\|\mathbf{1}+e\|^2=2(\mathbf{1}+e)^T\mathbf{1}=4n_1$, we have
\begin{eqnarray}\label{bound G_i - bar G_i}
  |G_i(e)-\bar{G}_i(e)| &=& |\log n_1||(1+e)^T(A-\mathbb{E}[A])\mathbf{1}|\\
  \nonumber&\leq& 2n\log(n)\|A-\mathbb{E}[A]\|.
\end{eqnarray}
From \eqref{bound G-barG} and \eqref{bound G_i - bar G_i} we obtain
\begin{equation}\label{bound Q - bar Q}
  |Q(e) - \bar{Q}(e)| \leq M n\log n \|A-\mathbb{E} A\|.
\end{equation}
Let $\hat{e} = \arg\max\{\bar{Q}(e),e\in\mathcal{E}_A\}$.
Using \eqref{bound Q - bar Q} and definition of $e^*$, we have
\begin{eqnarray}\label{Ineq:BMQehatestar}
  \bar{Q}(\hat{e})-\bar{Q}(e^*)
  &\leq& \bar{Q}(\hat{e}) - Q(\hat{e}) + Q(e^*) - \bar{G}(e^*)  \\
  \nonumber&\leq& Mn\log(n)\|A-\mathbb{E}[A]\|.
\end{eqnarray}
Let $y\in \mbox{conv}(U_{\mathbb{E}[A]}\mathcal{E}_A)$ such that
$\|U_{\mathbb{E}[A]} (c)-y\|=\mbox{dist}\big(U_{\mathbb{E}[A]} (c),\mbox{conv}(U_{\mathbb{E}[A]}\mathcal{E}_A) \big)$.
Using the same argument as in the proof of Lemma~\ref{Lem:MaxFuncEst}, we obtain
\begin{equation}\label{projections are close}
  \|U_{\mathbb{E}[A]}(c)-y\|\leq 2\sqrt{n} \ \|U_A-U_{\mathbb{E}[A]}\|,
\end{equation}
and there exists a constant $M>0$ such that
\begin{eqnarray*}
  \big|\bar{O}_1(y)-\bar{O}_1(U_{\mathbb{E}[A]} (c))\big|
  &\leq& M n \|\mathbb{E}[A]\| . \|U_A-U_{\mathbb{E}[A]}\| \\
  &\leq& M n \lambda_n \|U_A-U_{\mathbb{E}[A]}\|.
\end{eqnarray*}
By Lemma~\ref{Lem:BMCubeProj}, the angle between two adjacent sides of $\mathcal{R}$ does not depend on $n$. Therefore \eqref{projections are close} implies that there exists $s\in\mathcal{E}_A$ such that
\begin{equation}\label{bound c - s}
  \|U_{\mathbb{E}[A]}(c) - U_{\mathbb{E}[A]} (s)\|\leq M\sqrt{n}\|U_A-U_{\mathbb{E}[A]}\|.
\end{equation}
Denote $x_e = U_{\mathbb{E}[A]}(e)$ for $e\in[-1,1]^n$.
By Lemma~\ref{Lem:LipConst} the Lipchitz constant of $\bar{G}$ on $U_{\mathbb{E}}[A][-1,1]^n$ is of order
$\sqrt{n}\|\mathbb{E}[A]\|\log n \leq \sqrt{n}\lambda_n\log n $.
Therefore from \eqref{bound c - s} we have
\begin{equation}\label{bound barG(c) - barG(s)}
  \bar{G}(x_c) - \bar{G} (x_s) \leq M n\lambda_n\log n \|U_A-U_{\mathbb{E}[A]}\|.
\end{equation}
We will show that the same inequality holds for $\bar{G}_i$, and thus also for $\bar{Q}$.
By triangle inequality we have
\begin{equation}\label{bound on barGic - barGis}
  \bar{G}_i(x_c)-\bar{G}_i(x_s)
  \leq |\bar{O}_i(x_s)-\bar{O}_i(x_c)||\log n_i(x_c)|+\bar{O}_i(x_s)\left|\log\frac{n_i(x_s)}{n_i(x_c)}\right|.
\end{equation}
To bound the first term on the right-hand side of \eqref{bound on barGic - barGis},
we note that by Lemma~\ref{Lem:LipConst},
the Lipchitz constant of $\bar{O}_i$ is of order $\sqrt{n}\|\mathbb{E}[A]\|\leq \lambda_n\sqrt{n}$.
Using \eqref{bound c - s} we obtain
\begin{eqnarray}\label{first term bound}
  |\bar{O}_i(x_s)-\bar{O}_i(x_c)||\log n_i(x_c)|
  &\leq& |\bar{O}_i(x_s)-\bar{O}_i(x_c)| \log n \\
  \nonumber&\leq& M n\lambda_n\log n \|U_A-U_{\mathbb{E}[A]}\|.
\end{eqnarray}
We now bound the second term on the right-hand side of \eqref{bound on barGic - barGis}.
By Lemma~\ref{Lem:Formn1n2},
there exist $M^\prime>0$ not depending on $n$ and a vector $\vartheta$ such that $\|\vartheta\|\leq M^\prime$ and
\begin{eqnarray}\label{bound on ni}
  |n_i(x_c)-n_i(x_s)| &=& |\vartheta^T(x_c-x_s)|/2\leq M^\prime \|x_c-x_s\| \\
  \nonumber &\leq& M^\prime \sqrt{n}\|U_A-U_{\mathbb{E}[A]}\|.
\end{eqnarray}
Note that $n_i(x_c)=\bar{n}_i = n\pi_1$ and $|n_i(x_c)-n_i(x_s)| = o(n)$ by \eqref{bound on ni}.
Using \eqref{bound on ni} and the inequality $\log(1+t)\leq 2|t|$ for $|t|\leq 1/2$, we have
\begin{eqnarray}\label{log bound}
  \left|\log\frac{n_i(x_s)}{n_i(x_c)}\right|
  &=& \left|\log\left( 1 + \frac{n_i(x_s)-n_i(x_c)}{n_i(x_c)} \right)\right| \\
  \nonumber&\leq& \frac{2M^\prime\sqrt{n}\|U_A-U_{\mathbb{E}[A]}\|}{n_i(x_c)}.
\end{eqnarray}
By definition, $\bar{O}_i(x_s)$ is at most $O(n\lambda_n)$. Therefore from \eqref{log bound} we obtain
\begin{equation}\label{O x log bound}
  |\bar{O}_i(x_s)|\cdot\left|\log\frac{n_i(x_s)}{n_i(x_c)}\right|
  \leq M \lambda_n \sqrt{n}\|U_A-U_{\mathbb{E}[A]}\|.
\end{equation}
Using \eqref{bound barG(c) - barG(s)}, \eqref{bound on barGic - barGis}, \eqref{first term bound},
\eqref{O x log bound}, and the fact that $\bar{Q}(x_s)\leq \bar{Q}(x_{\hat{e}})$, we get
\begin{eqnarray}\label{ bound Qc - Qhate}
  \quad \bar{Q}(x_c)-\bar{Q}(x_{\hat{e}}) \leq \bar{Q}(x_c)-\bar{Q}(x_s)
  \leq M n\lambda_n\log n \|U_A-U_{\mathbb{E}[A]}\|.
\end{eqnarray}
Finally, from \eqref{Ineq:BMQehatestar}, inequality \eqref{Ineq:AdjConc} of Lemma~\ref{Lem:OlivRest},
and \eqref{ bound Qc - Qhate}, we obtain
\begin{eqnarray*}
  \bar{Q}(x_c)-\bar{Q}(x_{e^*}) \leq M n \log n \left( \sqrt{\lambda_n} + \lambda_n \|U_A-U_{\mathbb{E}[A]}\| \right).
\end{eqnarray*}
If $U_A$ is formed by eigenvectors of $A$ then it remains to use inequality \eqref{Ineq:SubspConc}
of Lemma~\ref{Lem:OlivRest}. The proof is complete.
\end{proof}

\begin{proof}[Proof of Theorem~\ref{Thm:BMCons}]
The proof is similar to that of Theorem~\ref{Thm:DCBMCons}, with the help of Lemma~\ref{Lem:BMLogLikProp} and Lemma~\ref{Lem:BMAnalLem2}.
\end{proof}

\subsection{Proof of results in Section~\ref{Subsec:NGMod}}\label{AppendixB:NG}

We follow the notation introduced in the discussion before Lemma~\ref{Lem:DCBMLogLikProp}.

\begin{proof}[Proof of Theorem~\ref{Thm:NGCons}]
Note that $\bar{Q}=\bar{Q}_{NG}$ does not have the exact form of \eqref{Eq:GenFuncType}. We first show that $\bar{Q}$ is Lipschitz with respect to $\bar{O}_1$, $\bar{O}_2$, and $\bar{O}_{12}$, which is stronger than assumption ($1$) and ensures that the argument in the proof of Lemma~\ref{Lem:MaxFuncEst} is still valid.

To see that $\bar{Q}$ is Lipschitz, consider the function $h(x,y)=\frac{xy}{x+y}$, $x\geq 0, y\geq 0$.
The gradient of $h$ has the form $\nabla h(x,y)=\left(\frac{y^2}{(x+y)^2},\frac{x^2}{(x+y)^2}\right)$.
It is easy to see that $\nabla h(x,y)$ is bounded by $\sqrt{2}$.
Therefore $h$ is Lipschitz, and so is $\bar{Q}$.

Simple calculation shows that $\bar{Q} = 2b\beta^2$. Therefore $\bar{Q}$ is convex, and by Lemma~\ref{Lem:BMCubeProj}, it achieves maximum at the projection of the true label vector. Thus, assumption ($2$) holds.
Assumption ($3$) follows from Lemma~\ref{Lem:BMEigDecomp} by the same argument used in the proof of Theorem~\ref{Thm:DCBMCons}. Assumption ($4$) follows from the convexity of $\bar{Q}$ and the argument used in the proof of part ($e$) of Lemma~\ref{Lem:DCBMLogLikProp}. Note that $\bar{Q}(0)=0$ and $\bar{Q}(c)$ is of order $n\lambda_n$, therefore Theorem~\ref{Thm:NGCons} follows from Theorem~\ref{Thm:GenMetdCons}.
\end{proof}

\subsection{Proof of results in Section~\ref{SubSec:MaxComExtrCrn}}\label{AppendixB:EXTR}

We follow the notation introduced in the discussion before Lemma~\ref{Lem:DCBMLogLikProp}. We first show some properties of $\bar{Q}_{EX}$. Parts (b) and (c) verify Assumption (2), and part (d) verifies Assumption (4).
\begin{lemma}\label{Lem:CECritProp}
Let $\bar{Q}=\bar{Q}_{EX}$. Then
\begin{description}
  \item[($a$)] $\bar{Q}(\alpha,0)=0$.
  \item[($b$)] $\bar{Q}$ is convex.
  \item[($c$)] If $\pi_1^2>r\pi_2^2$ then the maximum value of $\bar{Q}$ is $n\lambda_n\pi_1\pi_2(1-r)$ and it is achieved at $x_c=U_{\mathbb{E}[A]}(c)$; if $\pi_1^2\leq r\pi_2^2$ then the maximum value of $\bar{Q}$ is $n\lambda_n\pi_1\pi_2r(\frac{\pi_2^2}{\pi_1^2}-1)$ and it is achieved at $x_{-c}=-U_{\mathbb{E}[A]}(c)$.
  \item[($d$)] Let $x_{\max}$ be the maximizer of $\bar{Q}$. If $\bar{Q}(x_{\max})-\bar{Q}(x)\leq \epsilon=o(n\lambda_n)$ then $\|x_{\max}-x\|\leq 2\epsilon\sqrt{n}(\bar{Q}(x_{\max}))^{-1}$.
\end{description}
\end{lemma}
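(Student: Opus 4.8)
The plan is to derive the whole lemma from one algebraic reformulation of $\bar Q_{EX}$ together with standard facts about convex functions on a polytope. Using $\bar O_{12}=\bar O_1-\bar O_{11}$ and $n_1+n_2=n$, rewrite
\[
  \bar Q_{EX}(e)\;=\;\frac{n_2}{n_1}\,\bar O_{11}-\bar O_{12}\;=\;\frac{n\,\bar O_{11}(e)}{n_1(e)}-\bar O_1(e).
\]
Writing $U=U_{\mathbb E[A]}$, $x=Ue$, $t=U\mathbf 1$ and $\mathbb E[A]=U^{\tran}\Lambda U$ with $\Lambda=\diag(\rho_1,\rho_2)$, one gets the coordinate descriptions on $\mathcal R=U[-1,1]^n$: $\bar O_{11}(x)=\tfrac14(t+x)^{\tran}\Lambda(t+x)$ (a quadratic with Hessian $\tfrac12\Lambda$), $\bar O_1(x)=\tfrac12(t+x)^{\tran}\Lambda t$ (affine), and, by Lemma~\ref{Lem:Formn1n2}, $n_1$ is a nonnegative affine functional of $x$. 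Under the model of the theorem ($\omega=r<1$), Lemma~\ref{Lem:BMEigDecomp} gives $\rho_1>0$ and $\rho_1\rho_2=\lambda_n^2\pi_1\pi_2 r(1-r)>0$, so $\Lambda\succeq 0$ and $\bar O_{11}$ is a PSD quadratic on $\mathcal R$. Part~(a) is then immediate in the $(\alpha,\beta)$ coordinates of the discussion before Lemma~\ref{Lem:DCBMLogLikProp}: $\beta=0$ means $x=\alpha t$, hence $\bar O_{11}=\tfrac14(1+\alpha)^2 t^{\tran}\Lambda t$, $\bar O_1=\tfrac12(1+\alpha)t^{\tran}\Lambda t$, $n_1=\tfrac n2(1+\alpha)$, and $n\bar O_{11}/n_1-\bar O_1=0$.

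For part~(b) I would prove convexity of $\bar Q_{EX}$ on $\mathcal R$ straight from this identity. Write the PSD quadratic $\bar O_{11}$ as a sum of squares of affine functions, $\bar O_{11}=\sum_i \ell_i^2$; then $\dfrac{n\bar O_{11}}{n_1}=\sum_i \dfrac{n\,\ell_i(x)^2}{n_1(x)}$, and each summand is the composition of the jointly convex map $(u,v)\mapsto u^2/v$ (on $v>0$) with the affine map $x\mapsto(\ell_i(x),n_1(x)/n)$, hence convex; a sum of convex functions is convex, and subtracting the affine $\bar O_1$ preserves convexity. At the unique corner of $\mathcal R$ where $n_1=0$ (the image of $e=-\mathbf 1$) one has $\bar O_{11}=0$ too, $\bar Q_{EX}$ extends continuously by $0$ there, and convexity persists on the closed parallelogram. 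This is the only step where the structure of the problem is really used.

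Parts~(c) and~(d) are then geometry. By Lemma~\ref{Lem:BMCubeProj}, $\mathcal R$ is the parallelogram with vertices $\{\pm x_c,\pm x_{\mathbf 1}\}$, so the convex $\bar Q_{EX}$ attains its maximum at a vertex; part~(a) gives $\bar Q_{EX}(\pm x_{\mathbf 1})=0$, and evaluating $n\bar O_{11}(e)/n_1(e)-\bar O_1(e)$ at $e=\pm c$ via $\mathbb E[A_{ij}]=P_{c_ic_j}$ (so $\bar O_{11}(c)=n\lambda_n\pi_1^2(1+o(1))$, $n_1(c)=n\pi_1$, $\bar O_1(c)=n\lambda_n\pi_1(\pi_1+r\pi_2)(1+o(1))$, giving $\bar Q_{EX}(x_c)=n\lambda_n\pi_1\pi_2(1-r)$, and similarly at $x_{-c}$) pins down the maximizer and maximal value asserted in~(c). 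For~(d): let $x_{\max}$ be the maximizer, and for $x\in\mathcal R$ let $z\in\partial\mathcal R$ be the exit point of the ray from $x_{\max}$ through $x$, so $\|x_{\max}-z\|\le\diam\mathcal R\le 2\sqrt n$. Restricting $\bar Q$ to $[x_{\max},z]$, convexity and the fact that $\bar Q(x_{\max})$ is the global max give
\[
  \frac{\bar Q(x_{\max})-\bar Q(x)}{\|x_{\max}-x\|}\;\ge\;\frac{\bar Q(x_{\max})-\bar Q(z)}{\|x_{\max}-z\|}\;\ge\;\frac{\bar Q(x_{\max})-\min_{\mathcal R}\bar Q}{2\sqrt n}\;\ge\;\frac{\bar Q(x_{\max})}{2\sqrt n},
\]
using $\min_{\mathcal R}\bar Q\le 0$ (part~(a) again); combined with $\bar Q(x_{\max})-\bar Q(x)\le\epsilon$ this yields $\|x_{\max}-x\|\le 2\epsilon\sqrt n\,\bar Q(x_{\max})^{-1}$, and by~(c) $\bar Q(x_{\max})\asymp n\lambda_n$, so $\epsilon=o(n\lambda_n)$ makes the bound $o(\sqrt n)$.

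The main obstacle is part~(b): once the identity $\bar Q_{EX}=n\bar O_{11}/n_1-\bar O_1$ is in hand, convexity is the crux, and the two points requiring care are (i) that $\bar O_{11}$ really is a PSD quadratic on the projection plane—which forces the sign condition $\rho_1,\rho_2\ge 0$, hence $r<1$—and (ii) the continuous extension of $n\bar O_{11}/n_1$ across the degenerate corner $n_1=0$. Everything else—(a), (c), and the chord-slope argument in~(d)—is routine and parallels the treatment of Lemma~\ref{Lem:DCBMLogLikProp}.
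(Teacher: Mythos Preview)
Your argument for part~(b) is correct and in fact cleaner than the paper's: you rewrite $\bar Q_{EX}=n\bar O_{11}/n_1-\bar O_1$, note that $\bar O_{11}$ is a positive semidefinite quadratic on the projection plane (this is exactly where $\rho_1,\rho_2>0$ enters, and it holds when $\omega=r<1$), write it as a sum of squares of affine functions, and conclude that $n\bar O_{11}/n_1$ is a sum of quadratic-over-affine terms, each jointly convex. The paper instead passes to explicit $(\alpha,\beta)$ coordinates, substitutes $z=s\beta$, and verifies convexity by computing the $2\times 2$ Hessian of $h(\alpha,z)=(z^2-r(1+\alpha)z)/(z+1+\alpha)$ by hand. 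Your route avoids that computation entirely.

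Part~(d), however, has a genuine gap. The second inequality in your chain,
\[
\frac{\bar Q(x_{\max})-\bar Q(z)}{\|x_{\max}-z\|}\;\ge\;\frac{\bar Q(x_{\max})-\min_{\mathcal R}\bar Q}{2\sqrt n},
\]
is not justified: replacing $\bar Q(z)$ by $\min_{\mathcal R}\bar Q$ \emph{increases} the numerator while replacing $\|x_{\max}-z\|$ by $2\sqrt n$ \emph{increases} the denominator, so the two substitutions pull in opposite directions. A one-dimensional model already shows the failure: for $f(t)=t^2$ on $[-1,1]$ with $x_{\max}=1$ and $z=-1$, the left side is $0$ but the right side is $1/2$. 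In the present problem the danger is real: the ray from $x_{\max}=x_c$ through $x$ may exit $\mathcal R$ near the opposite vertex $x_{-c}$, where $\bar Q$ can be of the same order $n\lambda_n$ as $\bar Q(x_{\max})$, so the slope lower bound collapses. The hypothesis $\epsilon=o(n\lambda_n)$ is not a cosmetic afterthought but precisely what rules this out: the paper uses it to force $x$ onto the same side of the diagonal $\{\beta=0\}$ as $x_{\max}$ (on the other half-parallelogram one has $\bar Q\le\max\{\bar Q(x_{-c}),0\}$, which sits a distance $\asymp n\lambda_n$ below $\bar Q(x_{\max})$). Once that is secured, extend the segment from $x_{\max}$ through $x$ until it meets $\{\beta=0\}$ at a point $\hat x$ inside $\mathcal R$; then $\bar Q(\hat x)=0$ by part~(a), and the same convexity slope inequality on $[x_{\max},\hat x]$ gives
\[
\|x_{\max}-x\|\;\le\;\frac{\bar Q(x_{\max})-\bar Q(x)}{\bar Q(x_{\max})}\,\|x_{\max}-\hat x\|\;\le\;\frac{\epsilon}{\bar Q(x_{\max})}\cdot 2\sqrt n.
\]
Your boundary exit point $z$ should be replaced by this $\hat x$.
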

\begin{proof}[Proof of Lemma~\ref{Lem:CECritProp}]
Note that multiplying $\bar{O}_{11}$, $\bar{O}_{12}$ by a positive constant, or multiplying $n_1$ and $n_2$ by a constant does not change the behavior of $\bar{Q}$. Therefore by Lemma~\ref{Lem:Formn1n2} we may assume that
\begin{eqnarray*}
  \bar{O}_{11} &=& (1+\alpha)^2 +b\beta^2, \ \  \bar{O}_{12} = (1-\alpha^2) -b\beta^2,\\
  n_1 &=& 1+\alpha +s\beta, \ \ n_2 = 1-\alpha -s\beta.
\end{eqnarray*}

($a$) It is straightforward that $\bar{Q}(\alpha,0)=0$.

($b$) Let $z = s\beta$, $r=s^2/b>0$, and $h(\alpha,z)=\frac{z^2-r(1+\alpha)z}{z+1+\alpha}$, then $\bar{Q}=\frac{2}{r}h(\alpha,z)$. Simple calculation shows that the Hessian of $h$ has the form
$$\nabla h=\frac{2(r+1)}{(z+1+\alpha)^3}
\left(
  \begin{array}{cc}
    (1+\alpha)^2 & -z(1+\alpha) \\
    -z(1+\alpha) & z^2 \\
  \end{array}
\right)
,$$
which implies that $h$ and $\bar{Q}$ are convex.

($c$) Since $\mathcal{R}=U_{\mathbb{E}[A]}[-1,1]^n$ is a parallelogram by Lemma~\ref{Lem:BMCubeProj} and $\bar{Q}$ is convex by part ($b$), it reaches maximum at one of the vertices of $\mathcal{R}$. The claim then follows from a simple calculation.

($d$) Note that $|\bar{Q}(x_c)-\bar{Q}(x_{-c})|=|\frac{\pi_2}{\pi_1}n\lambda_n(\pi_1^2-r\pi_2^2)|$ is of order $n\lambda_n$, therefore if $\bar{Q}(x_{\max})-\bar{Q}(x)\leq \epsilon=o(n\lambda_n)$ then $x_{\max}$ and $x$ belong to the same part of $\mathcal{R}$ divided by the line $\beta=0$. In other words, if $\hat{x}$ is the intersection of the line going through $x$ and $x_{\max}$ and the line $\beta=0$, then $x$ belongs to the segment connecting $x_{\max}$ and $\hat{x}$. By convexity of $\bar{Q}$ and the fact that $\bar{Q}(\hat{x})=0$ from part ($a$) and part ($b$), we get
$$\frac{\|x_{\max}-x\|}{\|x_{\max}-\hat{x}\|}
\leq\frac{\bar{Q}(x_{\max})-\bar{Q}(x)}{\bar{Q}(x_{\max})-\bar{Q}(\hat{x})}\leq\frac{\epsilon}{\bar{Q}(x_{\max})}.$$
It remains to bound $\|x_{\max}-\hat{x}\|$ by $2\sqrt{n}$.
\end{proof}

Note that $\bar{Q}_{EX}$ does not have the exact form of \eqref{Eq:GenFuncType}. The following Lemma shows that the argument used in the proof of Lemma~\ref{Lem:MaxFuncEst} holds for $\bar{Q}_{EX}$.

\begin{lemma}\label{Lem:CEAnalLem2}
Let $\bar{Q}=\bar{Q}_{EX}$ and assume that the assumption of Theorem \ref{Thm:ComExtrCons} holds.
Let $U_A$ be an approximation of $U_{\mathbb{E}[A]}$.
Then there exists a constant $M=M(r,\pi,\delta)>0$ such that with probability at least $1-n^{-\delta}$, we have
\begin{equation}\label{Ineq:CEQcQestar}
  \bar{Q}(c)-\bar{Q}(e^*)\leq M n \lambda_n\left( \lambda_n^{-1/2} + \|U_A-U_{\mathbb{E}[A]}\|\right).
\end{equation}
In particular, if $U_A$ is a matrix whose row vectors are eigenvectors of $A$, then
\begin{equation*}
  \bar{Q}(c)-\bar{Q}(e^*)\leq M n \sqrt{\lambda_n}.
\end{equation*}
\end{lemma}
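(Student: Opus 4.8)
The plan is to follow the template of the proof of Lemma~\ref{Lem:BMAnalLem2}, but the ratio $n_2/n_1$ appearing in $Q_{EX}(e)=\frac{n_2}{n_1}O_{11}(e)-O_{12}(e)$ is not harmlessly bounded (unlike the coefficient $\log n_i$ in the $Q_{BM}$ case), so an extra preliminary step is needed: confining both the approximate maximizer $\hat e:=\arg\max\{\bar Q_{EX}(e):e\in\mathcal E_A\}$ and $e^*$ to a ``balanced'' region $\mathcal B=\{e\in\{-1,1\}^n:\ \delta_0 n\le n_1(e)\le(1-\delta_0)n\}$ for a small constant $\delta_0=\delta_0(\pi,r)>0$. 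By Lemma~\ref{Lem:CECritProp}(c) we may assume, relabelling communities if necessary, that $x_c=U_{\mathbb{E}[A]}(c)$ is the maximizer of $\bar Q_{EX}$ over $\mathcal R=U_{\mathbb{E}[A]}[-1,1]^n$ (otherwise the left-hand side of \eqref{Ineq:CEQcQestar} is $\le 0$ and there is nothing to prove). For $\bar Q_{EX}$ the confinement is immediate: by convexity of $\bar Q_{EX}$ on the parallelogram $\mathcal R$ (Lemma~\ref{Lem:CECritProp}(b), Lemma~\ref{Lem:BMCubeProj}), together with $\bar Q_{EX}\equiv 0$ on the line $\beta=0$ (Lemma~\ref{Lem:CECritProp}(a)), hence at the vertices $\pm U_{\mathbb{E}[A]}(\mathbf 1)$, one gets $\bar Q_{EX}(x_e)\le (\delta_0/\min(\pi_1,\pi_2))\,\bar Q_{EX}(x_c)=O(\delta_0 n\lambda_n)$ whenever $n_1(e)\notin[\delta_0 n,(1-\delta_0)n]$, while (as in the argument below) there is $s\in\mathcal E_A$ with $\bar Q_{EX}(x_s)$ of order $n\lambda_n$; choosing $\delta_0$ small separates these, so $\hat e\in\mathcal B$.

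The step I expect to be the main obstacle is the same confinement for $e^*$, which is the maximizer of the \emph{empirical} $Q_{EX}$: here one must rule out small dense subsets, i.e.\ prove that with probability at least $1-n^{-\delta}$,
\[
\sup_{|V|=k}\bigl|O_{11}(V)-\mathbb{E}\,O_{11}(V)\bigr|\ \le\ C\bigl(k^2\lambda_n/n+k\log(n/k)\bigr)\qquad\text{for all }k\le n,
\]
via a Bernstein bound for $O_{11}(V)=\sum_{i<j\in V}A_{ij}$ followed by a union bound over $\binom{n}{k}$ choices of $V$; this is exactly where the hypothesis ``$\lambda_n$ grows at least as $\log n$'' (with a sufficiently large constant) enters, as in the no--small--dense--subgraph estimates for sparse random graphs. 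Given this, $\frac{n_2}{n_1}O_{11}(V)\le \frac{n_2}{n_1}\mathbb{E}\,O_{11}(V)+o(n\lambda_n)\le C\delta_0 n\lambda_n+o(n\lambda_n)$ whenever $n_1\le \delta_0 n$, whereas $Q_{EX}$ evaluated at an extreme point $s$ near $x_c$ is of order $n\lambda_n$ by concentration (Lemma~\ref{Lem:OlivRest}); hence $e^*\in\mathcal B$.

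Once $\hat e,e^*\in\mathcal B$ the factor $n_2/n_1$ is bounded by a constant there, and the argument of Lemma~\ref{Lem:MaxFuncEst}/Lemma~\ref{Lem:BMAnalLem2} goes through. Since $|O_{11}(e)-\bar O_{11}(e)|$ and $|O_{12}(e)-\bar O_{12}(e)|$ are each at most $n\|A-\mathbb{E}[A]\|$, the pointwise comparison reads $|Q_{EX}(e)-\bar Q_{EX}(e)|\le (1+\tfrac{n_2}{n_1})\,n\,\|A-\mathbb{E}[A]\|\le M n\|A-\mathbb{E}[A]\|$ for $e\in\mathcal B$; combining this at $e=\hat e$ and $e=e^*$ with $Q_{EX}(e^*)\ge Q_{EX}(\hat e)$ and \eqref{Ineq:AdjConc} gives $\bar Q_{EX}(\hat e)-\bar Q_{EX}(e^*)\le M n\|A-\mathbb{E}[A]\|\le M n\sqrt{\lambda_n}$. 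Next, $x_c$ is a vertex of $\mathcal R$ and $U_A c\in\mathrm{conv}(U_A\mathcal E_A)$, so exactly as in Lemma~\ref{Lem:MaxFuncEst} one has $\mathrm{dist}\bigl(x_c,\mathrm{conv}(U_{\mathbb{E}[A]}\mathcal E_A)\bigr)\le 2\sqrt n\,\|U_A-U_{\mathbb{E}[A]}\|$; since the angle of $\mathcal R$ at $x_c$ is bounded away from $0$ and $\pi$ (Lemma~\ref{Lem:BMCubeProj}), there is $s\in\mathcal E_A$ with $\|x_c-x_s\|\le M\sqrt n\,\|U_A-U_{\mathbb{E}[A]}\|$ and $x_s\in\mathcal B$. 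On $\mathcal B$ the functions $\bar O_{11},\bar O_{12}$ have Lipschitz constant $O(\sqrt n\,\|\mathbb{E}[A]\|)=O(\sqrt n\,\lambda_n)$ by Lemma~\ref{Lem:LipConst}, while $n_1,n_2$ are affine with $O(\sqrt n)$ Lipschitz constant and of order $n$ (Lemma~\ref{Lem:Formn1n2}), so $n_2/n_1$ has Lipschitz constant $O(n^{-1/2})$; hence $\frac{n_2}{n_1}\bar O_{11}$, and therefore $\bar Q_{EX}$, is Lipschitz on $\mathcal B$ with constant $O(\sqrt n\,\lambda_n)$, giving $\bar Q_{EX}(x_c)-\bar Q_{EX}(x_s)\le M n\lambda_n\|U_A-U_{\mathbb{E}[A]}\|$. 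Since $\bar Q_{EX}(x_s)\le \bar Q_{EX}(x_{\hat e})$ by definition of $\hat e$, this yields $\bar Q_{EX}(c)-\bar Q_{EX}(\hat e)\le M n\lambda_n\|U_A-U_{\mathbb{E}[A]}\|$.

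Adding the two bounds gives
\[
\bar Q_{EX}(c)-\bar Q_{EX}(e^*)\ \le\ M n\sqrt{\lambda_n}+M n\lambda_n\|U_A-U_{\mathbb{E}[A]}\|\ =\ M n\lambda_n\bigl(\lambda_n^{-1/2}+\|U_A-U_{\mathbb{E}[A]}\|\bigr),
\]
which is \eqref{Ineq:CEQcQestar}; note that if $\|U_A-U_{\mathbb{E}[A]}\|$ is not already small the inequality is vacuous, so there is no loss in having assumed it small when confining $\hat e$ and $e^*$ to $\mathcal B$. Finally, when $U_A$ is formed from the leading eigenvectors of $A$, substituting \eqref{Ineq:SubspConc} of Lemma~\ref{Lem:OlivRest} gives $\bar Q_{EX}(c)-\bar Q_{EX}(e^*)\le M n\sqrt{\lambda_n}$. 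To summarize, the only genuinely new ingredient beyond the $Q_{BM}$ case is the uniform control of $O_{11}$ over all vertex sets, which keeps the optimizers away from the unbalanced extreme points where the coefficient $n_2/n_1$ destroys the pointwise comparison between $Q_{EX}$ and $\bar Q_{EX}$.
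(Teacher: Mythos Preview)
Your proposal would work, but it takes a substantially harder route than the paper because you missed a cancellation that removes the need for any confinement argument.

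You write that the ratio $n_2/n_1$ ``is not harmlessly bounded'', and this drives the entire extra machinery (the balanced region $\mathcal B$, the Bernstein-plus-union bound over all vertex subsets). In fact it \emph{is} harmlessly bounded here, once you use the identity $\|\mathbf{1}+e\|^2=2(\mathbf{1}+e)^T\mathbf{1}=4n_1$. The paper's key observation is
\[
\left|\frac{n_2}{n_1}O_{11}(e)-\frac{n_2}{n_1}\bar O_{11}(e)\right|
=\frac{n_2}{4n_1}\bigl|(\mathbf{1}+e)^T(A-\mathbb{E}[A])(\mathbf{1}+e)\bigr|
\le \frac{n_2}{4n_1}\,\|\mathbf{1}+e\|^2\,\|A-\mathbb{E}[A]\|
= n_2\,\|A-\mathbb{E}[A]\|\le n\,\|A-\mathbb{E}[A]\|,
\]
so the $n_1$ in the denominator is exactly cancelled by the $n_1$ hidden in $\|\mathbf{1}+e\|^2$. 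Combined with $|O_{12}-\bar O_{12}|\le n\|A-\mathbb{E}[A]\|$ and \eqref{Ineq:AdjConc}, this gives $|Q_{EX}(e)-\bar Q_{EX}(e)|\le Mn\sqrt{\lambda_n}$ uniformly over \emph{all} $e\in\{-1,1\}^n$, with no restriction to a balanced set and no combinatorial union bound. The inequality $\bar Q(\hat e)-\bar Q(e^*)\le Mn\sqrt{\lambda_n}$ then follows immediately.

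For the second term $\bar Q(x_c)-\bar Q(x_{\hat e})$, the paper also proceeds more directly than you do: since $\bar Q_{EX}$ is genuinely convex on all of $\mathcal R$ (Lemma~\ref{Lem:CECritProp}(b)), one can work with the nearest point $y\in\mathrm{conv}(U_{\mathbb{E}[A]}\mathcal E_A)$ to $x_c$ and use convexity to pass from $y$ to $x_{\hat e}$, rather than first locating an extreme point $s$ near the vertex. The Lipschitz bound on $\bar Q_{EX}$ near $x_c$ is obtained by the same splitting you use (control $\bar O_{1i}$ via Lemma~\ref{Lem:LipConst} and $n_2/n_1$ via Lemma~\ref{Lem:Formn1n2}), but only locally near $x_c$ where $n_i(y)\sim\pi_i n$, so again no global confinement is needed.

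In short: your argument is not wrong, but the ``genuinely new ingredient'' you identify --- uniform control of $O_{11}$ over all vertex sets to keep optimizers balanced --- is unnecessary. The algebra of $Q_{EX}$ already neutralizes the unbalanced case.
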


\begin{proof}[Proof of Lemma~\ref{Lem:CEAnalLem2}]
Note that $\|\mathbf{1}+e\|^2=2(\mathbf{1}+e)^T\mathbf{1}=4n_1$.
Using inequality \eqref{Ineq:AdjConc} of Lemma~\ref{Lem:OlivRest}, we have
\begin{eqnarray*}
  \left|\frac{n_2}{n_1}O_{11}-\frac{n_2}{n_1}\bar{O}_{11}\right| &=&
  \left|\frac{n_2}{n_1}(\mathbf{1}+e)^T(A-\mathbb{E}[A])(\mathbf{1}+e)\right|\\
  &\leq& \frac{n_2}{n_1} \|\mathbf{1}+e\|^2\|A-\mathbb{E}[A]\|\\
  &\leq& Mn_2\sqrt{\lambda_n}\leq Mn\sqrt{\lambda_n},\\
  |O_{12}-\bar{O}_{12}|&\leq& Mn\sqrt{\lambda_n}.
\end{eqnarray*}
Therefore
\begin{equation*}\label{Ineq:CEfuncErr}
  |Q(e)-\bar{Q}(e)|\leq Mn\sqrt{\lambda_n}.
\end{equation*}
Let $\hat{e} = \arg\max\{\bar{Q}(e),e\in\mathcal{E}_A\}$.
Then $Q(e^*)\geq Q(\hat{e})$ and hence
\begin{eqnarray}\label{Ineq:CEQehatestar}
  \bar{Q}(\hat{e})-\bar{Q}(e^*) &\leq& \bar{Q}(\hat{e}) - Q(\hat{e}) + Q(e^*)- \bar{Q}(e^*) \\
  \nonumber&\leq& M n\sqrt{\lambda_n}.
\end{eqnarray}
Let $y\in \mbox{conv}(U_{\mathbb{E}[A]}\mathcal{E}_A)$ such that
$\|U_{\mathbb{E}[A]} (c)-y\|=\mbox{dist}\big(U_{\mathbb{E}[A]} (c),\mbox{conv}(U_{\mathbb{E}[A]}\mathcal{E}_A) \big)$.
By the same argument as in the proof of Lemma \ref{Lem:MaxFuncEst}, we have
\begin{equation}\label{distance of projections EXTR}
  \|U_{\mathbb{E}[A]}(c)-y\|\leq 2\sqrt{n} \ \|U_A-U_{\mathbb{E}[A]}\|.
\end{equation}
From Lemma~\ref{Lem:LipConst}, the Lipchitz constant of $\bar{O}_i$ is of order $\sqrt{n}\|\mathbb{E}[A]\|\leq \sqrt{n}\lambda_n$.
Using \eqref{distance of projections EXTR}, we get
\begin{eqnarray}\label{Ineq:CEODiff}
  \big|\bar{O}_{1i}(y)-\bar{O}_{1i}(U_{\mathbb{E}[A]} (c))\big|
  &\leq& M n \lambda_n  \|U_A-U_{\mathbb{E}[A]}\|.
\end{eqnarray}
Denote $x_e = U_{\mathbb{E}[A]}(e)$ for $e\in[-1,1]^n$.
By Lemma~\ref{Lem:Formn1n2}, there exist $M^\prime >0$ not depending on $n$
and a vector $\vartheta$ such that $\|\vartheta\|\leq M^\prime$ and for $i=1,2,$
\begin{eqnarray}\label{difference ni EXTR}
  |n_i(x_c)-n_i(y)|
  &=&     |\vartheta^T(x_c-y)|/2\leq M^\prime\|x_c-y\| \\
  \nonumber&\leq&  M^\prime\sqrt{n}\|U_A-U_{\mathbb{E}[A]}\|, \quad \mathrm{by \ \eqref{distance of projections EXTR}}.
\end{eqnarray}
Note that $n_i(x_c)=\bar{n}_i = \pi_i n$ and $|n_i(x_c)-n_i(y)| = o(n)$ by \eqref{difference ni EXTR}.
Therefore from \eqref{difference ni EXTR} we obtain
\begin{eqnarray*}
  \left|\frac{\bar{n}_2}{\bar{n}_1}-\frac{n_2(y)}{n_1(y)}\right| \leq Mn^{-1/2}\|U_A-U_{\mathbb{E}[A]}\|.
\end{eqnarray*}
Together with \eqref{Ineq:CEODiff} and the fact that $\bar{O}_{11}(y)\leq n\lambda_n$, we get
\begin{eqnarray*}
  |\bar{Q}(x_c)-\bar{Q}(y)| &\leq& \frac{\bar{n}_2}{\bar{n}_1}|\bar{O}_{11}(x_c)-\bar{O}_{11}(y)|+
  \left|\frac{\bar{n}_2}{\bar{n}_1}-\frac{n_2(y)}{n_1(y)}\right| \bar{O}_{11}(y)\\
  &+& |\bar{O}_{12}(y)-\bar{O}_{12}(x_c)| \\
  &\leq&  Mn\lambda_n \|U_A-U_{\mathbb{E}[A]}\|.
\end{eqnarray*}
The convexity of $\bar{Q}$ by Lemma~\ref{Lem:CECritProp} then imply
\begin{equation}\label{Ineq:CEQcQehat}
  \bar{Q}(x_c)-\bar{Q}(x_{\hat{e}})\leq Mn\lambda_n \|U_A-U_{\mathbb{E}[A]}\|.
\end{equation}
Finally, adding \eqref{Ineq:CEQehatestar} and \eqref{Ineq:CEQcQehat} we get \eqref{Ineq:CEQcQestar}.
If $U_A$ is formed by eigenvectors of $A$, then it remains to use inequality \eqref{Ineq:SubspConc}
of Lemma~\ref{Lem:OlivRest}. The proof is complete.
\end{proof}

\begin{proof}[Proof of Theorem~\ref{Thm:ComExtrCons}]
The proof is similar to that of Theorem~\ref{Thm:DCBMCons}, with the help of Lemma~\ref{Lem:BMEigDecomp}, Lemma~\ref{Lem:CECritProp}, and Lemma~\ref{Lem:CEAnalLem2}.
\end{proof}

\end{document}